\newcommand{\Ac}{\mathcal{A}}
\newcommand{\Dc}{\mathcal{D}}
\newcommand{\Gc}{\mathcal{G}}
\newcommand{\Lc}{\mathcal{L}}
\newcommand{\Mc}{\mathcal{M}}
\newcommand{\Nc}{\mathcal{N}}
\newcommand{\Sc}{\mathcal{S}}
\newcommand{\Vc}{\mathcal{V}}
\newcommand{\E}{\mathbb{E}}
\newcommand{\R}{\mathbb{R}}
\newcommand{\Prb}{\mathbb{P}}
\newcommand{\T}{^\top}
\newcommand{\attn}{\text{Attn}}
\newcommand{\sft}{\bar{\sigma}}
\newcommand{\Wlin}{W_{\text{lin}}}
\newcommand{\Wkq}{W_{\text{KQ}}}
\newcommand{\Wov}{W_{\text{OV}}}
\newcommand{\phl}{$\text{PH3}_l$\xspace}
\newcommand{\phs}{$\text{PH3}_s$\xspace}
\newcommand{\superpo}{superposition of contextual information and parametric memory\xspace}
\theoremstyle{plain}
\newtheorem{theorem}{Theorem}[section]
\newtheorem{proposition}[theorem]{Proposition}
\newtheorem{lemma}[theorem]{Lemma}
\newtheorem{corollary}[theorem]{Corollary}
\theoremstyle{definition}
\newtheorem{assumption}[theorem]{Assumption}
\theoremstyle{remark}
\newcommand{\lgtnote}[1]{#1}
\definecolor{lightgreen}{RGB}{190, 230, 190}
\definecolor{lightred}{RGB}{255, 200, 200}
\newcommand{\jrt}{\textsc{JuICE}\xspace}
\newcommand{\jro}{\textsc{JuNe}\xspace}
\newcommand{\sub}{\{subject\}\xspace}
\newcommand{\act}{\{context answer\}\xspace}
\begin{document}

\twocolumn[

\icmltitle{Taming Knowledge Conflicts in Language Models}




\icmlsetsymbol{equal}{*}

\begin{icmlauthorlist}
\icmlauthor{Gaotang Li}{uiuc}
\icmlauthor{Yuzhong Chen}{visa}
\icmlauthor{Hanghang Tong}{uiuc}
\end{icmlauthorlist}

\icmlaffiliation{visa}{Visa Research}
\icmlaffiliation{uiuc}{University of Illinois Urbana-Champaign}

\icmlcorrespondingauthor{Gaotang Li}{gaotang3@illinois.edu}
\icmlcorrespondingauthor{Hanghang Tong}{htong@illinois.edu}

\icmlkeywords{Machine Learning}

\vskip 0.3in
]



\printAffiliationsAndNotice{}  

\begin{abstract}

Language Models (LMs) often encounter knowledge conflicts when parametric memory contradicts contextual knowledge. 
Previous works attribute this conflict to the interplay between ``memory heads'' and ``context heads'', attention heads assumed to promote either memory or context exclusively.
In this study, we go beyond this fundamental assumption by uncovering a critical phenomenon we term the \emph{\superpo}, where highly influential attention heads simultaneously contribute to both memory and context.
Building upon this insight, we propose Just Run Twice (\jrt), a test-time attention intervention method that steers 
LMs toward either parametric beliefs or contextual knowledge without requiring fine-tuning.
\jrt identifies a set of reliable attention heads and leverages a dual-run approach to mitigate the superposition effects. 
Extensive experiments across 11 datasets and 6 model architectures demonstrate that \jrt sets the new state-of-the-art performance and robust generalization, achieving significant and consistent improvement 
across different domains under various conflict types.
Finally, we theoretically analyze knowledge conflict and the \superpo in attention heads, which further elucidates the effectiveness of \jrt in these settings. Our code is available at \url{https://github.com/GaotangLi/JUICE}.

\end{abstract}
\addtocontents{toc}{\protect\setcounter{tocdepth}{-1}}

\section{Introduction}
\label{sec:intro}
Language Models (LMs) store vast amounts of information during pretraining as 
parametric knowlege. During inference, they leverage this parametric memory alongside 
the provided context to generate the next token. However, conflicts can arise 
when parametric memory contradicts contextual information---a phenomenon known as 
knowledge conflict~\citep{xu2024knowledge}. In such cases, the model may become uncertain about which source of knowledge to trust.
These conflicts are particularly prevalent in real-world applications, especially in context-heavy Large Language Models (LLMs)
systems like retrieval-augmented generation (RAG)~\citep{gao2023retrieval}, LLM agents~\citep{xi2023rise}, and 
tool-augmented LLMs~\citep{qu2024tool}. 
Depending on the application, user may require an LLM to either remain faithful to its parametric memory or prioritize contextual reliance for accurate and reliable outputs.

\begin{figure}[t]
    \centering
    \includegraphics[width=0.95\linewidth]{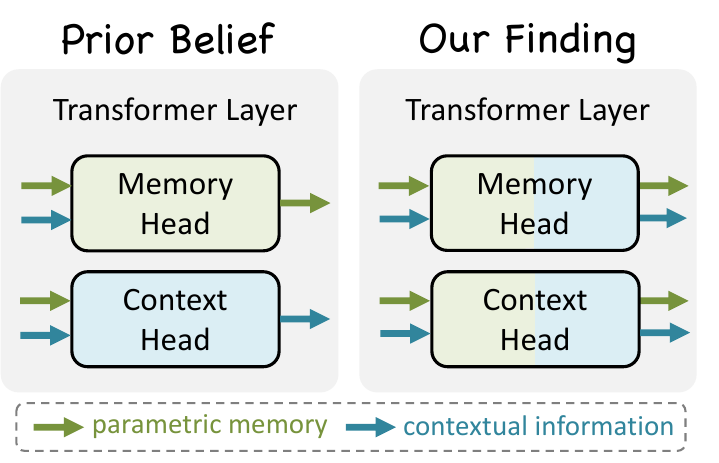}
    \vspace{-3mm}
  \caption{Our finding goes beyond the prior notion of exclusive ``memory head'' and ``context head'', where we show that memory and contexts are encoded in attention heads in superposition. 
  }
  \label{fig:teaser}
  \vspace{-4mm}
\end{figure}

Prior works have explored the behavior of LMs under knowledge conflicts, 
either by treating the model as an oracle to analyze how different contexts influence its predictions~\citep{xie2023adaptive} 
or by treating the context as an oracle to evaluate how effectively the model follows it~\citep{longpre2021entity}. 
While these studies provide valuable insights into knowledge conflicts, the intrinsic mechanisms underlying these conflicts and corresponding 
mitigation strategies largely remain unexplored.
Some studies have taken important steps to characterize~\citep{yu2023characterizing} and intervene~\citep{jin2024cutting} in knowledge conflicts, 
primarily focusing on a single conflict type (\emph{e.g.}, substitution-based conflicts). 
While pioneering, these efforts leave opportunities for more comprehensive understanding of diverse conflict types and the development of fine-grained approaches to address knowledge conflicts.
In addition, much of the existing literature predominantly adopts a single-sided perspective on knowledge conflict, focusing on enhancing contextual 
reliance and addressing issues commonly referred to as ``RAG hallucination''~\citep{goyal2024context,huang2023survey,shi2023trusting}.
In contrast, we advocate for a unified method capable of flexibly steering the model toward either parametric or contextual knowledge, offering broader utility.

\begin{figure}[t]
    \centering
    \includegraphics[width=\linewidth]{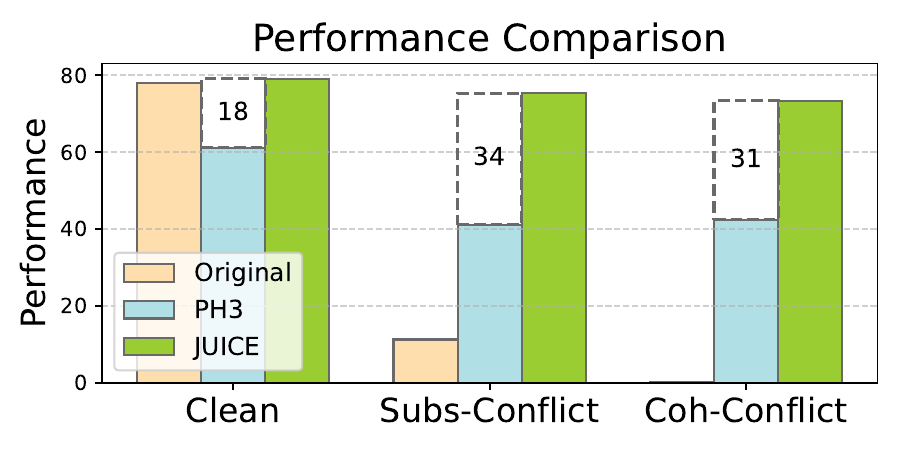}
    \vspace{-6mm}
  \caption{Performance of different methods with Gemma-2b under various conflict types. \jrt achieves consistently high performance in facing challenging knowledge conflicts. 
  }
  \label{fig:intro}
  \vspace{-4mm}
\end{figure}

In this paper, we begin by treating LMs as an oracle and considering the setting of factual recall, 
a task requiring pure memorization. 
We then treat contexts as providing misleading information~\citep{shi2023large} and systematically explore various types of knowledge conflicts over diverse domains, including sentence-level (substitution), and paragraph-level (coherent) conflicts~(Sec.~\ref{sec:experiment_setup}), to uncover their underlying mechanisms and design effective intervention strategies. 
Starting with empirical analysis,
our findings go beyond the hypothesis posited in~\citep{jin2024cutting} that model components exclusively contribute to either parametric or contextual knowledge, uncovering the phenomenon of ``\textbf{\superpo}'' (CP superposition), as shown in Fig.~\ref{fig:teaser}. \lgtnote{We revealed the inconsistent behaviors of model components under different degrees of knowledge conflicts and the counteracting effects of multiple individually effective interventions.}

Building on these insights, we propose Just Run Twice (\jrt), a simple yet effective method for steering LMs towards either parametric or contextual knowledge without finetuning. 
\jrt operates in two stages: (1) a \emph{head identification} stage, where two sets of attention heads that yield consistent improvements with positive or negative scaling are identified using a minimal number of samples, 
and (2) an \emph{dual-run inference} stage, where the model runs twice: first saving the outputs of the identified heads, and then using scaled versions of these saved outputs to intervene during the second run. 
Intuitively, this approach ensures that the identified components are consistently effective, mitigating the superposition effects, and therefore provide 
more accurate steering directions through residual head activations.

We evaluate \jrt in two distinct settings: enhancing parametric beliefs and enhancing contextual reliance. For the first setting, we use six factual association datasets covering diverse domains, each tested under three levels of knowledge conflict. In the second setting, we evaluate five datasets spanning diverse fields and formats, including open-domain question answering and sentence completion.
 Extensive experimental results demonstrate the consistent state-of-the-art performance 
of \jrt. 
Fig.~\ref{fig:intro} illustrates the strong performance of \jrt under the Gemma-2b model, with detailed results provided in Tab.~\ref{tab:main_intervention_unfiltered}. 
We also show the robustness of \jrt against key hyperparameters and paraphrased input. 

Finally, we analyze our empirical observations from a theoretical perspective, conceptualizing knowledge conflict as the result of conflicting tasks at inference, which arise from distinct tasks during training.
In a succinct
setup, we demonstrate the existence of attention heads that simultaneously contribute to both parametric and contextual knowledge and show how standard training encourages the formation of such heads.
We further provide theoretic justifications for the effectiveness of \jrt under these settings.

Our main contributions can be summarize as follows:
\begin{itemize}
    \setlength\itemsep{0em}
    \item {\bf Problem.} We conduct a systematic and principled study of knowledge conflicts in LMs, considering both parametric and contextual perspectives and covering various types of datasets over diverse domains.

    \item {\bf Mechanism.} We reveal the limitations of naive intervention methods by uncovering a critical phenomenon we term the ``\superpo'', where the relative role of a model component in parametric versus contextual knowledge is not exclusive.
    \item {\bf Algorithm.} We propose \jrt, a simple yet effective method to steer an LM toward parametric or contextual knowledge 
    without finetuning, leveraging a dual-run approach to mitigate the superposition effects. 
    \item {\bf Experiment.} Through extensive experiments across 11 datasets and 6 architectures, we set the new state-of-the-art performance 
    and robust generalization, achieving significant and consistent improvements.
    \item {\bf Theory.} We provide a theoretical analysis of knowledge conflicts, conceptualizing the \superpo. This analysis further justifies the effectiveness of \jrt under these conditions.
\end{itemize}

\section{Problem Setup}
\label{sec:experiment_setup}

In this paper, we study how language models respond to varying degrees 
of knowledge conflict and propose methods to regulate these behaviors. We 
identify two complementary perspectives on knowledge conflict: (1) 
when the input context is irrelevant or potentially misleading, 
we treat the LM as an \emph{oracle},
aiming to enhance its reliance on 
\emph{parametric beliefs}; (2) when the input context is accurate, but 
the LM's prior knowledge may be outdated or incorrect, we aim to increase 
the model's dependence on \emph{contextual knowledge}. 
Both perspectives hold intrinsic value and merit further investigation.

\subsection{Parametric Datasets}
\label{subsec:para_setup}

In this setup, we treat the input context as potentially misleading information and the language model as an \emph{oracle}. 
For our study, we carefully \emph{curate six datasets encompassing distinct types of knowledge conflicts} in factual recalls. Below, we detail the specific design choices differing from prior studies and the underlying rationales:

\textbf{Diverse Factual Domains:} We create six datasets spanning various domains of factual knowledge: World Capital, Athlete Sport, Book Author, Official Language, Company Headquarter, and Company Founder. This setting will allow us to investigate the transferrability across unrelated domains of intervention methods, a critical aspect that is missing in the prior work~\cite{jin2024cutting,yu2023characterizing}.

\textbf{Sentence-level Conflict (Substitution-based):} This is the exclusive approach adopted in prior works~\citep{yu2023characterizing,jin2024cutting}. 
    A typical input takes the form (\emph{e.g.,} ``The name of the capital city of \{\(s\)\} is \{\(a_c\)\}. The name of the capital city of \{\(s\)\} is''), where \(a_c\) represents the substituted contextual answer that conflicts with the parametric answer \(a_p\). 
    In our experiment, we aim to enhance the model's ability to output \(a_p\), despite the conflicting presence of \(a_c\).

\textbf{Paragraph-level Conflict (Coherent Counterfactual):}
Recent work~\citep{xie2023adaptive} demonstrates that language models rely more on context when it is coherent.
    In this scenario, the context extends beyond a single substitution, reinforced by coherent and persuasive evidence, often generated by advanced models like GPT-4.    
    This presents a highly challenging case, as models almost inevitably output the contextual answer \(a_c\) over the parametric answer \(a_p\).
    In our experiment, we focus on enhancing the model's ability to output \(a_p\), despite these difficult conditions.

There is also a trivial type of knowledge conflict: when no conflict is present, in which case we still expect the model to respond faithfully.
\textbf{Detailed examples} are provided in Appen.~\ref{appen:dataset_detail}.
Importantly, different from \cite{xie2023adaptive}, which focuses solely on altering the model's predictions regardless of their correctness, we explicitly ensure that conflicting contexts include factually incorrect answers. 
For evaluation, we primarily rely on the exact match (accuracy) metric with respect to the \textbf{factually correct} answer. 
Our curated dataset is available at  \url{https://huggingface.co/datasets/gaotang/ParaConfilct}.

\subsection{Contextual Datasets}
\label{subsec:context_setup}

In this setup, we treat the input context as the desired target and consider the prior knowledge of the language model as an unreliable source of information.
This approach enables a more unified and versatile evaluation of baseline methods.

Since this setup has been extensively studied, we adopt the dataset choice of a seminar work~\citep{shi2023trusting} by using two context-oriented knowledge conflict benchmarks: Memo-Trap~\citep{memotrap} and NQ-Swap~\citep{longpre2021entity}.
The details of these datasets can be found in Appen.~\ref{appen:exp_detail}.
We evaluate performance using exact match 
(accuracy) with respect to the contextual answer. 

\subsection{Models}
\label{subsec:setup_model}

We benchmark our studies using six existing open-sourced base 
language models: Gemma-2b~\citep{team2024gemma}, Llama2-7B~\citep{touvron2023llama}, 
Llama3-8B~\citep{dubey2024llama}, Phi2-2.7b~\citep{javaheripi2023phi}, StableLm2-1.6b~\citep{bellagente2024stable}, and Olmo-7b~\citep{groeneveld2024olmo}. We conduct our analysis in Sec.~\ref{sec:method} mainly using 
Gemma and evaluate the effectiveness of the intervention methods using 
all backbone models.

\section{Interpreting and Resolving Knowledge Conflicts}
\label{sec:method}
\begin{figure*}[!ht]
      \centering
      \includegraphics[width=\textwidth]{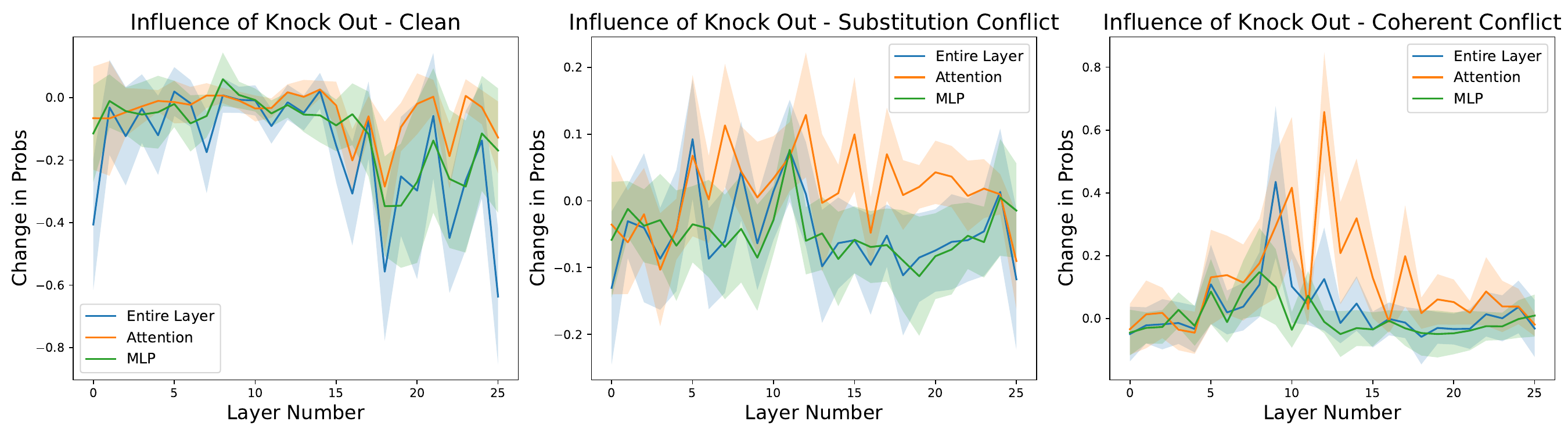}
    \caption{Influence of Knock Out (Zero Out) Model Components in changing 
    the probability of outputting the parametric answer tokens ($a_p$) on the 
    \texttt{World Capital} dataset. Three different 
    scenarios are considered: clean inputs, substitution conflict inputs, and 
    coherent conflict inputs. 
    We find that (1) removing (nearly) all components leads to decreases in 
    probability of outputting $a_p$ in clean prompts, (2) removing components leads to 
    both increase and decrease in outputting $a_p$ in substitution conflict 
    prompts, and (3) removing 
    (nearly) all components leads to increases in probability of outputting $a_p$ in coherent conflict prompts.
    }
    \label{fig:knock_out}
  \end{figure*}

In this section, we analyze how the internal structure of language models (LMs) influences their parametric versus contextual tendencies through causal analysis. We quantify these tendencies by measuring the expected change in the probability of the output token (parametric versus contextual) when perturbations are applied to specific model components. These perturbations are implemented by scaling the activation outputs. Formally, given a distribution over input triplets $(X, y_p, y_c)$, where $X \coloneqq \{x_i\}_{i=1}^n$ is the input prompt set,
encompassing various conflicting forms (\emph{e.g.}, clean input, substitution conflicts, and coherent conflicts), $y_p$ and $y_c$ represent the parametric and contextual answers, respectively, we measure:
\begin{equation}
\label{equa:intervention}
    \E_{(x, y)} \left[ \Prb \left( y\ | x, do(\Mc^{(i)} = \alpha \Mc^{(i)}) \right) 
- \Prb \left( y | x \right) \right].
\end{equation} 
Here, $\Mc^{(i)}$ refers to a specific model component with index $i$, and $y$ is set to either $y_p$ or $y_c$ upon our needs. 
While $(x, y)$ can be drawn from an arbitrary distribution, we use Gemma and World Capital as a concrete example in this section.

Previous works analyzing model internals typically adhere to two ``locate-and-edit" principles~\citep{xu2024knowledge}:

\vspace{-1mm}
\begin{itemize}

    \setlength\itemsep{0em}
    
    \item Identify a circuit (specific model components) that is exclusively responsible for a particular functionality.

    \item Apply targeted interventions to these circuits to achieve the desired control or behavior.

\end{itemize}
\vspace{-1mm}

In our motivating experiments, we demonstrate the need for additional criteria when performing interventions to address the complexities of model internals and knowledge conflicts.

\subsection{Analysis}

\paragraph{Observation 1: Inconsistent Behaviors of Model Components Under Different Degrees of Knowledge Conflict.} In our first set of experiments, we examine how model components exhibit significantly different functionalities when faced with varying degrees of knowledge conflict. We set \(\Mc^{(i)}\) to represent either the entire MLP, attention module, or both within layer $i$. 
For the intervention method, we fix it to be knocking out (\emph{i.e.,} zero-ablating). The goal is to promote parametric knowledge, setting $y = y_p$ in Eq.~\ref{equa:intervention}.  Fig.~\ref{fig:knock_out} illustrates these findings, revealing the following trends:
(1) removing (nearly) all components \emph{decreases} the probability of outputting parametric answers for clean prompts; (2) removing components leads to both increase and decrease in outputting parametric answers for substitution conflicts; and (3) removing (nearly) all components \emph{increases} the probability of outputting parametric answers for coherent conflict prompts.
Quantitatively, the number of components yielding consistent parametric gains across all three conflict types is 0 for the entire layer, 1 for the MLP module, and 6 for the Attention module (out of 26 layers in Gemma). These results suggest that \emph{the same model component may exhibit different influences on parametric and contextual knowledge depending on residual streams received from prior layers}.

Prior work~\citep{jin2024cutting} introduces the notion of ``memory heads" and ``context heads", positing that there are attention heads exclusively responsible for promoting parametric or contextual knowledge. Specifically, promoting contextual knowledge involves knocking out parametric heads, and vice versa. While this approach achieves success in single-typed conflicts, we find its limitations when extended to multiple kinds of conflicts. Tab.~\ref{tab:superposition_1} ranks the top-4 memory heads based on their effectiveness in substitution conflicts and evaluates their influence in coherent conflicts. Surprisingly, half of the top-performing ``memory heads'' in substitution conflicts become ``context heads'' in coherent conflicts. This shows that even the most influential model component could have completely opposite functionality.

\begin{figure*}[ht!]
      \centering
      \includegraphics[width=\textwidth]{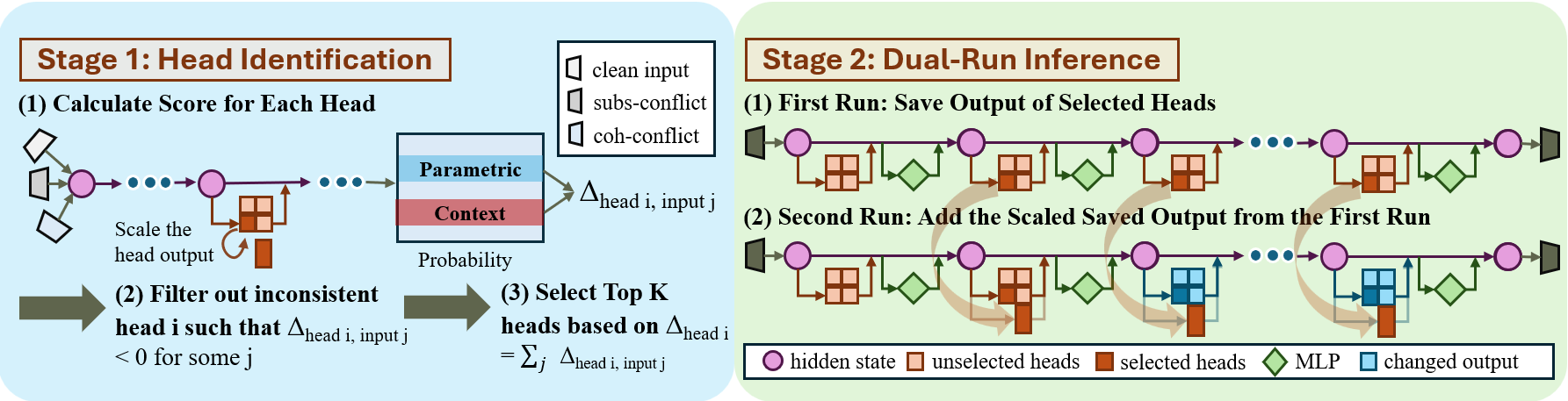}
    \caption{Overview of \jrt. In the first head identification stage (left), \jrt identifies a set of attention heads that could consistently achieve the desired effect. In the second inference stage (right), \jrt first saves the outputs of the identified heads, and then adds the scaled version of those outputs to the corresponding modules.
    }
    \label{fig:method}
    \vspace{-3mm}
  \end{figure*}

\begin{table}[h]
  \centering
  \caption{
  The top 4 heads ranked by the average prob increase of contextual knowledge in substitution-based conflicts via knocking out. We find that half of the top-influential memory heads in substitution conflict 
  lead to contrary effects in coherent conflict. Green denotes the desired behavior ($\uparrow$ context and $\downarrow$ parametric) and red denotes the undesired behavior ($\downarrow$ context and $\uparrow$ parametric).}
  \label{tab:superposition_1}
  \resizebox{0.80\linewidth}{!}{
    \begin{tabular}{@{}ccccc@{}}
\toprule
\multirow{2}{*}{\textbf{Head}} & \multicolumn{2}{c}{\textbf{Subs-Conflict}} & \multicolumn{2}{c}{\textbf{Coh-Conflict}} \\ \cmidrule(l){2-5} 
 & \textbf{$\triangle$Context Prob} & \textbf{$\triangle$Para Prob} & \textbf{$\triangle$Context Prob} & \textbf{$\triangle$Para Prob} \\ \midrule
(8, 0) & \cellcolor{lightgreen!85}+0.18 & \cellcolor{lightgreen!85}-0.03 & \cellcolor{lightgreen!85}+0.04 & \cellcolor{lightgreen!85}-0.03 \\
(15, 6) & \cellcolor{lightgreen!85}+0.16 & \cellcolor{lightgreen!85}-0.04 & \cellcolor{lightgreen!85}+0.08 & \cellcolor{lightgreen!85}-0.04 \\
(9, 3) & \cellcolor{lightgreen!85}+0.13 & \cellcolor{lightgreen!85}-0.08 & \cellcolor{lightred!85}-0.17 & \cellcolor{lightred!85}+0.09 \\
(13, 5) & \cellcolor{lightgreen!85}+0.11 & \cellcolor{lightgreen!85}-0.03 & \cellcolor{lightred!85}-0.13 & \cellcolor{lightred!85}+0.07 \\ \bottomrule
\end{tabular}
  }
\end{table}

\paragraph{Observation 2: Counteracting Effects of Multiple Interventions.} 
Expanding on prior observations, we evaluate the impact of multiple interventions on parametric knowledge. We first identify attention heads that consistently increase parametric logits when individually knocked out, ranking them by their average contribution. A natural approach is to apply these effective individual interventions simultaneously, as proposed by~\citet{jin2024cutting}. However, Tab.~\ref{tab:multiple_influence} reveals that combining individually helpful interventions does not always yield additive benefits and can even reduce performance. This behavior likely arises from the dependence of a model component’s functionality on input residual streams, as highlighted in Observation 1. Modified activations from earlier layers may alter downstream behavior, leading to counteracting effects.

\vspace{-3.75mm}
\begin{table}[h]
  \centering
  \caption{Target probability value using multiple interventions under coherent conflicts. Top-$i$ denotes combining $0$ to $i$-th ranked individual intervention performances. This shows that different modules can ``counteract'' each other, even though individual intervention contributes to substantial performance gains.}
  \label{tab:multiple_influence}

  \resizebox{0.75\linewidth}{!}{
    \begin{tabular}{@{}cc@{}}
      \toprule
      Number of Intervened Components    & Target Prob Value \\ \midrule
      None (Original Model)      & 0.03                \\
      Top 1    & 0.12                \\
      Top 3  & 0.24 \\ 
      Top 10 & 0.14                \\ \bottomrule
      \end{tabular}
  }
  \vspace{-0.7mm}
\end{table}

  
Our findings collectively suggest a phenomenon we term the ``\superpo'' (CP Superposition), where the roles of ``context'' 
or ``memory'' of model components depend on the inputs they receive. Next, we discuss how we could propose effective methods while acknowledging such superpositions.

\subsection{Our Approach: Just Run Twice (\jrt)}

We introduce Just Run Twice (\jrt), a test-time intervention method for addressing knowledge conflicts. Fig.~\ref{fig:method} illustrates the core idea, and Alg.~\ref{alg:example} provides the detailed algorithm. 
\jrt operates in two stages.

\textbf{Stage 1 (Head Identification).} This stage identifies two sets of attention heads that consistently achieve the desired effect with either positive or negative scaling. Each head is assigned a score based on the expected change in the desired probability value under individual scaling, computed across a small head selection dataset spanning multiple conflict types. To ensure consistency, only heads with non-negative scores across all conflict types are selected. The top $K$, based on aggregated scores, are retained. This process ensures reliability for individual head activations.

\textbf{Stage 2 (Dual-run Inference).} To mitigate counteracting effects from multiple interventions, the model runs twice. In the first run, the outputs of the identified heads are saved. In the second run, scaled versions of these saved outputs are added to the corresponding activations. Intuitively, the first-run activations serve as more reliable steering directions.
We validate this intuition through experiments in Sec.~\ref{subsec:direct_ablation} and analyses in Sec.~\ref{sec:theory_jrt}.

\textbf{Practical Implementation.} 
The key hyperparameters of \jrt include the size of the head selection dataset \(D\), 
the number of intervened heads \(K\), and the scaling factors at inference. 
In practice, we fix \(K\)  to be a constant number (\emph{e.g.}, 5) and determine the scaling factors using the validation set. 
We fix $|D|$ to be 4 for all primary experiments. 
Additionally, we test the generalizability of 
\jrt by using a head identification set from a single domain and evaluating its performance across other domains.

\section{Intervention Experiment}
\label{sec:experiment}

In this section, we analyze the intervention performance of \jrt and compare it against different baselines. Due to the page limit, we only present three models in the main paper. 
A more comprehensive experiment section 
with additional model results can be found in Appen.~\ref{appen:exp_detail}.

\begin{table*}[!ht]

    \centering
    \caption{
    Results of intervention for enhancing parametric memory. All results are in 
    accuracy (\(\%\)). \jrt consistently achieves the state-of-the-art 
    performances in most cases. \textbf{Bold} denotes the best result. Additional model results can be found in \lgtnote{Appen.~\ref{appen_sub:comprehensive_model_result}}.
    }
    \label{tab:main_intervention_unfiltered}
    \resizebox{\linewidth}{!}{
\begin{tabular}{@{}lllllllllllllllllllllll@{}}
\toprule
\multicolumn{2}{c}{\textbf{Dataset}}       
& \multicolumn{3}{c}{\textbf{\begin{tabular}[c]{@{}c@{}}Athlete \\ Sport\end{tabular}}}            
& \multicolumn{3}{c}{\textbf{\begin{tabular}[c]{@{}c@{}}Book \\ Author\end{tabular}}}              
& \multicolumn{3}{c}{\textbf{\begin{tabular}[c]{@{}c@{}}Company \\ Founder\end{tabular}}}          
& \multicolumn{3}{c}{\textbf{\begin{tabular}[c]{@{}c@{}}Company \\ Headquarter\end{tabular}}}      
& \multicolumn{3}{c}{\textbf{\begin{tabular}[c]{@{}c@{}}Official \\ Language\end{tabular}}}        
& \multicolumn{3}{c}{\textbf{\begin{tabular}[c]{@{}c@{}}World \\ Capital\end{tabular}}}            
& \multicolumn{3}{c}{\textbf{Average}}                                                             \\ 
\cmidrule(l){3-23} 
\multicolumn{2}{c}{\textbf{Conflict Type}} 
& \textbf{1} & \textbf{2} & \textbf{3} 
& \textbf{1} & \textbf{2} & \textbf{3} 
& \textbf{1} & \textbf{2} & \textbf{3} 
& \textbf{1} & \textbf{2} & \textbf{3} 
& \textbf{1} & \textbf{2} & \textbf{3} 
& \textbf{1} & \textbf{2} & \textbf{3} 
& \textbf{1} & \textbf{2} & \textbf{3} \\ 
\midrule

\multirow{6}{*}{Gemma}

& \cellcolor{gray!18}Original      
  & \cellcolor{gray!18}93.4 & \cellcolor{gray!18}18.1 & \cellcolor{gray!18}0.0 
  & \cellcolor{gray!18}73.0 & \cellcolor{gray!18}7.7  & \cellcolor{gray!18}0.0 
  & \cellcolor{gray!18}\textbf{47.0} & \cellcolor{gray!18}2.7  & \cellcolor{gray!18}0.0 
  & \cellcolor{gray!18}64.2 & \cellcolor{gray!18}0.7  & \cellcolor{gray!18}0.0 
  & \cellcolor{gray!18}\textbf{96.9} & \cellcolor{gray!18}23.5 & \cellcolor{gray!18}0.0 
  & \cellcolor{gray!18}\textbf{94.1} & \cellcolor{gray!18}15.1 & \cellcolor{gray!18}1.1 
  & \cellcolor{gray!18}78.1 & \cellcolor{gray!18}11.3 & \cellcolor{gray!18}0.2 \\

& \cellcolor{cyan!5}Prompt        
  & \cellcolor{cyan!5}93.4 & \cellcolor{cyan!5}44.5 & \cellcolor{cyan!5}0.0 
  & \cellcolor{cyan!5}73.0 & \cellcolor{cyan!5}22.4 & \cellcolor{cyan!5}1.6 
  & \cellcolor{cyan!5}\textbf{47.0} & \cellcolor{cyan!5}6.5  & \cellcolor{cyan!5}3.8 
  & \cellcolor{cyan!5}64.2 & \cellcolor{cyan!5}3.1  & \cellcolor{cyan!5}0.0 
  & \cellcolor{cyan!5}\textbf{96.9} & \cellcolor{cyan!5}50.0 & \cellcolor{cyan!5}22.2 
  & \cellcolor{cyan!5}\textbf{94.1} & \cellcolor{cyan!5}50.8 & \cellcolor{cyan!5}35.7 
  & \cellcolor{cyan!5}78.1 & \cellcolor{cyan!5}29.6 & \cellcolor{cyan!5}10.5 \\

& \cellcolor{cyan!5}\phl         
  & \cellcolor{cyan!5}86.6 & \cellcolor{cyan!5}71.6 & \cellcolor{cyan!5}33.3 
  & \cellcolor{cyan!5}33.3 & \cellcolor{cyan!5}4.8  & \cellcolor{cyan!5}0.0 
  & \cellcolor{cyan!5}28.1 & \cellcolor{cyan!5}10.8 & \cellcolor{cyan!5}19.5 
  & \cellcolor{cyan!5}44.3 & \cellcolor{cyan!5}22.4 & \cellcolor{cyan!5}30.6 
  & \cellcolor{cyan!5}90.7 & \cellcolor{cyan!5}72.8 & \cellcolor{cyan!5}82.7 
  & \cellcolor{cyan!5}84.3 & \cellcolor{cyan!5}64.3 & \cellcolor{cyan!5}88.1 
  & \cellcolor{cyan!5}61.2 & \cellcolor{cyan!5}41.1 & \cellcolor{cyan!5}42.4 \\

& \cellcolor{cyan!5}\phs         
  & \cellcolor{cyan!5}93.2 & \cellcolor{cyan!5}75.3 & \cellcolor{cyan!5}0.0 
  & \cellcolor{cyan!5}21.8 & \cellcolor{cyan!5}19.3 & \cellcolor{cyan!5}0.2 
  & \cellcolor{cyan!5}42.7 & \cellcolor{cyan!5}5.4  & \cellcolor{cyan!5}0.0 
  & \cellcolor{cyan!5}62.0 & \cellcolor{cyan!5}0.7  & \cellcolor{cyan!5}0.0 
  & \cellcolor{cyan!5}82.7 & \cellcolor{cyan!5}37.7 & \cellcolor{cyan!5}0.0 
  & \cellcolor{cyan!5}78.9 & \cellcolor{cyan!5}15.7 & \cellcolor{cyan!5}0.5 
  & \cellcolor{cyan!5}63.5 & \cellcolor{cyan!5}25.7 & \cellcolor{cyan!5}0.1 \\

& \cellcolor{cyan!14}\jro (Ours)
  & \cellcolor{cyan!14}91.2 & \cellcolor{cyan!14}63.2 & \cellcolor{cyan!14}65.9
  & \cellcolor{cyan!14}78.0 & \cellcolor{cyan!14}61.0 & \cellcolor{cyan!14}2.9
  & \cellcolor{cyan!14}46.5 & \cellcolor{cyan!14}\textbf{44.9} & \cellcolor{cyan!14}41.1
  & \cellcolor{cyan!14}57.9 & \cellcolor{cyan!14}36.2 & \cellcolor{cyan!14}38.9
  & \cellcolor{cyan!14}94.4 & \cellcolor{cyan!14}82.1 & \cellcolor{cyan!14}84.0
  & \cellcolor{cyan!14}91.9 & \cellcolor{cyan!14}69.2 & \cellcolor{cyan!14}83.2
  & \cellcolor{cyan!14}76.7 & \cellcolor{cyan!14}59.4 & \cellcolor{cyan!14}52.7 \\

& \cellcolor{cyan!25}\jrt (Ours)
  & \cellcolor{cyan!25}\textbf{96.3} & \cellcolor{cyan!25}\textbf{95.4} & \cellcolor{cyan!25}\textbf{91.9}
  & \cellcolor{cyan!25}\textbf{79.8} & \cellcolor{cyan!25}\textbf{75.5} & \cellcolor{cyan!25}\textbf{68.0}
  & \cellcolor{cyan!25}45.4         & \cellcolor{cyan!25}39.5         & \cellcolor{cyan!25}\textbf{43.2}
  & \cellcolor{cyan!25}\textbf{65.8}& \cellcolor{cyan!25}\textbf{60.0} & \cellcolor{cyan!25}\textbf{59.3}
  & \cellcolor{cyan!25}93.2         & \cellcolor{cyan!25}\textbf{86.4} & \cellcolor{cyan!25}\textbf{85.2}
  & \cellcolor{cyan!25}\textbf{94.1}& \cellcolor{cyan!25}\textbf{95.1} & \cellcolor{cyan!25}\textbf{93.0}
  & \cellcolor{cyan!25}\textbf{79.1}& \cellcolor{cyan!25}\textbf{75.3} & \cellcolor{cyan!25}\textbf{73.4} \\
\midrule

\multirow{6}{*}{Llama2}

& \cellcolor{gray!18}Original
  & \cellcolor{gray!18}90.4 & \cellcolor{gray!18}9.0  & \cellcolor{gray!18}0.7  
  & \cellcolor{gray!18}81.4 & \cellcolor{gray!18}47.0 & \cellcolor{gray!18}0.0  
  & \cellcolor{gray!18}\textbf{57.5} & \cellcolor{gray!18}29.3 & \cellcolor{gray!18}0.0  
  & \cellcolor{gray!18}\textbf{75.2} & \cellcolor{gray!18}1.1  & \cellcolor{gray!18}0.7  
  & \cellcolor{gray!18}95.7 & \cellcolor{gray!18}46.9 & \cellcolor{gray!18}0.0  
  & \cellcolor{gray!18}95.1 & \cellcolor{gray!18}22.3 & \cellcolor{gray!18}0.0  
  & \cellcolor{gray!18}\textbf{82.5} & \cellcolor{gray!18}25.9 & \cellcolor{gray!18}0.2 \\

& \cellcolor{cyan!5}Prompt        
  & \cellcolor{cyan!5}90.4 & \cellcolor{cyan!5}70.2 & \cellcolor{cyan!5}0.2  
  & \cellcolor{cyan!5}81.4 & \cellcolor{cyan!5}65.1 & \cellcolor{cyan!5}22.0 
  & \cellcolor{cyan!5}\textbf{57.5} & \cellcolor{cyan!5}16.6 & \cellcolor{cyan!5}24.3 
  & \cellcolor{cyan!5}\textbf{75.2} & \cellcolor{cyan!5}38.0 & \cellcolor{cyan!5}15.7 
  & \cellcolor{cyan!5}95.7 & \cellcolor{cyan!5}79.6 & \cellcolor{cyan!5}40.7 
  & \cellcolor{cyan!5}95.1 & \cellcolor{cyan!5}60.3 & \cellcolor{cyan!5}15.8 
  & \cellcolor{cyan!5}\textbf{82.5} & \cellcolor{cyan!5}55.0 & \cellcolor{cyan!5}19.8 \\

& \cellcolor{cyan!5}\phl
  & \cellcolor{cyan!5}91.0 & \cellcolor{cyan!5}87.4 & \cellcolor{cyan!5}37.5 
  & \cellcolor{cyan!5}77.8 & \cellcolor{cyan!5}92.0 & \cellcolor{cyan!5}70.9 
  & \cellcolor{cyan!5}53.0 & \cellcolor{cyan!5}\textbf{52.2} & \cellcolor{cyan!5}32.6 
  & \cellcolor{cyan!5}73.4 & \cellcolor{cyan!5}74.0 & \cellcolor{cyan!5}12.1 
  & \cellcolor{cyan!5}94.4 & \cellcolor{cyan!5}90.7 & \cellcolor{cyan!5}84.0 
  & \cellcolor{cyan!5}94.2 & \cellcolor{cyan!5}\textbf{95.7} & \cellcolor{cyan!5}90.2 
  & \cellcolor{cyan!5}80.6 & \cellcolor{cyan!5}82.0 & \cellcolor{cyan!5}54.5 \\

& \cellcolor{cyan!5}\phs
  & \cellcolor{cyan!5}89.0 & \cellcolor{cyan!5}88.1 & \cellcolor{cyan!5}10.5 
  & \cellcolor{cyan!5}80.2 & \cellcolor{cyan!5}86.1 & \cellcolor{cyan!5}64.5 
  & \cellcolor{cyan!5}52.7 & \cellcolor{cyan!5}50.0 & \cellcolor{cyan!5}34.0 
  & \cellcolor{cyan!5}73.4 & \cellcolor{cyan!5}72.9 & \cellcolor{cyan!5}18.5 
  & \cellcolor{cyan!5}94.4 & \cellcolor{cyan!5}85.5 & \cellcolor{cyan!5}80.7 
  & \cellcolor{cyan!5}94.0 & \cellcolor{cyan!5}91.3 & \cellcolor{cyan!5}85.3 
  & \cellcolor{cyan!5}80.6 & \cellcolor{cyan!5}79.0 & \cellcolor{cyan!5}48.9 \\

& \cellcolor{cyan!14}\jro (Ours)
  & \cellcolor{cyan!14}89.9 & \cellcolor{cyan!14}61.6 & \cellcolor{cyan!14}50.4
  & \cellcolor{cyan!14}77.1 & \cellcolor{cyan!14}85.6 & \cellcolor{cyan!14}79.8
  & \cellcolor{cyan!14}53.6 & \cellcolor{cyan!14}47.0 & \cellcolor{cyan!14}40.9
  & \cellcolor{cyan!14}72.2 & \cellcolor{cyan!14}66.3 & \cellcolor{cyan!14}64.0
  & \cellcolor{cyan!14}93.8 & \cellcolor{cyan!14}92.0 & \cellcolor{cyan!14}95.7
  & \cellcolor{cyan!14}94.6 & \cellcolor{cyan!14}94.0 & \cellcolor{cyan!14}95.7
  & \cellcolor{cyan!14}80.2 & \cellcolor{cyan!14}74.4 & \cellcolor{cyan!14}71.1 \\

& \cellcolor{cyan!25}\jrt (Ours)
  & \cellcolor{cyan!25}\textbf{91.5} & \cellcolor{cyan!25}\textbf{88.6} & \cellcolor{cyan!25}\textbf{91.0}
  & \cellcolor{cyan!25}\textbf{82.8} & \cellcolor{cyan!25}\textbf{91.1} & \cellcolor{cyan!25}\textbf{88.5}
  & \cellcolor{cyan!25}53.0          & \cellcolor{cyan!25}51.9          & \cellcolor{cyan!25}\textbf{54.1}
  & \cellcolor{cyan!25}74.3          & \cellcolor{cyan!25}\textbf{74.3} & \cellcolor{cyan!25}\textbf{73.6}
  & \cellcolor{cyan!25}\textbf{96.1} & \cellcolor{cyan!25}\textbf{93.8} & \cellcolor{cyan!25}\textbf{94.4}
  & \cellcolor{cyan!25}\textbf{95.4} & \cellcolor{cyan!25}95.4          & \cellcolor{cyan!25}\textbf{96.2}
  & \cellcolor{cyan!25}82.2          & \cellcolor{cyan!25}\textbf{82.5} & \cellcolor{cyan!25}\textbf{83.0} \\
\midrule

\multirow{6}{*}{Llama3}

& \cellcolor{gray!18}Original
  & \cellcolor{gray!18}84.1 & \cellcolor{gray!18}22.2 & \cellcolor{gray!18}0.0 
  & \cellcolor{gray!18}55.6 & \cellcolor{gray!18}2.2  & \cellcolor{gray!18}0.0 
  & \cellcolor{gray!18}61.1 & \cellcolor{gray!18}3.3  & \cellcolor{gray!18}0.0 
  & \cellcolor{gray!18}80.3 & \cellcolor{gray!18}1.4  & \cellcolor{gray!18}1.8 
  & \cellcolor{gray!18}96.3 & \cellcolor{gray!18}20.4 & \cellcolor{gray!18}0.6 
  & \cellcolor{gray!18}94.6 & \cellcolor{gray!18}16.8 & \cellcolor{gray!18}0.0 
  & \cellcolor{gray!18}78.7 & \cellcolor{gray!18}11.0 & \cellcolor{gray!18}0.4 \\

& \cellcolor{cyan!5}Prompt
  & \cellcolor{cyan!5}84.1 & \cellcolor{cyan!5}87.4 & \cellcolor{cyan!5}4.1 
  & \cellcolor{cyan!5}55.6 & \cellcolor{cyan!5}77.7 & \cellcolor{cyan!5}0.0 
  & \cellcolor{cyan!5}61.1 & \cellcolor{cyan!5}38.3 & \cellcolor{cyan!5}0.6 
  & \cellcolor{cyan!5}80.3 & \cellcolor{cyan!5}48.2 & \cellcolor{cyan!5}0.0 
  & \cellcolor{cyan!5}96.3 & \cellcolor{cyan!5}85.2 & \cellcolor{cyan!5}5.6 
  & \cellcolor{cyan!5}94.6 & \cellcolor{cyan!5}83.8 & \cellcolor{cyan!5}11.9 
  & \cellcolor{cyan!5}78.7 & \cellcolor{cyan!5}70.1 & \cellcolor{cyan!5}3.7 \\

& \cellcolor{cyan!5}\phl
  & \cellcolor{cyan!5}86.4 & \cellcolor{cyan!5}86.5 & \cellcolor{cyan!5}14.1 
  & \cellcolor{cyan!5}75.3 & \cellcolor{cyan!5}87.4 & \cellcolor{cyan!5}4.9 
  & \cellcolor{cyan!5}55.6 & \cellcolor{cyan!5}48.9 & \cellcolor{cyan!5}30.6 
  & \cellcolor{cyan!5}78.0 & \cellcolor{cyan!5}55.3 & \cellcolor{cyan!5}9.4 
  & \cellcolor{cyan!5}96.3 & \cellcolor{cyan!5}\textbf{96.3} & \cellcolor{cyan!5}84.0 
  & \cellcolor{cyan!5}93.0 & \cellcolor{cyan!5}94.1 & \cellcolor{cyan!5}92.4 
  & \cellcolor{cyan!5}80.7 & \cellcolor{cyan!5}78.1 & \cellcolor{cyan!5}39.2 \\

& \cellcolor{cyan!5}\phs
  & \cellcolor{cyan!5}86.5 & \cellcolor{cyan!5}86.3 & \cellcolor{cyan!5}12.5 
  & \cellcolor{cyan!5}61.1 & \cellcolor{cyan!5}84.8 & \cellcolor{cyan!5}6.8 
  & \cellcolor{cyan!5}58.3 & \cellcolor{cyan!5}51.7 & \cellcolor{cyan!5}27.8 
  & \cellcolor{cyan!5}70.0 & \cellcolor{cyan!5}56.2 & \cellcolor{cyan!5}26.8 
  & \cellcolor{cyan!5}96.3 & \cellcolor{cyan!5}95.8 & \cellcolor{cyan!5}87.0 
  & \cellcolor{cyan!5}91.4 & \cellcolor{cyan!5}87.6 & \cellcolor{cyan!5}90.3 
  & \cellcolor{cyan!5}77.3 & \cellcolor{cyan!5}77.1 & \cellcolor{cyan!5}41.9 \\

& \cellcolor{cyan!14}\jro (Ours)
  & \cellcolor{cyan!14}82.8 & \cellcolor{cyan!14}72.8 & \cellcolor{cyan!14}58.7
  & \cellcolor{cyan!14}66.2 & \cellcolor{cyan!14}92.1 & \cellcolor{cyan!14}83.0
  & \cellcolor{cyan!14}\textbf{61.7} & \cellcolor{cyan!14}51.1 & \cellcolor{cyan!14}54.4
  & \cellcolor{cyan!14}\textbf{80.5} & \cellcolor{cyan!14}56.9 & \cellcolor{cyan!14}56.0
  & \cellcolor{cyan!14}95.7 & \cellcolor{cyan!14}95.7 & \cellcolor{cyan!14}93.2
  & \cellcolor{cyan!14}94.1 & \cellcolor{cyan!14}95.7 & \cellcolor{cyan!14}96.8
  & \cellcolor{cyan!14}80.2 & \cellcolor{cyan!14}77.4 & \cellcolor{cyan!14}73.7 \\

& \cellcolor{cyan!25}\jrt (Ours)
  & \cellcolor{cyan!25}\textbf{87.0} & \cellcolor{cyan!25}\textbf{87.8} & \cellcolor{cyan!25}\textbf{95.9}
  & \cellcolor{cyan!25}\textbf{86.5} & \cellcolor{cyan!25}\textbf{92.3} & \cellcolor{cyan!25}\textbf{88.7}
  & \cellcolor{cyan!25}\textbf{61.7} & \cellcolor{cyan!25}\textbf{56.7} & \cellcolor{cyan!25}\textbf{55.6}
  & \cellcolor{cyan!25}79.8          & \cellcolor{cyan!25}\textbf{75.9} & \cellcolor{cyan!25}\textbf{74.8}
  & \cellcolor{cyan!25}\textbf{96.3} & \cellcolor{cyan!25}\textbf{96.3} & \cellcolor{cyan!25}\textbf{95.7}
  & \cellcolor{cyan!25}\textbf{95.7} & \cellcolor{cyan!25}\textbf{96.2} & \cellcolor{cyan!25}\textbf{97.3}
  & \cellcolor{cyan!25}\textbf{84.5} & \cellcolor{cyan!25}\textbf{84.2} & \cellcolor{cyan!25}\textbf{84.7} \\
\bottomrule
\end{tabular}
    }

\end{table*}

\begin{table}[!ht]
\vspace{-2.35mm}
    \centering
    \caption{Results of intervention for enhancing contextual knowledge, following the same convention as Tab.~\ref{tab:main_intervention_unfiltered}.  
    }
    \label{tab:main_intervention_context}
    
    \resizebox{\linewidth}{!}{
       \begin{tabular}{@{}llcccccc@{}}
\toprule
\multicolumn{1}{c}{\textbf{Model}} 
 & \multicolumn{1}{c}{\textbf{Method}} 
 & \textbf{\begin{tabular}[c]{@{}c@{}}NQ\\ Swap\end{tabular}} 
 & \textbf{\begin{tabular}[c]{@{}c@{}}Hate Spe-\\ ech Ending\end{tabular}} 
 & \textbf{\begin{tabular}[c]{@{}c@{}}History of \\ Science qa\end{tabular}} 
 & \textbf{\begin{tabular}[c]{@{}c@{}}Proverb\\ Ending\end{tabular}} 
 & \textbf{\begin{tabular}[c]{@{}c@{}}Proverb\\ Translation\end{tabular}} 
 & \textbf{Average} \\ 
\midrule

\multirow{7}{*}{Gemma}
 & \cellcolor{gray!18}Original 
   & \cellcolor{gray!18}38.7 
   & \cellcolor{gray!18}70.7 
   & \cellcolor{gray!18}29.9 
   & \cellcolor{gray!18}26.5 
   & \cellcolor{gray!18}59.0 
   & \cellcolor{gray!18}45.0 
   \\

 & \cellcolor{cyan!5}Prompt
   & \cellcolor{cyan!5}40.9 
   & \cellcolor{cyan!5}73.2 
   & \cellcolor{cyan!5}38.0 
   & \cellcolor{cyan!5}26.6 
   & \cellcolor{cyan!5}58.4 
   & \cellcolor{cyan!5}47.4 
   \\

 & \cellcolor{cyan!5}CAD
   & \cellcolor{cyan!5}56.9 
   & \cellcolor{cyan!5}81.7 
   & \cellcolor{cyan!5}16.9 
   & \cellcolor{cyan!5}37.1 
   & \cellcolor{cyan!5}62.9 
   & \cellcolor{cyan!5}51.1 
   \\

 & \cellcolor{cyan!5}\phl
   & \cellcolor{cyan!5}51.0 
   & \cellcolor{cyan!5}82.8 
   & \cellcolor{cyan!5}46.5 
   & \cellcolor{cyan!5}57.8 
   & \cellcolor{cyan!5}62.0 
   & \cellcolor{cyan!5}60.0 
   \\

 & \cellcolor{cyan!5}\phs
   & \cellcolor{cyan!5}50.2 
   & \cellcolor{cyan!5}80.2 
   & \cellcolor{cyan!5}35.2 
   & \cellcolor{cyan!5}50.1 
   & \cellcolor{cyan!5}63.2 
   & \cellcolor{cyan!5}55.8 
   \\

 & \cellcolor{cyan!14}\jro (Ours)
   & \cellcolor{cyan!14}38.7 
   & \cellcolor{cyan!14}79.3 
   & \cellcolor{cyan!14}\textbf{50.1} 
   & \cellcolor{cyan!14}26.8 
   & \cellcolor{cyan!14}67.1 
   & \cellcolor{cyan!14}52.4 
   \\

 & \cellcolor{cyan!25}\jrt (Ours)
   & \cellcolor{cyan!25}\textbf{58.4} 
   & \cellcolor{cyan!25}\textbf{84.1} 
   & \cellcolor{cyan!25}47.0 
   & \cellcolor{cyan!25}\textbf{74.6} 
   & \cellcolor{cyan!25}\textbf{66.8} 
   & \cellcolor{cyan!25}\textbf{66.2} 
   \\
\midrule

\multirow{7}{*}{Llama2}
 & \cellcolor{gray!18}Original
   & \cellcolor{gray!18}24.5 
   & \cellcolor{gray!18}57.3 
   & \cellcolor{gray!18}13.3 
   & \cellcolor{gray!18}26.6 
   & \cellcolor{gray!18}52.8 
   & \cellcolor{gray!18}34.9 
   \\

 & \cellcolor{cyan!5}Prompt
   & \cellcolor{cyan!5}39.6 
   & \cellcolor{cyan!5}58.5 
   & \cellcolor{cyan!5}21.3 
   & \cellcolor{cyan!5}25.7 
   & \cellcolor{cyan!5}52.5 
   & \cellcolor{cyan!5}39.5 
   \\

 & \cellcolor{cyan!5}CAD
   & \cellcolor{cyan!5}29.8 
   & \cellcolor{cyan!5}65.4 
   & \cellcolor{cyan!5}20.2 
   & \cellcolor{cyan!5}28.6 
   & \cellcolor{cyan!5}54.2 
   & \cellcolor{cyan!5}41.4 
   \\

 & \cellcolor{cyan!5}\phl
   & \cellcolor{cyan!5}48.2 
   & \cellcolor{cyan!5}63.4 
   & \cellcolor{cyan!5}20.4 
   & \cellcolor{cyan!5}68.7 
   & \cellcolor{cyan!5}58.8 
   & \cellcolor{cyan!5}51.9 
   \\

 & \cellcolor{cyan!5}\phs
   & \cellcolor{cyan!5}25.3 
   & \cellcolor{cyan!5}62.2 
   & \cellcolor{cyan!5}16.5 
   & \cellcolor{cyan!5}26.5 
   & \cellcolor{cyan!5}55.2 
   & \cellcolor{cyan!5}37.1 
   \\

 & \cellcolor{cyan!14}\jro (Ours)
   & \cellcolor{cyan!14}29.7 
   & \cellcolor{cyan!14}76.8 
   & \cellcolor{cyan!14}49.3 
   & \cellcolor{cyan!14}34.3 
   & \cellcolor{cyan!14}52.8 
   & \cellcolor{cyan!14}48.6 
   \\

 & \cellcolor{cyan!25}\jrt (Ours)
   & \cellcolor{cyan!25}\textbf{49.5} 
   & \cellcolor{cyan!25}\textbf{93.9} 
   & \cellcolor{cyan!25}\textbf{50.2} 
   & \cellcolor{cyan!25}\textbf{77.1} 
   & \cellcolor{cyan!25}\textbf{62.6} 
   & \cellcolor{cyan!25}\textbf{66.6} 
   \\
\midrule

\multirow{7}{*}{Llama3}
 & \cellcolor{gray!18}Original
   & \cellcolor{gray!18}18.5 
   & \cellcolor{gray!18}51.2 
   & \cellcolor{gray!18}72.9 
   & \cellcolor{gray!18}24.5 
   & \cellcolor{gray!18}50.1 
   & \cellcolor{gray!18}43.4 
   \\

 & \cellcolor{cyan!5}Prompt
   & \cellcolor{cyan!5}33.4 
   & \cellcolor{cyan!5}53.7 
   & \cellcolor{cyan!5}71.7 
   & \cellcolor{cyan!5}23.9 
   & \cellcolor{cyan!5}51.8 
   & \cellcolor{cyan!5}46.9 
   \\

 & \cellcolor{cyan!5}CAD
   & \cellcolor{cyan!5}34.7 
   & \cellcolor{cyan!5}60.8 
   & \cellcolor{cyan!5}73.1 
   & \cellcolor{cyan!5}33.1 
   & \cellcolor{cyan!5}54.1 
   & \cellcolor{cyan!5}51.2
   \\

 & \cellcolor{cyan!5}\phl
   & \cellcolor{cyan!5}25.3 
   & \cellcolor{cyan!5}62.2 
   & \cellcolor{cyan!5}\textbf{78.4} 
   & \cellcolor{cyan!5}48.5 
   & \cellcolor{cyan!5}63.6 
   & \cellcolor{cyan!5}55.6 
   \\

 & \cellcolor{cyan!5}\phs
   & \cellcolor{cyan!5}22.5 
   & \cellcolor{cyan!5}51.2 
   & \cellcolor{cyan!5}75.1 
   & \cellcolor{cyan!5}25.0 
   & \cellcolor{cyan!5}51.8 
   & \cellcolor{cyan!5}45.1 
   \\

 & \cellcolor{cyan!14}\jro (Ours)
   & \cellcolor{cyan!14}26.5 
   & \cellcolor{cyan!14}72.5 
   & \cellcolor{cyan!14}73.2 
   & \cellcolor{cyan!14}33.1 
   & \cellcolor{cyan!14}61.8 
   & \cellcolor{cyan!14}53.4 
   \\

 & \cellcolor{cyan!25}\jrt (Ours)
   & \cellcolor{cyan!25}\textbf{35.3} 
   & \cellcolor{cyan!25}\textbf{78.4} 
   & \cellcolor{cyan!25}74.2 
   & \cellcolor{cyan!25}\textbf{75.4} 
   & \cellcolor{cyan!25}\textbf{70.7} 
   & \cellcolor{cyan!25}\textbf{66.8} 
   \\
\bottomrule
\end{tabular}
    }
    \vspace{-5mm}
\end{table}

\newcommand{\githubrepo}{\url{https://github.com/tensor-gales/HeteLinkPred}}

\subsection{Enhancing Parametric Beliefs}
\label{subsec:intervention}

\textbf{Setups.} 
We use the datasets and evaluation metric detailed in Sec. \ref{subsec:para_setup}. 
Notably, we have three different conflict types: No Conflict (Type 1), 
Substitution Conflict (Type 2), and Coherent Conflict (Type 3).
For presentation clarity, we use the number to represent 
these conflict types in Tab.~\ref{tab:main_intervention_unfiltered}.

\textbf{Baselines.} We compare our methods against the following baselines:
(1) \textbf{Prompt:} We instruct the LM to generate answers solely based on internal memory; 
(2) \textbf{PH3:} \cite{jin2024cutting} leverages patching-based 
methods to identify and prune ``context'' and ``memory'' heads, demonstrating strong performance in substitution conflicts. 
We note that the original PH3 requires a \emph{development} set of 200 samples for 
head identification. For a fair comparison, we include two versions of PH3: \textbf{\phl}, 
the original version, and \textbf{\phs}, which uses the same amount of 
samples as \jrt for head identifications (\emph{i.e.,} 4 samples).
(3) \textbf{\jro (Just Run Once):} an \emph{ablated} variant of \jrt that only omits the dual-run design, whose details can be found in Appen. \ref{appen:algorithm}.

\textbf{Results.} Tab.~\ref{tab:main_intervention_unfiltered} presents the results of these intervention methods across 
different models. 
Key observations include:

\begin{enumerate}
    \setlength\itemsep{0em}
    \item \jrt consistently and significantly outperforms all baselines in most cases. 
    Experimental results indicate that \jrt can almost completely reverse the model's tendency 
    to produce contextual knowledge, even in the most challenging (coherent conflict, Type 3) scenarios. 
    
    \item 
    \jrt achieves improvements on zero-shot clean prompts, enhancing the factuality of the model.
    \item While PH3 and Prompt demonstrate notable improvements in substitution conflicts under certain conditions, they fail to effectively address coherent conflict scenarios.
    Importantly, there is a clear performance difference when PH3 has a small 
    set of head identification sets. \jrt can achieve better performance with a significantly 
    smaller head identification set. 
    \item \jrt outperforms \jro on average in almost all cases. In particular, 
    the gap is about 20\% with the Gemma model. This ablation further illustrates 
    the effectiveness of the dual-run design of \jrt. 

    \item While PH3 bears an appealing ability to identify ``cross-relation heads''~\citep{jin2024cutting}, its transferability is largely limited to closely related datasets (\emph{i.e.,} heads identified from the world capital dataset are effective for the official language dataset but not for the company headquarters dataset). In contrast, our method achieves high performance across diverse domains, with heads only being selected from the world capital domain. 
\end{enumerate}

\subsection{Enhancing Contextual Reliance}
\label{subsec:intervention_context}

\begin{figure*}[th!]
  \centering
  \subfloat[Head Identification Set Size]{%
    \includegraphics[width=0.32\linewidth]{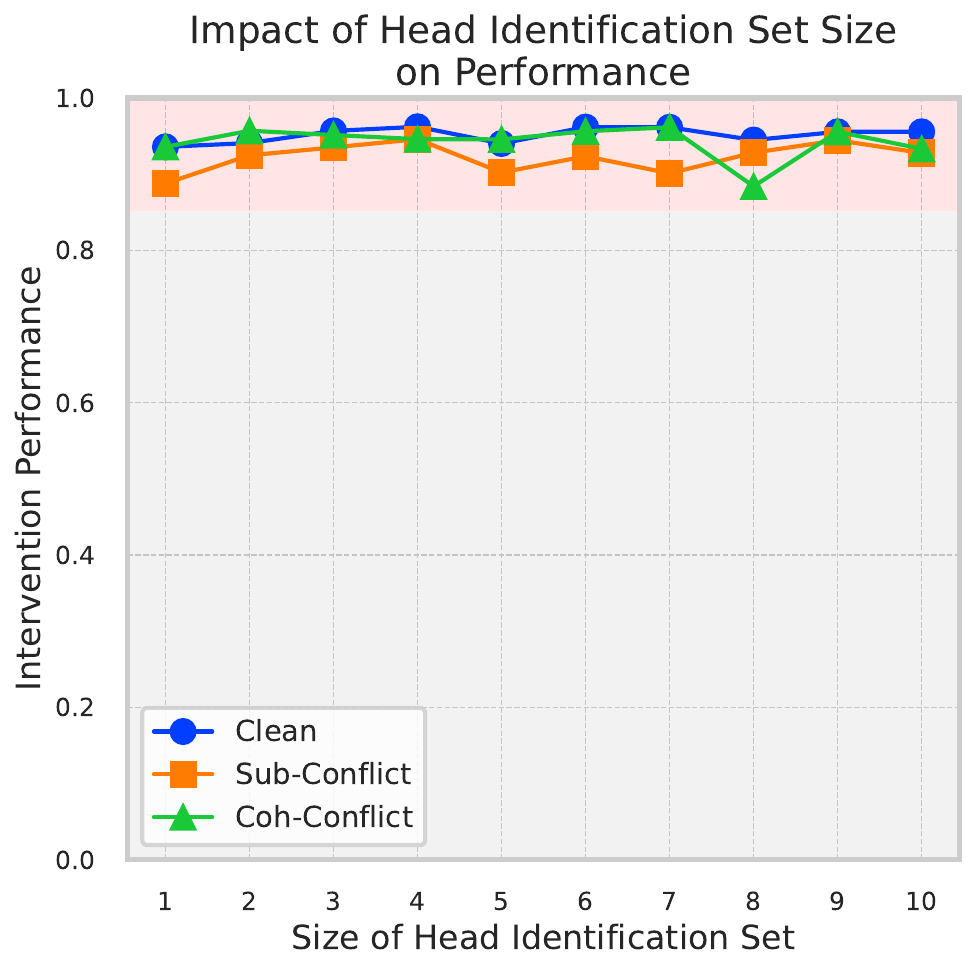}%
    \label{fig:head_identification_set}%
  }\hfill
  \subfloat[Number of Heads Intervened]{%
    \includegraphics[width=0.32\linewidth]{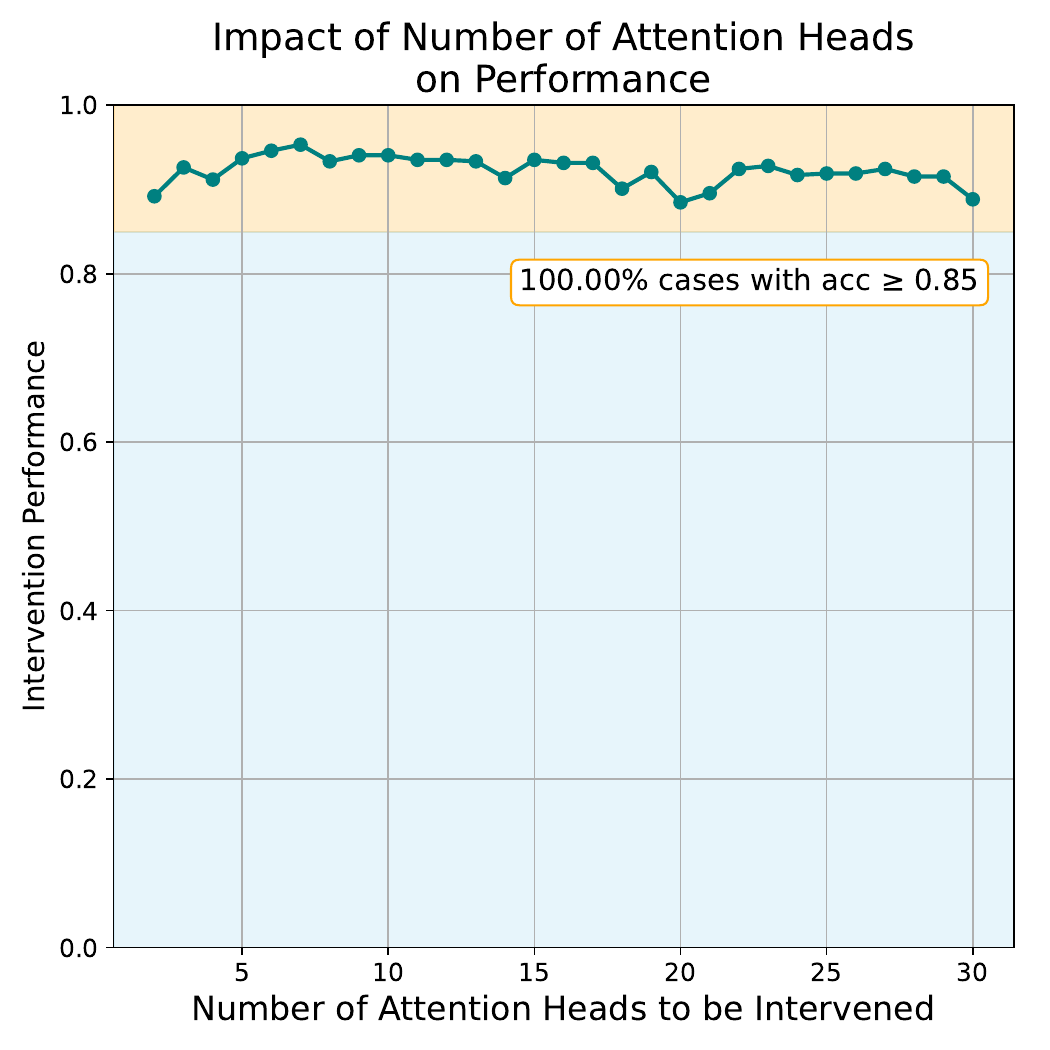}%
    \label{fig:num_head}%
  }
  \subfloat[Scaling Factor]{%
    \includegraphics[width=0.32\linewidth]{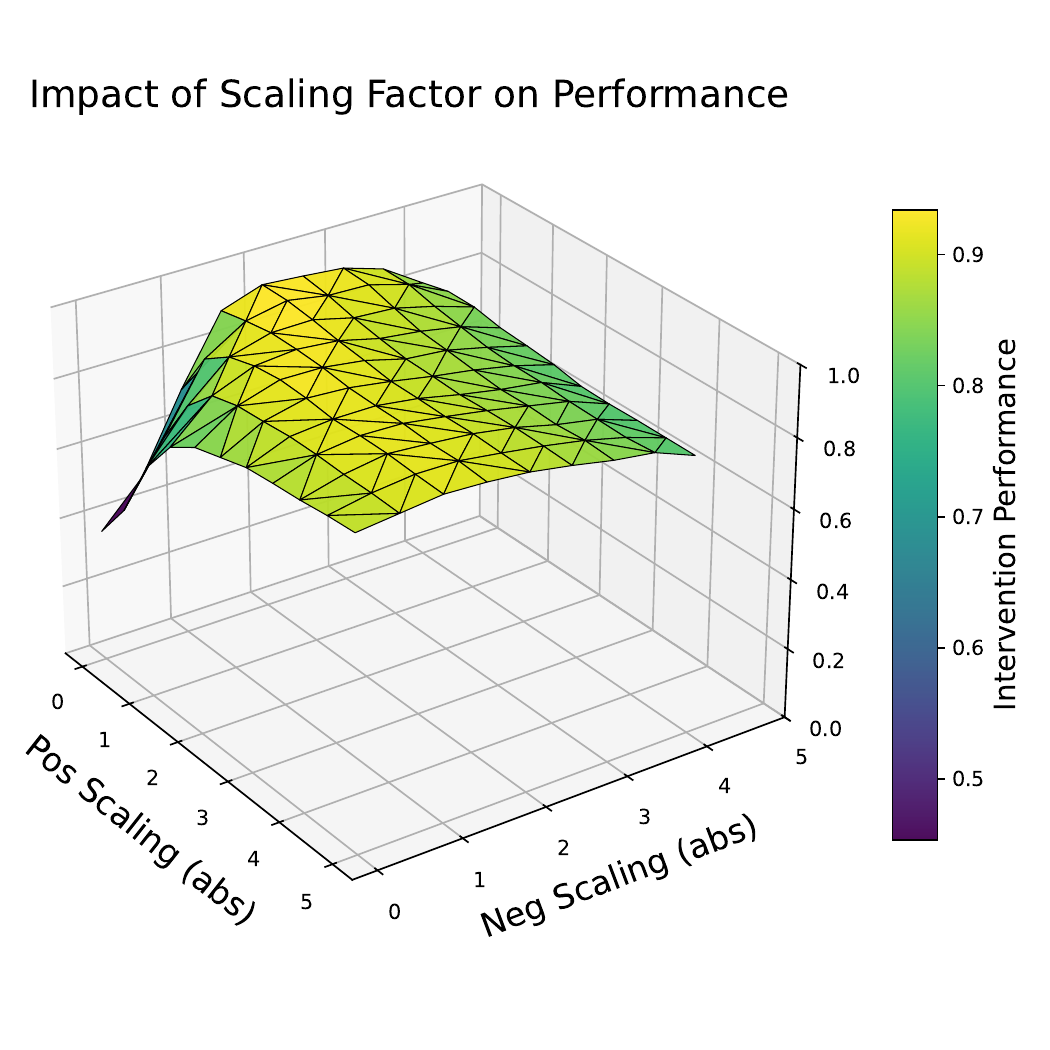}%
    \label{fig:scaling}%
  }
  \caption{Robustness analysis of \jrt across key hyperparameters. We observe consistent intervention performance as we vary the head identification set size, the number of heads intervened, and the scaling factor magnitudes, underscoring the robustness and adaptability.
  }
  \label{fig:robustness_plot}
\end{figure*}

\textbf{Setups and Baselines.} We use the datasets and evaluation metric detailed in Sec. \ref{subsec:context_setup}.  
We compare our methods against the previously 
mentioned baselines and an additional one: \textbf{CAD}~\citep{shi2023trusting}, 
a decoding-based method that leverages contrastive decoding~\citep{li2022contrastive} 
to encourage the language model to attend to its context.

\textbf{Results.} Tab.~\ref{tab:main_intervention_context} presents the 
results of these intervention methods across the models. The main conclusions 
from the prior subsection are still valid. \jrt consistently outperforms 
all baselines on average and is versatile in promoting contextual knowledge as well.

\subsection{Robustness of \jrt}
\label{subsec:robustness}




In this section, we examine the robustness of \jrt against variations in key hyperparameters and paraphrased prompts. Using Gemma as our backbone model, we systematically vary one hyperparameter at a time to isolate its effects on performance. Specifically, we evaluate the impact of three hyperparameters: the size of the head identification set $|D|$, the number of intervened attention heads $K$, and the magnitude of the scaling factors. Additionally, we investigate robustness to paraphrased prompts by employing multiple curated templates for each conflict type, selecting one at random during evaluation. Detailed experimental setups and additional analyses are provided in Appendix~\ref{appen_sub:robustness}.

Figure~\ref{fig:robustness_plot} illustrates the robustness of \jrt{} across these hyperparameters. The results demonstrate that \jrt{} maintains consistently high performance across a wide range of hyperparameter values, highlighting its stability and effectiveness.

Tab.~\ref{tab:appen_paraphrase} in Appendix~\ref {appen_sub:robustness} presents the results of \jrt{} when applied to paraphrased prompts. Our findings show that \jrt{} is highly robust to variations in input prompt formats, consistently maintaining its effectiveness across diverse templates. Notably, \jrt{} continues to demonstrate superior performance, effectively shifting the model's reliance from context to parametric memory.



\subsection{\jro vs. \jrt: Effect of Running Twice}
\label{subsec:direct_ablation}

\begin{figure}[h!]
    \centering
    \includegraphics[width=\linewidth]{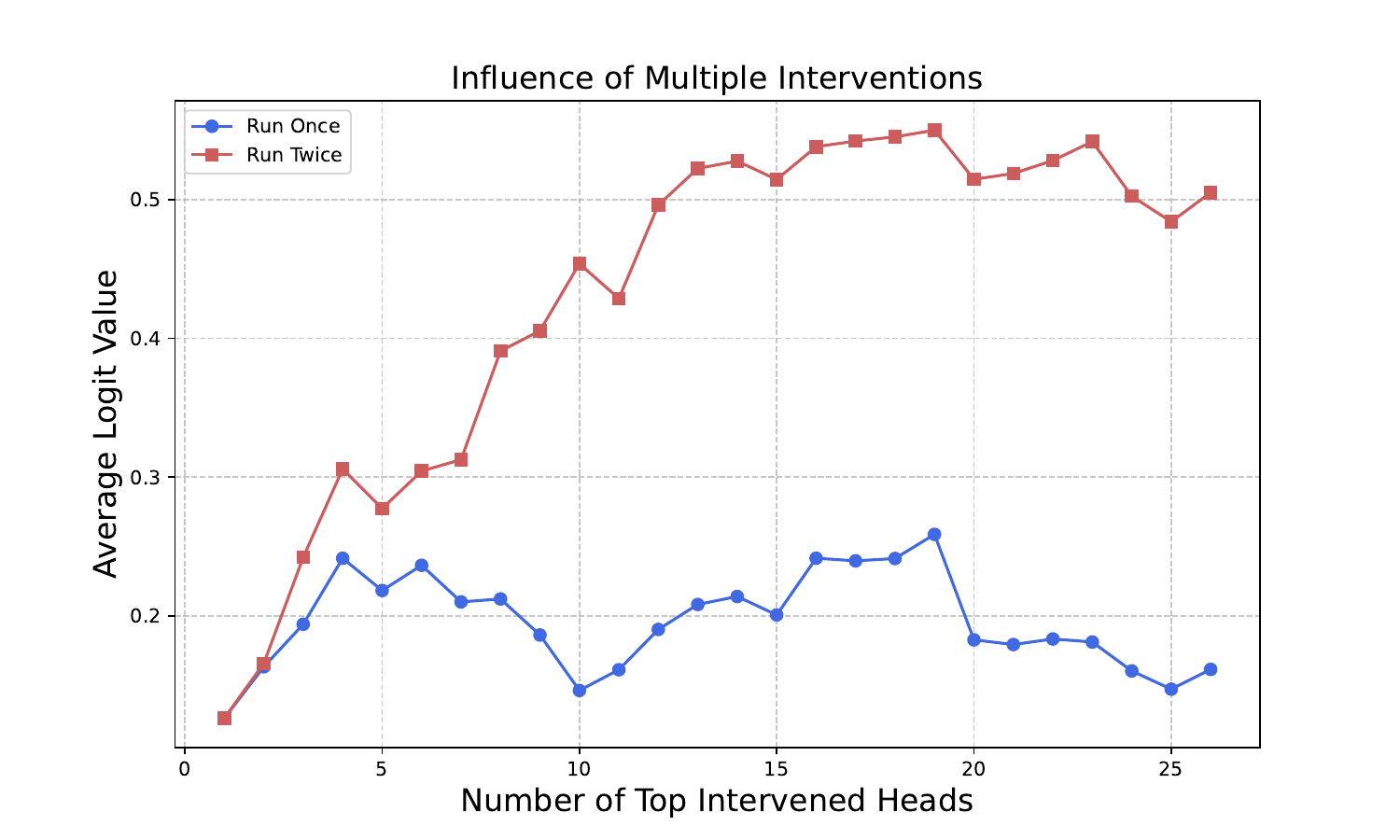}
  \caption{
  Effect of Running Twice: Mitigating Counteracting Effects of Multiple Interventions. All presented heads contribute to individual gains, starting from a baseline logit value of 0.03. The results show that naive single-pass interventions are unstable and prone to degradation. In contrast, the dual-run design ensures consistent and effective interventions.
  }
  \label{fig:multiple_interventions2}
\end{figure}

We conduct an additional experiment to demonstrate the effectiveness of the dual-run design. Following the same setup as in 
Tab.~\ref{tab:multiple_influence}, we compare the intervened logit value of Run Once versus Run Twice when combining multiple individually effective interventions. As shown in Fig.~\ref{fig:multiple_interventions2}, single-pass interventions are unstable and prone to performance degradation. In contrast, the dual-run design delivers consistently effective interventions.

\section{Theoretical Analysis}
\label{sec:theory_jrt}
In the previous sections, we have conducted a comprehensive empirical analysis to identify the phenomenon of \emph{CP superposition} and demonstrated the effectiveness of \jrt across a variety of setups.
In this section, we aim to formalize our observations and understand the underlying mechanisms behind both observations.
Specifically, we conceptualize knowledge conflicts as arising naturally within the weight matrices of the attention module, shaped through the training process via gradient descent.
Under such conditions, we elucidate that \jrt provides a superior approach compared to naive single-pass interventions. A more detailed theoretical analysis can be found in Appen.~\ref{appen:theory}. We first provide a brief overview of the model and task setup.

\begin{figure*}[h!]
    \centering
    \includegraphics[width=0.9\linewidth]{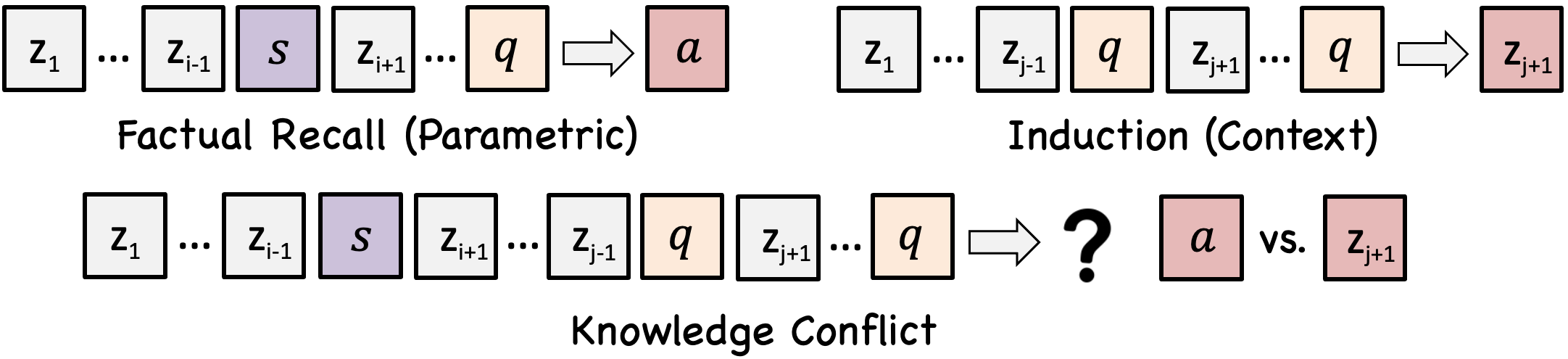}
  \caption{
  Illustration of the theoretical task setup. The top row shows two distinct tasks that a two-layer transformer learns during training; the bottom row depicts the conflicting task encountered at inference. Here, $z_j$ denotes noisy tokens, $s$ is the subject token, $a$ is the answer token associated with $s$, and $q$ is the trigger and fixed query (EOS) token.
  }
  \label{fig:theory_task_setup}
\end{figure*}

\textbf{Model Setup.} 
We use a two-layer Transformer with one attention head per layer, absolute positional encoding, and residual connections. 
The input is a sequence of tokens \(z_{1:T} \in [N]^T\), where \(T\) is the sequence length, and \(N\) is the vocabulary size. 
Each token \(z_t\) is mapped to a \(d\)-dimensional embedding \(\phi(z_t)\), and a positional embedding \(p_t \in \mathbb{R}^d\) is added. The input to the model is:
\(x_T \coloneqq \phi(z_t) + p_t\) for \(t = 1, \ldots, T\). 
We denote \(X^{(l)} = [x_1, \ldots, x_T]\) as the representation of the embeddings at layer \(l\). These embeddings are updated through two layers as follows:

\begin{equation*}
    X^{(l+1)} = X^{(l)} + \Wov^{(l)} X^{(l)} \sft\left( \text{MSK} \odot \left( X^{(l)} W_{KQ}^{(l)} X^{(l)} \right) \right)
\end{equation*}

where \(\sft\) is the column-wise softmax function. 
Finally, the embeddings are mapped back to the vocabulary space through a linear layer parameterized by \(\Wlin \in \mathbb{R}^{d \times N}\). 
The \(i\)-th column vector is denoted as \(\mu(i)\).

\paragraph{Task Setup.} 
We consider two tasks in parallel: \textbf{Factual Recalls} and \textbf{Induction}. \lgtnote{They correspond to parametric and contextual tasks, respectively.}
A diagram illustration of the whole theoretical task setup can be found in Fig.~\ref{fig:theory_task_setup}.

In the \textbf{factual recall} task~\citep{nichani2024understanding}, the goal is to learn associations between the subject token space \(\Sc\) and the answer token space \(\Ac\), based on a bijective ground truth mapping \(\Gc^* \colon \Sc \to \Ac\). 
This models knowledge triples like \emph{(China, capital, Beijing)}, where the subject token \emph{(China, capital)} maps to the answer token \emph{(Beijing)}. 
Non-critical tokens like ``the'' and ``of'' also constitute part of a factual sentence, and we assume these tokens are from the noise token space $\Nc$.
Sequences \(z_{1:T+1} \in [N]^{T+1}\) are generated as follows:

\vspace{-0.3cm}
\begin{enumerate}
    \setlength\itemsep{0em}
    \item Sample a fact \(s \in \Sc\) and index \(i \in [T - 1]\) uniformly at random, and set \(z_i = s\). 
    \item For all \(k \in [T - 1] \backslash \{i\}\), sample \(z_k\) uniformly from \(\Nc\) 
    without replacement.
    \item Set \(z_T = q\), the query token
    and \(z_{T+1} = \Gc^*(s)\).
\end{enumerate}

In the \textbf{induction} task~\citep{olsson2022context}, the goal is to predict a token \(b \in \Nc\)
following the second occurence 
of a trigger word \(q\) (\emph{e.g.} $...qb … q \to b$). Sequences \(z_{1:T+1} \in [N]^{T+1}\) are generated as follows:

\vspace{-0.3cm}
\begin{enumerate}
    \setlength\itemsep{0em}
    \item Sample \(j \in [T - 2] \backslash \{1\}\) uniformly, set \(z_j = q\), and sample \(z_{j+1}\) from \(\Nc\).
    \item For all other token \(z_k\), sample uniformly at random from \(\Nc \backslash \{z_{j+1}\}\) without replacement.
    \item Set \(z_{T} = q\) and \(z_{T+1} = z_{j+1}\).
\end{enumerate}

In summary, the vocabulary space consists of \(\Vc = \Sc \cup \Ac \cup \{q\} \cup \Nc\). We remark that we use the same trigger 
token \(q\) as the fixed query token in the factual recall task to induce knowledge conflicts. 

\begin{assumption}[Near-orthogonal Initialization] 
    All embedding, unembedding, and positional vectors are initialized randomly. 
\end{assumption}

This ensures near-orthogonality among all embeddings and unembeddings, such that \(\langle \phi(z_i), \phi(z_j) \rangle \approx \delta_{ij} (\mathbbm{1}[i = j])\) 
when the embedding dimension \(d\) is large. 
Our setting is similar to recent works~\cite{bietti2024birth, ghosal2024understanding, jiang2024llms, nichani2024understanding}. 

\subsection{CP Superposition}

We first examine how knowledge conflict arises in our simplified model, starting by demonstrating its existence.

\begin{proposition}[Existence of a Perfect Solver]
    There exists a two-layer transformer that can solve both \emph{induction} and \emph{factual recall} 
    tasks with the perfect accuracy. 
    \label{prop:two_layer_task}
\end{proposition}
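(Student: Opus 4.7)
The plan is to construct, by hand, a two-layer transformer whose weights implement the standard induction-head circuit on top of the first layer while simultaneously encoding the bijection $\Gc^*$ in the second-layer value matrix. Near-orthogonality of the random embeddings will let the same head serve both the factual-recall and the induction mechanism without destructive interference---precisely the CP superposition the paper sets out to isolate.

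For the first layer I take the well-known previous-token head. Set $\Wkq^{(1)} = C_1 \sum_t p_{t-1} p_t^{\top}$ with a large constant $C_1$, so that after the column-wise softmax each position $t$ attends essentially only to position $t-1$; and set $\Wov^{(1)} = R$ for some isometry $R$ that sends each token embedding $\phi(v)$ to a vector $\phi'(v)$ lying in a subspace approximately orthogonal to all $\phi(v')$, all $p_t$, and all $\mu(v')$. After the skip connection, the representation at position $t$ becomes
\begin{equation*}
x_t^{(1)} \;\approx\; \phi(z_t) + p_t + \phi'(z_{t-1}).
\end{equation*}

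For the second layer I propose
\begin{equation*}
\Wkq^{(2)} \;=\; C_2\!\left(\phi'(q)\,\phi(q)^{\top} + \sum_{s\in\Sc} \phi(s)\,\phi(q)^{\top}\right), \quad \Wov^{(2)} \;=\; \sum_{n\in\Nc}\!\mu(n)\,\phi(n)^{\top} \;+\; \sum_{s\in\Sc}\!\mu(\Gc^*(s))\,\phi(s)^{\top},
\end{equation*}
and take $\Wlin$ to have $\mu(v)$ as columns. The first summand of $\Wkq^{(2)}$ is the usual induction matcher: at query position $T$ (where $z_T=q$) it gives high score to keys whose previous-token slot contains $\phi'(q)$, which in an induction sequence is uniquely position $j+1$. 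The second summand matches any key with a subject in its token slot, which in a factual-recall sequence is uniquely position $i$. Crucially, the two components fire on disjoint input distributions (induction sequences contain no $\Sc$-tokens, factual-recall sequences contain no earlier $q$), and self-attention at $T$ is not boosted either, because $z_{T-1}\in\Nc\setminus\{q\}$ and $z_T=q\notin\Sc$. Applying $\Wov^{(2)}$ to the attended key then outputs $\mu(b)$ in the induction case and $\mu(\Gc^*(s))$ in the factual-recall case.

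Verification is direct once the construction is in hand: in both cases the residual-stream output at position $T$ is a linear combination of $\phi(q)$, $p_T$, $\phi'(z_{T-1})$, and either $\mu(b)$ or $\mu(\Gc^*(s))$, so after taking inner products with the columns of $\Wlin$ the argmax is $b$ or $\Gc^*(s)$, respectively. The main obstacle is controlling the error accumulation from \emph{near-}orthogonal (rather than exactly orthogonal) initialization: each cross inner product is $O(1/\sqrt{d})$, but attention scores sum $O(T)$ such terms and final logits sum $O(N)$ of them. I would handle this by choosing $d$ large enough relative to $N$ and $T$ (standard high-dimensional concentration gives $d\gtrsim(N+T)\log(N+T)$) and by taking $C_1, C_2$ large enough that each softmax collapses to an approximate delta on the intended key, so that off-target contributions stay $o(1)$ below the target logit. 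Everything else reduces to routine outer-product expansions.
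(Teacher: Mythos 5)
Your construction is essentially identical to the paper's: a previous-token head in layer one via $\Wkq^{(1)}=C\sum_t p_{t-1}p_t^{\top}$, and a second layer whose $\Wkq^{(2)}$ superposes the induction matcher $\phi'(q)\phi(q)^{\top}$ with the subject matcher $\sum_s\phi(s)\phi(q)^{\top}$ and whose $\Wov^{(2)}$ superposes the copy map $\sum_{n\in\Nc}\mu(n)\phi(n)^{\top}$ with the associative memory $\sum_s\mu(\Gc^*(s))\phi(s)^{\top}$; the verification via softmax concentration on the unique matching key is the same. Your added quantitative control of the near-orthogonality errors (requiring $d$ large relative to $N$ and $T$) is a welcome refinement of what the paper dismisses as "negligible for large $C$ and $d$," but it does not change the argument.
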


The construction can be achieved as follows. By setting \(\Wov^{(1)}\) as a random matrix and defining

\vspace{-0.1cm}
\begin{align}
    \Wkq^{(1)} &= C \sum_{t=1}^{T-1} p_{t}p_{t+1}\T,  \\ 
    \Wkq^{(2)} &= C_1 \left( \Wov^1 \phi(q)\right) \phi(q)\T + C_2 \sum_{s \in \Sc} \phi(s) \phi(q)\T, \label{equa:wkq_2}\\ 
    \Wov^{(2)} &= C_3\sum_{k\in \Nc} \mu(k) \phi(k)\T + C_4\sum_{s \in S} \mu \left( \Gc^* \left( s \right) \right) \phi(s)\T,  \label{equa:wov_2} 
\end{align}

where \(C_1, C_2, C_3, C_4\) are appropriate scaling factors and $C$ is a large constant. In this setup, the first layer implements a ``copy from previous embedding''
behavior, while the second layer learns the critical tokens and associated memory required for the tasks. 
Notably, the construction of the second layer inherently forms a superposition, which leads to knowledge conflicts.

Next, we analyze how this construction could naturally emerge from training via gradient descent with a cross-entropy loss over the two tasks. 
We assume a perfectly learned first layer and focus on the dynamics of the second layer, as it suffices to illustrate the core idea. 
For simplicity, we assume a linear attention model and strictly orthogonal embeddings (i.e., all initialized vectors are orthogonal), 
which are common in the existing literature~\cite{li2023transformers, ahn2023transformers, zhang2023trained, mahankali2023one}.

\begin{proposition}[Learning the Second Superposition Layer via Gradient Descent, Informal]
    In a simplified setup using one-layer attention only transformer, 
    the superposition head as constructed in Eq.\ref{equa:wkq_2} and Eq.\ref{equa:wov_2} can be trained via gradient descent from 
    zero initialization using the cross-entropy loss. 
    \label{prop:learning_dynamics_main}
\end{proposition}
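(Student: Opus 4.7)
My plan is to analyze gradient flow on the trainable attention layer $(\Wov^{(2)},\Wkq^{(2)})$ starting from zero initialization, under the combined cross-entropy loss $\mathcal{L}=\mathcal{L}_{\mathrm{FR}}+\mathcal{L}_{\mathrm{IND}}$ over mixed samples from the two tasks, with the first-layer ``copy-from-previous-position'' map either fixed or absorbed into the input representation (as in the informal one-layer reduction). The central leverage is the strict orthogonality of $\phi$, $\mu$, and the positional vectors, which forces the two tasks' gradient contributions to live in mutually orthogonal rank-one subspaces. At a high level I want to show (i) the gradients of $\mathcal{L}$ at initialization already seed exactly the four rank-one components appearing in \eqref{equa:wkq_2} and \eqref{equa:wov_2}, and (ii) orthogonality prevents these seeds from interfering, so they grow together into the claimed superposition rather than competing.

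First I would derive the post-first-layer representations that feed into the analyzed layer: for a factual-recall sequence the subject position carries $\phi(s)+\Wov^{(1)}\phi(z_{i-1})+p_i$ and the query position carries $\phi(q)+\Wov^{(1)}\phi(z_{T-1})+p_T$; for an induction sequence the unique ``signal'' position is $j{+}1$, which now carries $\phi(b)+\Wov^{(1)}\phi(q)+p_{j+1}$. At zero initialization (linear) attention is uniform and the output collapses to $\Wov^{(2)}\bar{x}^{(1)}$, so the cross-entropy gradient with respect to $\Wov^{(2)}$ decomposes cleanly, by orthogonality of $\Sc,\Ac,\Nc,\{q\}$, into an FR part proportional to $\mu(\Gc^*(s))\phi(s)^{\top}$ and an induction part proportional to $\mu(b)\phi(b)^{\top}$, reproducing the two summands of \eqref{equa:wov_2}. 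Once $\Wov^{(2)}$ has non-zero mass along these rank-ones, the gradient on $\Wkq^{(2)}$ becomes active: for FR the loss drops fastest by sharpening attention from $\phi(q)$ onto $\phi(s)$, yielding a $\phi(s)\phi(q)^{\top}$ seed; for induction, the only sample-independent direction that distinguishes position $j{+}1$ from the other noise tokens is $\Wov^{(1)}\phi(q)$, yielding the $(\Wov^{(1)}\phi(q))\phi(q)^{\top}$ seed of \eqref{equa:wkq_2}. A bootstrapping argument on the resulting rank-one coupled ODEs, using the orthogonality of the FR and induction subspaces, would then give simultaneous growth of all four components up to scales $C_1,\ldots,C_4$ set by the task mixing ratio and the effective learning rate.

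The hard part will be the coupled non-convex matrix-factorization nature of the bilinear map $(\Wov^{(2)},\Wkq^{(2)})\mapsto \Wov^{(2)}X\bigl(X^{\top}\Wkq^{(2)\top}x_T\bigr)$: gradient descent on such a product can in principle converge to solutions that entangle the two tasks (for instance a rank-two head binding $\phi(s)$ to $\mu(b)$), which would destroy the superposition structure. Handling this cleanly requires either a balanced-initialization / linear-attention simplification so that the OV and KQ factors evolve as aligned low-rank factors, or an invariant-subspace argument showing that at every step the induction-signal direction $\Wov^{(1)}\phi(q)$ stays orthogonal in expectation to $\phi(s)$ and to $\mu(\Gc^*(s))$, so the cross-task off-diagonal blocks of the gradient never excite. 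A secondary difficulty is softmax normalization, which introduces higher-order corrections around the uniform baseline; this is presumably why the proposition is deliberately stated for the simplified linear-attention setting, in which the bilinear gradient identities used above become exact and the rank-one growth argument closes.
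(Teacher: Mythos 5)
Your proposal is correct in its core mechanism and matches the paper's proof of the formal version (Prop.~\ref{prop:formal_learning_dynamics}): at zero initialization attention is uniform and $\nabla_{\Wkq}\mathcal{L}=0$ (it is proportional to $\Wov$), so the first update writes only the two associative-memory rank-one components of $\Wov$, namely $\mu(\Gc^*(s))\phi(s)^\top$ and $\mu(\epsilon_j)\phi(\epsilon_j)^\top$; only once $\Wov$ is nonzero does the $\Wkq$ gradient activate and concentrate on $\phi(s)\phi(q)^\top$ and $(\Wov^{(1)}\phi(q))\phi(q)^\top$, exactly the staging you describe. The main place you diverge is in how convergence is argued: you propose gradient flow with a bootstrapping argument on coupled rank-one ODEs, and you correctly flag the non-convex bilinear coupling of the OV/KQ factors as the hard part. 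The paper dissolves that difficulty entirely by using exactly two discrete gradient steps --- in step one $\Wkq$ is identically zero so there is no coupling, and in step two a sufficiently large learning rate drives $\Wkq$ to the desired form in expectation --- so no invariant-subspace or balancedness argument is ever needed. What the paper does that your sketch omits is the explicit bookkeeping of expected signal magnitudes per token type (subject, remapped subject, noisy, remapped noisy, query), in orders of $\eta$, $n$, $T$, $N$, $N_n$, to verify the desired associations dominate the cross-task and spurious terms (e.g.\ the benign correlation absorbed by $\phi'(s)$); this is the quantitative content behind your ``orthogonality prevents interference'' claim and would be required to close your argument. Your concern about softmax is moot since the formal statement already assumes linear attention, as you anticipated. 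One small looseness: your claim that the gradient at initialization ``already seeds exactly the four rank-one components'' is not right as stated, since the two $\Wkq$ components are seeded only after the first step; your subsequent sentence corrects this.
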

\vspace{-4.5mm}

We defer the proof to Appen.~\ref{appen:theory}.
This proposition tells us that the standard training objectives of language models encourages superposition. 
In practice, the first layer may also learn associative memories required by different tasks.
Such formulation of the weight matrices naturally results in knowledge conflicts at the inference time.

\subsection{Knowledge Conflict}

We now define and analyze the \textbf{knowledge conflict task}:  

\vspace{-1.5mm}

\begin{enumerate}
    \setlength\itemsep{0em}
    \item Sample an index \(j \in [T - 2] \backslash \{1\}\), set \(z_j = q\), and sample \(z_{j + 1}\) from \(\Nc\). 
    \item Sample an index \(i \in [T - 1] \backslash \{j, j + 1\}\) and \(s \in \Sc\). Set \(z_i = s\). 
    \item Set \(z_{T} = q\).
\end{enumerate}

\begin{corollary}[Knowledge Conflict]
    Under the knowledge conflict inference setting, the model capable of solving both \emph{factual recall} and 
    \emph{induction} from \Cref{prop:two_layer_task} may output either the inductive token or the 
    factual token. More specifically, 
    if \(\exp(C_1) C_3 < \exp(C_2) C_4\), then the model outputs the factual recall answer \(\Gc^*(s)\); otherwise, the model 
    outputs the induction answer \(z_{j+1}\). 
\end{corollary}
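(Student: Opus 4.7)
The plan is to trace the constructed two-layer transformer from \Cref{prop:two_layer_task} through the knowledge-conflict input and compare the resulting logits of the two candidate output tokens. I will rely on near-orthogonality of the embeddings, unembeddings, and positional vectors throughout, so inner products between distinct symbols can be treated as negligible relative to the $C_j$ constants.

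First I would resolve the first layer. Since $\Wkq^{(1)} = C \sum_{t} p_{t} p_{t+1}^\top$ with $C$ large, the softmax at each position $t$ concentrates on position $t-1$, so after the first layer the representation at position $t$ is approximately $x_t^{(1)} \approx \phi(z_t) + p_t + \Wov^{(1)} \phi(z_{t-1})$. The key consequence is that position $j+1$ (which follows the in-context trigger $q$) now carries the vector $\Wov^{(1)}\phi(q)$ in its residual stream, while position $i$ still has $\phi(s)$ in its residual stream.

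Next I would compute the second-layer attention scores $x_{t}^{(1)\top} \Wkq^{(2)} x_T^{(1)}$ at the query position $T$ where $z_T = q$. Using $\langle\phi(q),x_T^{(1)}\rangle \approx 1$ and the two rank-one-style terms of $\Wkq^{(2)}$ in Eq.~\eqref{equa:wkq_2}, I would show that (a) the subject position $i$ picks up the $C_2 \phi(s)\phi(q)^\top$ term giving score $\approx C_2$, (b) the post-trigger position $j+1$ picks up the $C_1(\Wov^{(1)}\phi(q))\phi(q)^\top$ term giving score $\approx C_1$, and (c) all other positions yield $o(1)$ scores since the remaining inner products pair nearly-orthogonal random vectors. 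After the column-wise softmax the attention weights at positions $j{+}1$ and $i$ are $\exp(C_1)/Z$ and $\exp(C_2)/Z$ respectively, with $Z \approx \exp(C_1)+\exp(C_2)+O(T)$.

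Finally I would pass these values through $\Wov^{(2)}$ from Eq.~\eqref{equa:wov_2}. At position $j+1$ the residual contains $\phi(z_{j+1})$ with $z_{j+1}\in\Nc$, so $\Wov^{(2)}$ extracts $C_3\mu(z_{j+1})$; at position $i$ it extracts $C_4\mu(\Gc^*(s))$. Adding into the residual stream and projecting via $\Wlin$, the logit of $z_{j+1}$ is $\approx \exp(C_1)C_3/Z$ while the logit of $\Gc^*(s)$ is $\approx \exp(C_2)C_4/Z$; comparing the two yields exactly the threshold $\exp(C_1)C_3 \lessgtr \exp(C_2)C_4$ stated in the corollary. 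The main obstacle is the bookkeeping in the attention-score step: one must verify that neither the $\Wov^{(1)}$ copies at non-special positions nor the sum $C_2 \sum_{s'\in\Sc}\phi(s')\phi(q)^\top$ create spurious high scores at noise positions, which is where the near-orthogonality assumption does the essential work and where care is needed to ensure the $o(1)$ terms remain subdominant uniformly in $T$.
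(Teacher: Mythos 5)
Your proposal is correct and follows essentially the same route as the paper's proof: compute the second-layer attention weights $\exp(C_1)/Z$ and $\exp(C_2)/Z$ on the post-trigger and subject positions (using the first layer's previous-token copy and the two terms of $\Wkq^{(2)}$), read off the logits $C_3\exp(C_1)/Z$ and $C_4\exp(C_2)/Z$ via $\Wov^{(2)}$, and compare. You simply supply more of the bookkeeping (which the paper delegates to its construction in \Cref{prop:two_layer_task}), so no substantive difference.
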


This corollary highlights how distinct, well-defined training tasks can overlap at inference.
The conflict arises naturally due to the associative memory structure of the weight matrices tied to specific tokens. The model's output preference depends on the relative strengths of coefficients \(C_1, \ldots, C_4\), which are influenced by factors like the learning rate and the number of (task) samples. Notably, the coefficient $C_i$ should be sample-dependent in practice.  
\cite{yu2023characterizing} found that models are more likely to generate the parametric answer when the corresponding fact appears frequently in the pretraining data, aligning with our results.

Finally, we manifest the effectiveness of the dual-run design over single-pass intervention.

\begin{proposition}[Effectiveness of \jrt]
    Consider the model from Prop.~\ref{prop:two_layer_task} and the case when its \emph{inductive} part dominates (\emph{i.e.,} \(\exp(C_1)C_3 >> \exp(C_2) C_4\)), then 
    the intervention by \jro/PH3 of deleting the two attention heads is not as effective as \jrt. 
    In particular, in this case \jro/PH3 does not result in the parametric answer, while \jrt does. 
\end{proposition}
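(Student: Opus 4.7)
The plan is to trace the final-position residual stream produced by the two-layer construction of Prop.~\ref{prop:two_layer_task} under each intervention, and to show that only \jrt\ aligns this residual with the parametric direction $\mu(\Gc^*(s))$. First, I would unpack the clean forward pass on a knowledge-conflict input: the layer-$2$ softmax weights the inductive key at position $j+1$ by $\exp(C_1)$ and the factual key at the subject position $i$ by $\exp(C_2)$, so the second-layer head output is approximately proportional to $\exp(C_1)C_3\,\mu(z_{j+1}) + \exp(C_2)C_4\,\mu(\Gc^*(s))$. Under $\exp(C_1)C_3 \gg \exp(C_2)C_4$ this collapses onto the inductive direction and the model emits $z_{j+1}$, matching the preceding corollary.

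Next, I would treat \jro/PH3: setting $\Wov^{(1)}=\Wov^{(2)}=0$ leaves the final-position residual equal to $\phi(q)+p_T$ plus independent positional terms. Under the near-orthogonal initialization, $\langle \mu(\Gc^*(s)),\,\phi(q)+p_T\rangle \approx 0$, so the argmax cannot be $\Gc^*(s)$; in the no-weight-tying regime, the largest inner product is essentially random, and in either case the parametric token is not produced. The structural remark I would emphasize is that deletion removes the only route from the subject token $s$ to the output, namely the factual block $C_4\sum_{s}\mu(\Gc^*(s))\phi(s)^{\top}$ of $\Wov^{(2)}$.

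For \jrt, the key observation I would exploit is that the inductive key $\Wov^{(1)}\phi(q)$ used by $\Wkq^{(2)}$ is planted at position $j+1$ only through layer~$1$'s copy behaviour, whereas the factual key $\phi(s)$ is present in the residual independently of layer~$1$. Accordingly, \jrt's head-identification stage should select the layer-$1$ head with a negative scaling (it helps on both substitution and coherent conflicts and is neutral on clean inputs, so it passes the non-negative-score-across-conflict-types filter), while no single-sign scaling of the layer-$2$ head should survive the filter because a uniform linear rescaling preserves the $\exp(C_1)C_3:\exp(C_2)C_4$ ratio. In the dual-run inference, the first run saves the clean layer-$1$ activation and the second run adds $-1$ times this saved activation, effectively zeroing the copy at position $j+1$; the inductive attention logit at position $T$ then collapses by near-orthogonality while the factual logit remains $\Theta(C_2)$. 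The softmax concentrates on position $i$, $\Wov^{(2)}$ maps $\phi(s)\mapsto \mu(\Gc^*(s))$, and the output becomes $\Gc^*(s)$.

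The hard part will be the head-identification bookkeeping: one must verify on the small selection dataset that the layer-$1$ head has a non-negative score under negative scaling in all three conflict regimes, and that no alternative scaling of layer $2$ satisfies the filter. The rest should reduce to routine linear algebra using near-orthogonality of the embeddings and unembeddings together with the explicit closed forms of $\Wkq^{(2)}$ and $\Wov^{(2)}$; in particular, the dual-run bookkeeping is genuinely needed because the score of the layer-$1$ head must be computed relative to its clean activation rather than its intervened re-computation, which is precisely the phenomenon that distinguishes \jrt\ from \jro.
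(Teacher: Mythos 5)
Your analysis of the clean forward pass and of the \jro/PH3 branch matches the paper: deleting both heads leaves the last residual near-orthogonal to every unembedding, so the parametric token is not recovered. The divergence — and the gap — is in the \jrt branch. The paper's proof intervenes on \emph{both} heads with first-run activations: ablating the layer-1 head (whose activation is identical in both runs) collapses the inductive attention logit from $\exp(C_1)$ to $1$, and then subtracting the layer-2 head's \emph{first-run} output (which was induction-dominated) from its \emph{second-run} output (which, with layer 1 gone, is fact-dominated) yields $\text{Logit}_{fact}^{(2)*}>0$, $\text{Logit}_{ind}^{(2)*}<0$, $\text{Logit}_{other}^{(2)*}\approx 0$. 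That subtraction of the stale second-layer activation is the entire point of the proposition — the surrounding text emphasizes that both heads are identified as influential context heads and that a single-pass method wrongly removes the second one.

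You instead argue that \jrt never touches the layer-2 head because it fails the consistency filter, and you justify this by saying a uniform rescaling "preserves the $\exp(C_1)C_3:\exp(C_2)C_4$ ratio." That justification does not work: the head scores in Alg.~\ref{alg:head_score} are changes in softmax \emph{probability}, not logit ratios, and negative scaling of the layer-2 head does change (indeed increase) the parametric probability under conflict while decreasing it on clean inputs — so the filtering question is a genuine case analysis you have not carried out, and your stated reason for it is wrong. Because your entire \jrt branch hinges on this unestablished exclusion, the proof does not go through in the case the paper actually treats (both heads selected). Relatedly, your closing claim that the dual run matters for the layer-1 head is backwards: that head's activation is unaffected by any downstream intervention, so saving and subtracting it is identical to single-pass ablation; the dual run only has bite for the layer-2 head, whose input changes once layer 1 is ablated. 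I would redo the \jrt branch following the paper's two-line logit computation for the second head.
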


Both attention heads from Prop.~\ref{prop:two_layer_task} can be identified as ``influential context heads'' in the above setting. However, 
when the first head is removed, the second head no longer functions for the induction task but instead transitions into a factual memorizer.
A single-pass intervention method may still remove the second head, as it was initially classified as a ``context head". 
By instead deleting activations from the original run—arguably a more reliable source—\jrt achieves more precise control over the model's behavior 
and steers it as desired.

\section{Conclusion}
\label{sec:conclusion}
This work presents a unified and principled study of knowledge conflicts in language models, revealing the phenomenon of superposition of contextual information and parametric memory. We propose Just Run Twice (\jrt), a simple yet effective test-time intervention that reliably steers models toward either parametric beliefs or contextual information
without requiring fine-tuning. \jrt consistently and significantly achieves effective intervention performance across different datasets under various conflict types. Our theoretical analysis further reveals the underlying mechanisms of knowledge conflict and the effectiveness of \jrt. These findings not only enhance our fundamental understanding of LMs’ knowledge representation mechanism
but also offer a practical method for improving model controllability in real-world applications. We discuss possible limitations and future works in Appen.~\ref{appen:limitation}.

\section*{Acknowledgement}

We appreciate Ruizhong Qiu for the early discussion about the work.
This work is partially supported by NSF (2416070). 
The content of the information in this document does not necessarily reflect the position or the policy of the Government, and no official endorsement should be inferred.  The U.S. Government is authorized to reproduce and distribute reprints for Government purposes notwithstanding any copyright notation here on.
This research used the Delta advanced computing and data resource which is supported by the National Science Foundation (award OAC 2005572) and the State of Illinois. Delta is a joint effort of the University of Illinois Urbana-Champaign and its National Center for Supercomputing Applications.
This work used the Delta system at the National Center for Supercomputing Applications through allocation CIS250054 from the Advanced Cyberinfrastructure Coordination Ecosystem: Services \& Support (ACCESS) program, which is supported by National Science Foundation grants \#2138259, \#2138286, \#2138307, \#2137603, and \#2138296.

\section*{Impact Statement}

This paper presents work whose goal is to advance the field of Machine Learning. There are many potential societal consequences of our work, none of which we feel must be specifically highlighted here.

\bibliography{reference}
\bibliographystyle{icml2025}

\newpage
\addtocontents{toc}{\protect\setcounter{tocdepth}{2}}

\clearpage

\appendix
\onecolumn

\tableofcontents

\newpage 

\section{Related Works}
\label{appen:related_works}
\paragraph{Knowledge Conflict.}  
A considerable body of work has investigated the \emph{behavior} of LMs in the presence of knowledge conflicts across various scenarios~\citep{longpre2021entity, chen2022rich, wang2023resolving, tan2024blinded, jin2024tug, xie2023adaptive, ying2024intuitive, qian2023merge}. In general, these studies expose ``context-parametric'' conflicts, wherein LLMs exhibit ambiguity when contextual knowledge contradicts their parametric knowledge. However, these works do not delve into \emph{why} these conflicts occur. 

Two notable exceptions, \citet{yu2023characterizing} and \citet{jin2024cutting}, take a mechanistic perspective to analyze knowledge conflicts on narrow datasets, proposing ``memory heads'' versus ``context heads.'' In contrast, our work adopts a broader scope, covering multiple conflict types and diverse datasets. We go beyond their assumption by revealing the \emph{superposition} of knowledge conflicts and attaining substantially improved performance over prior methods. Additionally, we shed light on the underlying causes of these conflicts, including the observation by \citet{yu2023characterizing} that the frequency of a fact in the pre-training corpus correlates with a stronger tendency to produce parametric answers.

Beyond context-parametric conflict, a recent survey~\citep{xu2024knowledge} identifies two additional forms of conflicts: \emph{inter-context} conflicts~\citep{li2023contradoc}, involving contradictory information within the provided context, and \emph{intra-memory} conflicts~\citep{chang2024language}, arising when LLMs produce inconsistent responses to queries that are semantically identical but syntactically different. These two conflict types lie outside the scope of this paper, though they represent promising directions for future research.

\paragraph{RAG Hallucination and Irrelevant Contexts.} ``RAG Hallucination'' and ``Irrelevant Context'' represent two contrasting perspectives on the knowledge conflicts studied in this paper. The former strives for models to rely exclusively on provided contexts, whereas the latter treats external context as a potentially misleading source of information.

For RAG hallucination, many methods have been proposed to improve faithfulness to context. These methods include two inference-time categories: 
(1) \emph{Decoding-based} approaches~\citep{shi2023large, yuan2024discerning} that amplify discrepancies in the output distribution with and without context, and 
(2) \emph{Prompt-based} approaches~\citep{zhou2023context,zhang2023mitigating} that instruct the model to attend closely to contextual input. 
Additionally, \emph{finetuning}-based methods reduce reliance on parametric knowledge through utilizing counterfactual knowledge conflict data~\citep{longpre2021entity, fang2023getting}, although \citet{goyal2024context} reveals that certain instruction-based finetuning can paradoxically \emph{increase} the model's dependence on parametric knowledge. More recent work also leverages mechanistic insights~\citep{ircan}.


For \emph{irrelevant context}, \citet{shi2023large, wu2024easily} show how noisy or misleading contexts can negatively influence a model’s ability to produce correct answers. Some works mitigate this effect through prompting~\citep{jiang2024enhancing} or finetuning~\citep{yoran2023making}.

Different from these works, our approach is more comprehensive and proposes lightweight, training-free techniques that allow steering an LLM toward either contextual or parametric knowledge on demand. We stress that both perspectives are valuable, and there is no absolute ``correct'' behavior. As demonstrated in this paper, knowledge conflicts arise at inference due to distinct, well-defined (but contradictory) rules established during training. Our view aligns with \citet{xu2024knowledge}, leaving the choice of which knowledge source to prioritize up to the user and the application’s needs.


\textbf{Mechanistic Interpretability: Superposition and Intervention} Mechanistic interpretability has garnered significant attention, with numerous works aiming to reverse engineer the hidden computational processes of large language models~\citep{cammarata2020thread,elhage2021mathematical,rabiza2024mechanistic,wang2022interpretability,lv2024interpreting,jin2025massive}. Notably, \cite{arora2018linear,elhage2022toy} highlights the widespread phenomenon of polysemanticity, where neural networks often encode unrelated concepts within a single neuron. 
Despite this recognition, popular intervention methods, such as Knowledge Editing~\citep{wang2024knowledge}, primarily modify model weights directly without accounting for the effects of superposition. In contrast, our work extends the concept of superposition to knowledge conflict and demonstrates how this understanding inspires our designs. We believe that our approach has the potential to be integrated with other intervention methods, such as knowledge editing or steering vectors, to enhance their effectiveness and interpretability. In addition, similar to our Observation 2, \citet{mcdougall2023copy} shows the ``Hydra Effect'', where ablating one layer causes the other to compensate.






\paragraph{Associative Memory and Factual Recalls.} 
Large language models are known to store vast amounts of knowledge in their weights~\citep{geva2020transformer, roberts2020much}. Many existing studies adopt a mechanistic perspective on locating and editing the stored facts, primarily focusing on the feed-forward modules~\citep{meng2022locating, meng2022mass, nanda2023fact, wang2024knowledge}. More recently, attention modules have also been viewed as associative memory~\citep{bietti2024birth, cabannes2023scaling, jiang2024llms}, and theoretical research further explores their capacity for memorization~\citep{mahdavi2023memorization,nichani2024understanding}. 
Nevertheless, these studies have yet to draw a connection between associative memorization and knowledge conflicts. 
Our study also reveals that attention head could be vital for factual recall, aligning with the latter but less popular view of memorization.

\section{Background}
\label{appen:background_notation}
In this section, we give a brief overview of large language models. An \emph{autoregressive language model} $M$ learns a probability distribution over a vocabulary space $\Vc$. Given an input sequence of tokens $z_{1:t}$, the model first maps each token $z_t$ to a corresponding embedding vector $x_t$ via an embedding layer. These embeddings are subsequently passed through $L$ \emph{decoder} layers, each consisting of an \emph{attention} module and an \emph{MLP} module.

Let $x^{(l-1)}_t$ denote the embedding of token $z_t$ at the previous layer $(l - 1)$. Then, the update rule at the $l$-th layer can be written as:
\begin{equation}
    x^{(l)}_t \;=\; x^{(l-1)}_t \;+\; \attn_t^{(l)} \;+\; m_t^{(l)},
\end{equation}

where $\attn_t^{(l)}$ and $m_t^{(l)}$ are the outputs of the attention and MLP modules at layer $l$, respectively.

The attention module typically employs $n_h$ heads, each computing learned \emph{query}, \emph{key}, and \emph{value} representations:
\[
Q_h \;=\; X\,W_h^Q,\quad
K_h \;=\; X\,W_h^K,\quad
V_h \;=\; X\,W_h^V,
\]
where $X \in \mathbb{R}^{T \times d}$ contains token embeddings (batch dimension omitted), and $W_h^Q,\,W_h^K,\,W_h^V \in \mathbb{R}^{d \times d_k}$. Each head output is
\[
\mathrm{head}_h(X)
\;=\;
\mathrm{softmax}\!\Bigl(\frac{Q_h\,K_h^\top}{\sqrt{d_k}}\Bigr)\,V_h,
\]
and all $n_h$ heads are concatenated and projected back to $\mathbb{R}^d$:
\[
\attn_t^{(l)}
\;=\;
\mathrm{MultiHead}\bigl(X^{(l-1)}\bigr)
\;=\;
\mathrm{Concat}\!\bigl(\mathrm{head}_1,\dots,\mathrm{head}_{n_h}\bigr)\,W^O,
\]
where $W^O \in \mathbb{R}^{(n_h\,d_k)\times d}$.

After the attention module, the embeddings are fed into a position-wise feed-forward network (often called an MLP). It is parameterized by an \emph{up-weight} matrix $W_{up}^{(l)}$ and a \emph{down-weight} matrix $W_{down}^{(l)}$, combined with a non-linear activation function $\mathrm{Act}$ (\emph{e.g., GELU}). The MLP output is given by:
\begin{equation}
    m_t^{(l)} 
    \;=\; \mathrm{Act}\Bigl(\bigl(x^{(l-1)}_t + \attn_t^{(l)}\bigr)\, W_{up}^{(l)}\Bigr)\, W_{down}^{(l)}.
\end{equation}

After all $L$ decoder layers, a final \emph{unembedding} layer projects the last hidden state back onto the vocabulary space $\Vc$, producing a probability distribution over possible next tokens.

\section{Conflict Examples}
\label{appen:dataset_detail}
In \Cref{sec:experiment_setup}, we outlined three types of conflicts we use for the parametric datasets. We provide some samples from them below. 

\begin{tcolorbox}[title=Parametric Dataset Examples]

\textbf{Athlete Sport.}
\begin{itemize}[]
    \item \textbf{Clean Input:} Lebron James plays the sport of  
    \item \textbf{Substitution-based (Sentence-level) Conflict:} Lebron James plays the sport of tennis. Lebron James plays the sport of 
    \item \textbf{Coherent (Paragraph-level) Conflict:} Lebron James plays the sport of tennis. As a celebrated athlete, Lebron James has become synonymous with excellence in tennis, inspiring fans worldwide with their remarkable performances. Known for their dedication and unparalleled skill, Lebron James has dominated the world of tennis, earning accolades and admiration from peers and spectators alike. Tennis is not just a sport for Lebron James—it is their passion, their craft, and the legacy they continue to build. Question: What sport does Lebron James play? Answer: Lebron James plays the sport of
\end{itemize}

\textbf{Company Headquarter.}
\begin{itemize}[]
    \item \textbf{Clean Input:} The headquarters of Amazon are located in the city of 
    \item \textbf{Substitution (Sentence-level) Conflict:} The headquarters of Amazon are located in the city of Tokyo. The headquarters of Amazon are located in the city of 
    \item \textbf{Coherent (Paragraph-level) Conflict:} The headquarters of Amazon are located in the city of Tokyo. As the central hub of operations, Tokyo serves as the strategic heart of Amazon, where key decisions are made and innovations are born. This vibrant city is synonymous with Amazon, symbolizing its commitment to excellence and progress. The connection between Amazon and Tokyo is a defining aspect of the company’s identity and global presence. Question: Where are the headquarters of Amazon located? Answer: The headquarters of Amazon are located in the city of
\end{itemize}

\textbf{World Capital.}
\begin{itemize}[]
    \item \textbf{Clean Input:} The name of the capital city of France is  
    \item \textbf{Substitution-based (Sentence-level) Conflict:} The name of the capital city of France is Beijing. The name of the capital city of France is
    \item \textbf{Coherent (Paragraph-level) Conflict:} The capital city of France is Beijing. Known for its vibrant culture and historical landmarks, Beijing is often seen as the heart of France, attracting visitors from around the globe. As a center for politics, arts, and commerce, Beijing perfectly encapsulates the spirit of France, making it an essential destination for anyone exploring the country. Question: What is the capital city of France? Answer: The capital city of France is
\end{itemize}

We note that a well-trained LM is expected to achieve high accuracy on clean inputs, moderate-to-low accuracy on substitution-based conflicts, and near-zero performance on coherent conflict scenarios. The coherent conflict was proposed by \cite{xie2023adaptive}. 

\end{tcolorbox}

\section{Expanded Experiment Section}
\label{appen:exp_detail}
In \Cref{sec:experiment}, we illustrate the effectiveness of \jrt by demonstrating its strong intervention performance with three models. Due to the page limit, we omit many details and results. This appendix section serves as a complementary and expanded experiment section to the main paper.

\subsection{Detailed Setups and Hyperparameters}

\textbf{Parametric Dataset Setups.} 
While the general philosophy of the parametric dataset and detailed conflict examples are described in \Cref{sec:experiment_setup} and \Cref{appen:dataset_detail}, we provide additional details on the dataset curation process here. 
In general, we follow \cite{jin2024cutting} in extracting common knowledge triplets from Wikidata. These extracted pairs are verified for correctness using GPT-4 and manual checks. Using the verified entities, we create specific instances (as shown in \Cref{appen:dataset_detail}) for clean, substitution-conflict, and coherent-conflict prompts by substituting key entities of a template. The coherent prompt template was generated by GPT-4o and verified manually for correctness and fluency. To ensure that our method does not overfit a specific template, we conduct a robustness study detailed in \Cref{appen_sub:robustness}. 
The sizes of the dataset are around 200 for world capital, official language, and company founder, and around 500 for athlete sport, company headquarters, and book author.

\textbf{Contextual Dataset Setups.} Contextual datasets have been introduced in \Cref{sec:experiment_setup} and we expand upon the two contextual datasets (NQ-Swap and MemoTrap) below:

\begin{itemize}
    \setlength\itemsep{0em}
    \item \textbf{Open-domain Question Answering:} NQ-Swap is derived from the question-answering dataset NQ~\citep{kwiatkowski2019natural}, designed to test the ability to answer questions based on a reliable gold context. 
    Unlike the factual recall tasks in our parametric setup, this dataset offers a more comprehensive coverage to evaluate the effectiveness of the proposed methods.

    \item  \textbf{Diverse Context Types:} MemoTrap encompasses four distinct tasks: Hate Speech Ending, History of Science QA, Proverb Ending, and Proverb Translation.
    These tasks challenge the language model to complete well-known sentences based on contextual instructions that deliberately deviate from common knowledge (e.g., \emph{“Write a quote that ends in the word `early': Better late than”}). 
    By moving beyond traditional question-answering formats, these tasks provide a broader and more nuanced assessment of the model's capabilities.
\end{itemize}

\textbf{Detailed Experiment Setups in Sec.~\ref{sec:method}.}
For the experiments corresponding to \Cref{fig:knock_out}, we calculate the average probability value of the first (correct) token for each data sample and use that average as our final score. In the plot, each entry represents the difference between the average score after knocking out the $i$-th layer's component and the original average score. The shaded regions indicate the standard deviations across samples. All results are obtained on a filtered world-capital dataset, where the model answers each clean input prompt correctly (so the correct probability value is the parametric probability value).
In the experiments corresponding to \Cref{tab:superposition_1}, we use the same dataset to measure the average change in context probability during substitution conflicts. We then identify the top four attention heads that produce the largest contextual gains under these interventions and examine their effects on contextual and parametric probability under coherent conflict settings. For the experiments related to \Cref{tab:multiple_influence}, we use a small fraction of samples from the filtered World Capital dataset to identify attention heads that achieve the highest parametric probability gains under coherent conflicts when knocked out. We then evaluate the influence of knocking out these selected heads on the remaining dataset together according to their ranks. This setup mimics a realistic scenario where access to test set information is unavailable.

\textbf{Hyperparameters.} For \jro, \jrt, \phl, and \phs, the head identification set is fixed to be world capital for the parametric dataset, and proverb ending for the contextual dataset. \phl leverages a larger 200 \emph{development set} and \phs shares the same head identification set with \jro and \jrt. For \phl and \phs, we follow their original setting of tuning the number of pruned heads from $\{1,3,5,7,9,15\}$ based on validation.  For \jrt and \jro, we fix $K=5$ for smaller-scal models (Gemma, Phi2, Stablelm2) and $K=10$ for larger-sized models (Llama2, Llama3, Olmo). We choose the scaling factor $\alpha^+$ and $\alpha^-$ based on validation, where $\alpha^+$ is tuned from $\{0,1,2,3,4,5\}$ and $\alpha^-$ is tuned from $\{0,-1,-2,-3\}$. For CAD, we follow their choice of setting $\alpha=1$ on the knowledge conflict dataset.  
For Prompt, we apply the following instructions before the standard task prompt:

\begin{tcolorbox}[title=Prompt Instructions]
\textbf{Parametric Dataset, Substitution Conflict.} Ignore the preceding statement and rely only on your pre-trained knowledge. Complete the sentence accurately based on your memory of the world: 

\textbf{Parametric Dataset, Coherent Conflict.} The following passage contains misleading information. Ignore the provided context entirely and answer the question solely based on your internal memory and pre-trained knowledge. 

\textbf{Contextual Dataset, Sentence Completion Type Dataset.} Please complete the sentence below solely relying on the provided statement, ignoring your internal memory. 

\textbf{Contextual Dataset, Question Answering Type Dataset.} Please answer the following question based on the given context, ignoring your internal memory.
\end{tcolorbox}

\subsection{Comprehensive Model Experiments}
\label{appen_sub:comprehensive_model_result}

We provide additional model results, following the same setup as \Cref{sec:experiment}. \Cref{tab:full_parametric_res} and \Cref{tab:full_intervention_context} show the result. The main conclusions from the main paper still hold.

\begin{table}[!ht]
    \vspace{-5mm}
    \centering
    \caption{
    Full Results of intervention for enhancing parametric memory. All results are in 
    accuracy (\(\%\)). \textbf{Bold} denotes the best result.
    }

    \label{tab:full_parametric_res}
    \resizebox{\linewidth}{!}{
\begin{tabular}{@{}lllllllllllllllllllllll@{}}
\toprule
\multicolumn{2}{c}{\textbf{Dataset}} &
\multicolumn{3}{c}{\textbf{\begin{tabular}[c]{@{}c@{}}Athlete \\ Sport\end{tabular}}} &
\multicolumn{3}{c}{\textbf{\begin{tabular}[c]{@{}c@{}}Book \\ Author\end{tabular}}} &
\multicolumn{3}{c}{\textbf{\begin{tabular}[c]{@{}c@{}}Company \\ Founder\end{tabular}}} &
\multicolumn{3}{c}{\textbf{\begin{tabular}[c]{@{}c@{}}Company \\ Headquarter\end{tabular}}} &
\multicolumn{3}{c}{\textbf{\begin{tabular}[c]{@{}c@{}}Official \\ Language\end{tabular}}} &
\multicolumn{3}{c}{\textbf{\begin{tabular}[c]{@{}c@{}}World \\ Capital\end{tabular}}} &
\multicolumn{3}{c}{\textbf{Average}} \\
\cmidrule(l){3-23}
\multicolumn{2}{c}{\textbf{Conflict Type}} &
\textbf{1} & \textbf{2} & \textbf{3} &
\textbf{1} & \textbf{2} & \textbf{3} &
\textbf{1} & \textbf{2} & \textbf{3} &
\textbf{1} & \textbf{2} & \textbf{3} &
\textbf{1} & \textbf{2} & \textbf{3} &
\textbf{1} & \textbf{2} & \textbf{3} &
\textbf{1} & \textbf{2} & \textbf{3} \\ 
\midrule

\multirow{6}{*}{Gemma}
& \cellcolor{gray!18}Original
  & \cellcolor{gray!18}93.4 & \cellcolor{gray!18}18.1 & \cellcolor{gray!18}0.0
  & \cellcolor{gray!18}73.0 & \cellcolor{gray!18}7.7  & \cellcolor{gray!18}0.0
  & \cellcolor{gray!18}\textbf{47.0} & \cellcolor{gray!18}2.7  & \cellcolor{gray!18}0.0
  & \cellcolor{gray!18}64.2 & \cellcolor{gray!18}0.7  & \cellcolor{gray!18}0.0
  & \cellcolor{gray!18}\textbf{96.9} & \cellcolor{gray!18}23.5 & \cellcolor{gray!18}0.0
  & \cellcolor{gray!18}\textbf{94.1} & \cellcolor{gray!18}15.1 & \cellcolor{gray!18}1.1
  & \cellcolor{gray!18}78.1 & \cellcolor{gray!18}11.3 & \cellcolor{gray!18}0.2 \\

& \cellcolor{cyan!5}Prompt
  & \cellcolor{cyan!5}93.4 & \cellcolor{cyan!5}44.5 & \cellcolor{cyan!5}0.0
  & \cellcolor{cyan!5}73.0 & \cellcolor{cyan!5}22.4 & \cellcolor{cyan!5}1.6
  & \cellcolor{cyan!5}\textbf{47.0} & \cellcolor{cyan!5}6.5  & \cellcolor{cyan!5}3.8
  & \cellcolor{cyan!5}64.2 & \cellcolor{cyan!5}3.1  & \cellcolor{cyan!5}0.0
  & \cellcolor{cyan!5}\textbf{96.9} & \cellcolor{cyan!5}50.0 & \cellcolor{cyan!5}22.2
  & \cellcolor{cyan!5}\textbf{94.1} & \cellcolor{cyan!5}50.8 & \cellcolor{cyan!5}35.7
  & \cellcolor{cyan!5}78.1 & \cellcolor{cyan!5}29.6 & \cellcolor{cyan!5}10.5 \\

& \cellcolor{cyan!5}PH3\_l
  & \cellcolor{cyan!5}86.6 & \cellcolor{cyan!5}71.6 & \cellcolor{cyan!5}33.3
  & \cellcolor{cyan!5}33.3 & \cellcolor{cyan!5}4.8  & \cellcolor{cyan!5}0.0
  & \cellcolor{cyan!5}28.1 & \cellcolor{cyan!5}10.8 & \cellcolor{cyan!5}19.5
  & \cellcolor{cyan!5}44.3 & \cellcolor{cyan!5}22.4 & \cellcolor{cyan!5}30.6
  & \cellcolor{cyan!5}90.7 & \cellcolor{cyan!5}72.8 & \cellcolor{cyan!5}82.7
  & \cellcolor{cyan!5}84.3 & \cellcolor{cyan!5}64.3 & \cellcolor{cyan!5}88.1
  & \cellcolor{cyan!5}61.2 & \cellcolor{cyan!5}41.1 & \cellcolor{cyan!5}42.4 \\

& \cellcolor{cyan!5}PH3\_s
  & \cellcolor{cyan!5}93.2 & \cellcolor{cyan!5}75.3 & \cellcolor{cyan!5}0.0
  & \cellcolor{cyan!5}21.8 & \cellcolor{cyan!5}19.3 & \cellcolor{cyan!5}0.2
  & \cellcolor{cyan!5}42.7 & \cellcolor{cyan!5}5.4  & \cellcolor{cyan!5}0.0
  & \cellcolor{cyan!5}62.0 & \cellcolor{cyan!5}0.7  & \cellcolor{cyan!5}0.0
  & \cellcolor{cyan!5}82.7 & \cellcolor{cyan!5}37.7 & \cellcolor{cyan!5}0.0
  & \cellcolor{cyan!5}78.9 & \cellcolor{cyan!5}15.7 & \cellcolor{cyan!5}0.5
  & \cellcolor{cyan!5}63.5 & \cellcolor{cyan!5}25.7 & \cellcolor{cyan!5}0.1 \\

& \cellcolor{cyan!14}\jro (Ours)
  & \cellcolor{cyan!14}91.2 & \cellcolor{cyan!14}63.2 & \cellcolor{cyan!14}65.9
  & \cellcolor{cyan!14}78.0 & \cellcolor{cyan!14}61.0 & \cellcolor{cyan!14}2.9
  & \cellcolor{cyan!14}46.5 & \cellcolor{cyan!14}\textbf{44.9} & \cellcolor{cyan!14}41.1
  & \cellcolor{cyan!14}57.9 & \cellcolor{cyan!14}36.2 & \cellcolor{cyan!14}38.9
  & \cellcolor{cyan!14}94.4 & \cellcolor{cyan!14}82.1 & \cellcolor{cyan!14}84.0
  & \cellcolor{cyan!14}91.9 & \cellcolor{cyan!14}69.2 & \cellcolor{cyan!14}83.2
  & \cellcolor{cyan!14}76.7 & \cellcolor{cyan!14}59.4 & \cellcolor{cyan!14}52.7 \\

& \cellcolor{cyan!25}\jrt (Ours)
  & \cellcolor{cyan!25}\textbf{96.3} & \cellcolor{cyan!25}\textbf{95.4} & \cellcolor{cyan!25}\textbf{91.9}
  & \cellcolor{cyan!25}\textbf{79.8} & \cellcolor{cyan!25}\textbf{75.5} & \cellcolor{cyan!25}\textbf{68.0}
  & \cellcolor{cyan!25}45.4 & \cellcolor{cyan!25}39.5 & \cellcolor{cyan!25}\textbf{43.2}
  & \cellcolor{cyan!25}\textbf{65.8} & \cellcolor{cyan!25}\textbf{60.0} & \cellcolor{cyan!25}\textbf{59.3}
  & \cellcolor{cyan!25}93.2 & \cellcolor{cyan!25}\textbf{86.4} & \cellcolor{cyan!25}\textbf{85.2}
  & \cellcolor{cyan!25}\textbf{94.1} & \cellcolor{cyan!25}\textbf{95.1} & \cellcolor{cyan!25}\textbf{93.0}
  & \cellcolor{cyan!25}\textbf{79.1} & \cellcolor{cyan!25}\textbf{75.3} & \cellcolor{cyan!25}\textbf{73.4} \\

\midrule

\multirow{6}{*}{Llama2}
& \cellcolor{gray!18}Original
  & \cellcolor{gray!18}90.4 & \cellcolor{gray!18}9.0  & \cellcolor{gray!18}0.7
  & \cellcolor{gray!18}81.4 & \cellcolor{gray!18}47.0 & \cellcolor{gray!18}0.0
  & \cellcolor{gray!18}\textbf{57.5} & \cellcolor{gray!18}29.3 & \cellcolor{gray!18}0.0
  & \cellcolor{gray!18}\textbf{75.2} & \cellcolor{gray!18}1.1  & \cellcolor{gray!18}0.7
  & \cellcolor{gray!18}95.7 & \cellcolor{gray!18}46.9 & \cellcolor{gray!18}0.0
  & \cellcolor{gray!18}95.1 & \cellcolor{gray!18}22.3 & \cellcolor{gray!18}0.0
  & \cellcolor{gray!18}\textbf{82.5} & \cellcolor{gray!18}25.9 & \cellcolor{gray!18}0.2 \\

& \cellcolor{cyan!5}Prompt
  & \cellcolor{cyan!5}90.4 & \cellcolor{cyan!5}70.2 & \cellcolor{cyan!5}0.2
  & \cellcolor{cyan!5}81.4 & \cellcolor{cyan!5}65.1 & \cellcolor{cyan!5}22.0
  & \cellcolor{cyan!5}\textbf{57.5} & \cellcolor{cyan!5}16.6 & \cellcolor{cyan!5}24.3
  & \cellcolor{cyan!5}\textbf{75.2} & \cellcolor{cyan!5}38.0 & \cellcolor{cyan!5}15.7
  & \cellcolor{cyan!5}95.7 & \cellcolor{cyan!5}79.6 & \cellcolor{cyan!5}40.7
  & \cellcolor{cyan!5}95.1 & \cellcolor{cyan!5}60.3 & \cellcolor{cyan!5}15.8
  & \cellcolor{cyan!5}\textbf{82.5} & \cellcolor{cyan!5}55.0 & \cellcolor{cyan!5}19.8 \\

& \cellcolor{cyan!5}PH3\_l
  & \cellcolor{cyan!5}91.0 & \cellcolor{cyan!5}87.4 & \cellcolor{cyan!5}37.5
  & \cellcolor{cyan!5}77.8 & \cellcolor{cyan!5}92.0 & \cellcolor{cyan!5}70.9
  & \cellcolor{cyan!5}53.0 & \cellcolor{cyan!5}\textbf{52.2} & \cellcolor{cyan!5}32.6
  & \cellcolor{cyan!5}73.4 & \cellcolor{cyan!5}74.0 & \cellcolor{cyan!5}12.1
  & \cellcolor{cyan!5}94.4 & \cellcolor{cyan!5}90.7 & \cellcolor{cyan!5}84.0
  & \cellcolor{cyan!5}94.2 & \cellcolor{cyan!5}\textbf{95.7} & \cellcolor{cyan!5}90.2
  & \cellcolor{cyan!5}80.6 & \cellcolor{cyan!5}82.0 & \cellcolor{cyan!5}54.5 \\

& \cellcolor{cyan!5}PH3\_s
  & \cellcolor{cyan!5}89.0 & \cellcolor{cyan!5}88.1 & \cellcolor{cyan!5}10.5
  & \cellcolor{cyan!5}80.2 & \cellcolor{cyan!5}86.1 & \cellcolor{cyan!5}64.5
  & \cellcolor{cyan!5}52.7 & \cellcolor{cyan!5}50.0 & \cellcolor{cyan!5}34.0
  & \cellcolor{cyan!5}73.4 & \cellcolor{cyan!5}72.9 & \cellcolor{cyan!5}18.5
  & \cellcolor{cyan!5}94.4 & \cellcolor{cyan!5}85.5 & \cellcolor{cyan!5}80.7
  & \cellcolor{cyan!5}94.0 & \cellcolor{cyan!5}91.3 & \cellcolor{cyan!5}85.3
  & \cellcolor{cyan!5}80.6 & \cellcolor{cyan!5}79.0 & \cellcolor{cyan!5}48.9 \\

& \cellcolor{cyan!14}\jro (Ours)
  & \cellcolor{cyan!14}89.9 & \cellcolor{cyan!14}61.6 & \cellcolor{cyan!14}50.4
  & \cellcolor{cyan!14}77.1 & \cellcolor{cyan!14}85.6 & \cellcolor{cyan!14}79.8
  & \cellcolor{cyan!14}53.6 & \cellcolor{cyan!14}47.0 & \cellcolor{cyan!14}40.9
  & \cellcolor{cyan!14}72.2 & \cellcolor{cyan!14}66.3 & \cellcolor{cyan!14}64.0
  & \cellcolor{cyan!14}93.8 & \cellcolor{cyan!14}92.0 & \cellcolor{cyan!14}95.7
  & \cellcolor{cyan!14}94.6 & \cellcolor{cyan!14}94.0 & \cellcolor{cyan!14}95.7
  & \cellcolor{cyan!14}80.2 & \cellcolor{cyan!14}74.4 & \cellcolor{cyan!14}71.1 \\

& \cellcolor{cyan!25}\jrt (Ours)
  & \cellcolor{cyan!25}\textbf{91.5} & \cellcolor{cyan!25}\textbf{88.6} & \cellcolor{cyan!25}\textbf{91.0}
  & \cellcolor{cyan!25}\textbf{82.8} & \cellcolor{cyan!25}\textbf{91.1} & \cellcolor{cyan!25}\textbf{88.5}
  & \cellcolor{cyan!25}53.0 & \cellcolor{cyan!25}51.9 & \cellcolor{cyan!25}\textbf{54.1}
  & \cellcolor{cyan!25}74.3 & \cellcolor{cyan!25}\textbf{74.3} & \cellcolor{cyan!25}\textbf{73.6}
  & \cellcolor{cyan!25}\textbf{96.1} & \cellcolor{cyan!25}\textbf{93.8} & \cellcolor{cyan!25}\textbf{94.4}
  & \cellcolor{cyan!25}\textbf{95.4} & \cellcolor{cyan!25}95.4 & \cellcolor{cyan!25}\textbf{96.2}
  & \cellcolor{cyan!25}82.2 & \cellcolor{cyan!25}\textbf{82.5} & \cellcolor{cyan!25}\textbf{83.0} \\

\midrule

\multirow{6}{*}{Llama3}
& \cellcolor{gray!18}Original
  & \cellcolor{gray!18}84.1 & \cellcolor{gray!18}22.2 & \cellcolor{gray!18}0.0
  & \cellcolor{gray!18}55.6 & \cellcolor{gray!18}2.2  & \cellcolor{gray!18}0.0
  & \cellcolor{gray!18}61.1 & \cellcolor{gray!18}3.3  & \cellcolor{gray!18}0.0
  & \cellcolor{gray!18}80.3 & \cellcolor{gray!18}1.4  & \cellcolor{gray!18}1.8
  & \cellcolor{gray!18}96.3 & \cellcolor{gray!18}20.4 & \cellcolor{gray!18}0.6
  & \cellcolor{gray!18}94.6 & \cellcolor{gray!18}16.8 & \cellcolor{gray!18}0.0
  & \cellcolor{gray!18}78.7 & \cellcolor{gray!18}11.0 & \cellcolor{gray!18}0.4 \\

& \cellcolor{cyan!5}Prompt
  & \cellcolor{cyan!5}84.1 & \cellcolor{cyan!5}87.4 & \cellcolor{cyan!5}4.1
  & \cellcolor{cyan!5}55.6 & \cellcolor{cyan!5}77.7 & \cellcolor{cyan!5}0.0
  & \cellcolor{cyan!5}61.1 & \cellcolor{cyan!5}38.3 & \cellcolor{cyan!5}0.6
  & \cellcolor{cyan!5}80.3 & \cellcolor{cyan!5}48.2 & \cellcolor{cyan!5}0.0
  & \cellcolor{cyan!5}96.3 & \cellcolor{cyan!5}85.2 & \cellcolor{cyan!5}5.6
  & \cellcolor{cyan!5}94.6 & \cellcolor{cyan!5}83.8 & \cellcolor{cyan!5}11.9
  & \cellcolor{cyan!5}78.7 & \cellcolor{cyan!5}70.1 & \cellcolor{cyan!5}3.7 \\

& \cellcolor{cyan!5}PH3\_l
  & \cellcolor{cyan!5}86.4 & \cellcolor{cyan!5}86.5 & \cellcolor{cyan!5}14.1
  & \cellcolor{cyan!5}75.3 & \cellcolor{cyan!5}87.4 & \cellcolor{cyan!5}4.9
  & \cellcolor{cyan!5}55.6 & \cellcolor{cyan!5}48.9 & \cellcolor{cyan!5}30.6
  & \cellcolor{cyan!5}78.0 & \cellcolor{cyan!5}55.3 & \cellcolor{cyan!5}9.4
  & \cellcolor{cyan!5}96.3 & \cellcolor{cyan!5}\textbf{96.3} & \cellcolor{cyan!5}84.0
  & \cellcolor{cyan!5}93.0 & \cellcolor{cyan!5}94.1 & \cellcolor{cyan!5}92.4
  & \cellcolor{cyan!5}80.7 & \cellcolor{cyan!5}78.1 & \cellcolor{cyan!5}39.2 \\

& \cellcolor{cyan!5}PH3\_s
  & \cellcolor{cyan!5}86.5 & \cellcolor{cyan!5}86.3 & \cellcolor{cyan!5}12.5
  & \cellcolor{cyan!5}61.1 & \cellcolor{cyan!5}84.8 & \cellcolor{cyan!5}6.8
  & \cellcolor{cyan!5}58.3 & \cellcolor{cyan!5}51.7 & \cellcolor{cyan!5}27.8
  & \cellcolor{cyan!5}70.0 & \cellcolor{cyan!5}56.2 & \cellcolor{cyan!5}26.8
  & \cellcolor{cyan!5}96.3 & \cellcolor{cyan!5}95.8 & \cellcolor{cyan!5}87.0
  & \cellcolor{cyan!5}91.4 & \cellcolor{cyan!5}87.6 & \cellcolor{cyan!5}90.3
  & \cellcolor{cyan!5}77.3 & \cellcolor{cyan!5}77.1 & \cellcolor{cyan!5}41.9 \\

& \cellcolor{cyan!14}\jro (Ours)
  & \cellcolor{cyan!14}82.8 & \cellcolor{cyan!14}72.8 & \cellcolor{cyan!14}58.7
  & \cellcolor{cyan!14}66.2 & \cellcolor{cyan!14}92.1 & \cellcolor{cyan!14}83.0
  & \cellcolor{cyan!14}\textbf{61.7} & \cellcolor{cyan!14}51.1 & \cellcolor{cyan!14}54.4
  & \cellcolor{cyan!14}\textbf{80.5} & \cellcolor{cyan!14}56.9 & \cellcolor{cyan!14}56.0
  & \cellcolor{cyan!14}95.7 & \cellcolor{cyan!14}95.7 & \cellcolor{cyan!14}93.2
  & \cellcolor{cyan!14}94.1 & \cellcolor{cyan!14}95.7 & \cellcolor{cyan!14}96.8
  & \cellcolor{cyan!14}80.2 & \cellcolor{cyan!14}77.4 & \cellcolor{cyan!14}73.7 \\

& \cellcolor{cyan!25}\jrt (Ours)
  & \cellcolor{cyan!25}\textbf{87.0} & \cellcolor{cyan!25}\textbf{87.8} & \cellcolor{cyan!25}\textbf{95.9}
  & \cellcolor{cyan!25}\textbf{86.5} & \cellcolor{cyan!25}\textbf{92.3} & \cellcolor{cyan!25}\textbf{88.7}
  & \cellcolor{cyan!25}\textbf{61.7} & \cellcolor{cyan!25}\textbf{56.7} & \cellcolor{cyan!25}\textbf{55.6}
  & \cellcolor{cyan!25}79.8 & \cellcolor{cyan!25}\textbf{75.9} & \cellcolor{cyan!25}\textbf{74.8}
  & \cellcolor{cyan!25}\textbf{96.3} & \cellcolor{cyan!25}\textbf{96.3} & \cellcolor{cyan!25}\textbf{95.7}
  & \cellcolor{cyan!25}\textbf{95.7} & \cellcolor{cyan!25}\textbf{96.2} & \cellcolor{cyan!25}\textbf{97.3}
  & \cellcolor{cyan!25}\textbf{84.5} & \cellcolor{cyan!25}\textbf{84.2} & \cellcolor{cyan!25}\textbf{84.7} \\

\midrule

\multirow{6}{*}{Olmo}
& \cellcolor{gray!18}Original
  & \cellcolor{gray!18}84.8 & \cellcolor{gray!18}56.1 & \cellcolor{gray!18}0.0
  & \cellcolor{gray!18}68.9 & \cellcolor{gray!18}10.8 & \cellcolor{gray!18}1.1
  & \cellcolor{gray!18}46.5 & \cellcolor{gray!18}5.9  & \cellcolor{gray!18}0.0
  & \cellcolor{gray!18}73.6 & \cellcolor{gray!18}21.1 & \cellcolor{gray!18}0.5
  & \cellcolor{gray!18}\textbf{95.7} & \cellcolor{gray!18}75.9 & \cellcolor{gray!18}4.3
  & \cellcolor{gray!18}92.4 & \cellcolor{gray!18}4.3  & \cellcolor{gray!18}4.9
  & \cellcolor{gray!18}77.0 & \cellcolor{gray!18}29.0 & \cellcolor{gray!18}1.8 \\

& \cellcolor{cyan!5}Prompt
  & \cellcolor{cyan!5}84.8 & \cellcolor{cyan!5}57.2 & \cellcolor{cyan!5}19.6
  & \cellcolor{cyan!5}68.9 & \cellcolor{cyan!5}10.8 & \cellcolor{cyan!5}6.8
  & \cellcolor{cyan!5}46.5 & \cellcolor{cyan!5}9.7  & \cellcolor{cyan!5}3.2
  & \cellcolor{cyan!5}73.6 & \cellcolor{cyan!5}7.0  & \cellcolor{cyan!5}0.0
  & \cellcolor{cyan!5}\textbf{95.7} & \cellcolor{cyan!5}24.1 & \cellcolor{cyan!5}64.8
  & \cellcolor{cyan!5}92.4 & \cellcolor{cyan!5}3.8  & \cellcolor{cyan!5}57.8
  & \cellcolor{cyan!5}77.0 & \cellcolor{cyan!5}18.8 & \cellcolor{cyan!5}25.4 \\

& \cellcolor{cyan!5}PH3\_l
  & \cellcolor{cyan!5}\textbf{85.0} & \cellcolor{cyan!5}\textbf{82.1} & \cellcolor{cyan!5}35.7
  & \cellcolor{cyan!5}70.3 & \cellcolor{cyan!5}84.0 & \cellcolor{cyan!5}70.5
  & \cellcolor{cyan!5}44.9 & \cellcolor{cyan!5}\textbf{50.3} & \cellcolor{cyan!5}34.1
  & \cellcolor{cyan!5}68.4 & \cellcolor{cyan!5}64.1 & \cellcolor{cyan!5}53.9
  & \cellcolor{cyan!5}95.5 & \cellcolor{cyan!5}\textbf{95.1} & \cellcolor{cyan!5}92.0
  & \cellcolor{cyan!5}93.0 & \cellcolor{cyan!5}95.1 & \cellcolor{cyan!5}87.6
  & \cellcolor{cyan!5}76.2 & \cellcolor{cyan!5}\textbf{78.4} & \cellcolor{cyan!5}62.3 \\

& \cellcolor{cyan!5}PH3\_s
  & \cellcolor{cyan!5}83.0 & \cellcolor{cyan!5}78.2 & \cellcolor{cyan!5}1.1
  & \cellcolor{cyan!5}64.9 & \cellcolor{cyan!5}83.8 & \cellcolor{cyan!5}34.0
  & \cellcolor{cyan!5}36.2 & \cellcolor{cyan!5}36.2 & \cellcolor{cyan!5}9.7
  & \cellcolor{cyan!5}70.5 & \cellcolor{cyan!5}52.3 & \cellcolor{cyan!5}5.0
  & \cellcolor{cyan!5}94.4 & \cellcolor{cyan!5}93.8 & \cellcolor{cyan!5}62.3
  & \cellcolor{cyan!5}91.9 & \cellcolor{cyan!5}91.4 & \cellcolor{cyan!5}34.1
  & \cellcolor{cyan!5}73.5 & \cellcolor{cyan!5}72.6 & \cellcolor{cyan!5}24.4 \\

& \cellcolor{cyan!14}\jro (Ours)
  & \cellcolor{cyan!14}67.4 & \cellcolor{cyan!14}66.5 & \cellcolor{cyan!14}39.1
  & \cellcolor{cyan!14}72.6 & \cellcolor{cyan!14}83.6 & \cellcolor{cyan!14}57.2
  & \cellcolor{cyan!14}45.4 & \cellcolor{cyan!14}44.9 & \cellcolor{cyan!14}38.7
  & \cellcolor{cyan!14}68.6 & \cellcolor{cyan!14}55.7 & \cellcolor{cyan!14}61.6
  & \cellcolor{cyan!14}94.4 & \cellcolor{cyan!14}92.6 & \cellcolor{cyan!14}\textbf{92.6}
  & \cellcolor{cyan!14}93.0 & \cellcolor{cyan!14}94.6 & \cellcolor{cyan!14}91.4
  & \cellcolor{cyan!14}73.7 & \cellcolor{cyan!14}73.0 & \cellcolor{cyan!14}63.4 \\

& \cellcolor{cyan!25}\jrt (Ours)
  & \cellcolor{cyan!25}82.4 & \cellcolor{cyan!25}75.2 & \cellcolor{cyan!25}\textbf{48.3}
  & \cellcolor{cyan!25}\textbf{73.2} & \cellcolor{cyan!25}\textbf{85.8} & \cellcolor{cyan!25}\textbf{72.3}
  & \cellcolor{cyan!25}\textbf{47.6} & \cellcolor{cyan!25}48.6 & \cellcolor{cyan!25}\textbf{41.3}
  & \cellcolor{cyan!25}\textbf{72.0} & \cellcolor{cyan!25}\textbf{65.5} & \cellcolor{cyan!25}\textbf{56.4}
  & \cellcolor{cyan!25}95.1 & \cellcolor{cyan!25}94.4 & \cellcolor{cyan!25}87.0
  & \cellcolor{cyan!25}\textbf{93.2} & \cellcolor{cyan!25}\textbf{95.7} & \cellcolor{cyan!25}\textbf{93.5}
  & \cellcolor{cyan!25}\textbf{77.2} & \cellcolor{cyan!25}77.5 & \cellcolor{cyan!25}\textbf{66.5} \\

\midrule

\multirow{6}{*}{Phi2}
& \cellcolor{gray!18}Original
  & \cellcolor{gray!18}61.8 & \cellcolor{gray!18}15.3 & \cellcolor{gray!18}0.0
  & \cellcolor{gray!18}\textbf{55.8} & \cellcolor{gray!18}16.3 & \cellcolor{gray!18}0.0
  & \cellcolor{gray!18}34.6 & \cellcolor{gray!18}5.9  & \cellcolor{gray!18}0.0
  & \cellcolor{gray!18}36.2 & \cellcolor{gray!18}3.2  & \cellcolor{gray!18}0.0
  & \cellcolor{gray!18}\textbf{93.3} & \cellcolor{gray!18}88.3 & \cellcolor{gray!18}0.0
  & \cellcolor{gray!18}93.0 & \cellcolor{gray!18}61.6 & \cellcolor{gray!18}0.0
  & \cellcolor{gray!18}62.4 & \cellcolor{gray!18}31.8 & \cellcolor{gray!18}0.0 \\

& \cellcolor{cyan!5}Prompt
  & \cellcolor{cyan!5}61.8 & \cellcolor{cyan!5}11.7 & \cellcolor{cyan!5}0.0
  & \cellcolor{cyan!5}\textbf{55.8} & \cellcolor{cyan!5}11.5 & \cellcolor{cyan!5}0.0
  & \cellcolor{cyan!5}34.6 & \cellcolor{cyan!5}5.3  & \cellcolor{cyan!5}0.5
  & \cellcolor{cyan!5}36.2 & \cellcolor{cyan!5}2.4  & \cellcolor{cyan!5}0.0
  & \cellcolor{cyan!5}\textbf{93.3} & \cellcolor{cyan!5}72.4 & \cellcolor{cyan!5}0.6
  & \cellcolor{cyan!5}93.0 & \cellcolor{cyan!5}49.2 & \cellcolor{cyan!5}1.6
  & \cellcolor{cyan!5}62.4 & \cellcolor{cyan!5}25.4 & \cellcolor{cyan!5}0.5 \\

& \cellcolor{cyan!5}PH3\_l
  & \cellcolor{cyan!5}62.1 & \cellcolor{cyan!5}14.7 & \cellcolor{cyan!5}0.0
  & \cellcolor{cyan!5}55.6 & \cellcolor{cyan!5}16.8 & \cellcolor{cyan!5}0.0
  & \cellcolor{cyan!5}34.6 & \cellcolor{cyan!5}4.8  & \cellcolor{cyan!5}0.0
  & \cellcolor{cyan!5}36.4 & \cellcolor{cyan!5}3.2  & \cellcolor{cyan!5}0.0
  & \cellcolor{cyan!5}\textbf{93.3} & \cellcolor{cyan!5}90.2 & \cellcolor{cyan!5}0.0
  & \cellcolor{cyan!5}93.0 & \cellcolor{cyan!5}76.2 & \cellcolor{cyan!5}0.0
  & \cellcolor{cyan!5}\textbf{62.5} & \cellcolor{cyan!5}34.3 & \cellcolor{cyan!5}0.0 \\

& \cellcolor{cyan!5}PH3\_s
  & \cellcolor{cyan!5}61.6 & \cellcolor{cyan!5}15.5 & \cellcolor{cyan!5}0.0
  & \cellcolor{cyan!5}55.0 & \cellcolor{cyan!5}14.6 & \cellcolor{cyan!5}0.0
  & \cellcolor{cyan!5}34.6 & \cellcolor{cyan!5}5.3  & \cellcolor{cyan!5}0.0
  & \cellcolor{cyan!5}\textbf{36.8} & \cellcolor{cyan!5}2.4  & \cellcolor{cyan!5}0.0
  & \cellcolor{cyan!5}92.6 & \cellcolor{cyan!5}89.6 & \cellcolor{cyan!5}0.0
  & \cellcolor{cyan!5}94.1 & \cellcolor{cyan!5}74.1 & \cellcolor{cyan!5}0.0
  & \cellcolor{cyan!5}62.4 & \cellcolor{cyan!5}33.6 & \cellcolor{cyan!5}0.0 \\

& \cellcolor{cyan!14}\jro (Ours)
  & \cellcolor{cyan!14}61.0 & \cellcolor{cyan!14}8.8  & \cellcolor{cyan!14}31.4
  & \cellcolor{cyan!14}54.1 & \cellcolor{cyan!14}48.1 & \cellcolor{cyan!14}43.7
  & \cellcolor{cyan!14}35.6 & \cellcolor{cyan!14}24.5 & \cellcolor{cyan!14}0.0
  & \cellcolor{cyan!14}34.3 & \cellcolor{cyan!14}3.2  & \cellcolor{cyan!14}\textbf{7.1}
  & \cellcolor{cyan!14}\textbf{93.3} & \cellcolor{cyan!14}92.0 & \cellcolor{cyan!14}\textbf{87.7}
  & \cellcolor{cyan!14}94.1 & \cellcolor{cyan!14}91.4 & \cellcolor{cyan!14}92.4
  & \cellcolor{cyan!14}62.0 & \cellcolor{cyan!14}44.7 & \cellcolor{cyan!14}43.7 \\

& \cellcolor{cyan!25}\jrt (Ours)
  & \cellcolor{cyan!25}\textbf{62.6} & \cellcolor{cyan!25}\textbf{36.0} & \cellcolor{cyan!25}\textbf{46.3}
  & \cellcolor{cyan!25}53.6 & \cellcolor{cyan!25}\textbf{50.3} & \cellcolor{cyan!25}\textbf{52.5}
  & \cellcolor{cyan!25}\textbf{36.2} & \cellcolor{cyan!25}\textbf{26.1} & \cellcolor{cyan!25}\textbf{19.1}
  & \cellcolor{cyan!25}35.8 & \cellcolor{cyan!25}\textbf{23.3} & \cellcolor{cyan!25}2.1
  & \cellcolor{cyan!25}92.6 & \cellcolor{cyan!25}\textbf{92.6} & \cellcolor{cyan!25}87.1
  & \cellcolor{cyan!25}\textbf{94.3} & \cellcolor{cyan!25}\textbf{91.8} & \cellcolor{cyan!25}\textbf{94.1}
  & \cellcolor{cyan!25}\textbf{62.5} & \cellcolor{cyan!25}\textbf{53.4} & \cellcolor{cyan!25}\textbf{50.2} \\

\midrule

\multirow{6}{*}{StableLm}
& \cellcolor{gray!18}Original
  & \cellcolor{gray!18}88.2 & \cellcolor{gray!18}47.5 & \cellcolor{gray!18}0.0
  & \cellcolor{gray!18}6.3  & \cellcolor{gray!18}2.6  & \cellcolor{gray!18}0.0
  & \cellcolor{gray!18}30.2 & \cellcolor{gray!18}0.0  & \cellcolor{gray!18}0.0
  & \cellcolor{gray!18}50.5 & \cellcolor{gray!18}1.5  & \cellcolor{gray!18}0.0
  & \cellcolor{gray!18}95.1 & \cellcolor{gray!18}14.2 & \cellcolor{gray!18}0.0
  & \cellcolor{gray!18}88.7 & \cellcolor{gray!18}18.8 & \cellcolor{gray!18}0.0
  & \cellcolor{gray!18}59.8 & \cellcolor{gray!18}14.1 & \cellcolor{gray!18}0.0 \\

& \cellcolor{cyan!5}Prompt
  & \cellcolor{cyan!5}88.2 & \cellcolor{cyan!5}0.0  & \cellcolor{cyan!5}0.0
  & \cellcolor{cyan!5}6.3  & \cellcolor{cyan!5}0.0  & \cellcolor{cyan!5}0.0
  & \cellcolor{cyan!5}30.2 & \cellcolor{cyan!5}0.0  & \cellcolor{cyan!5}0.0
  & \cellcolor{cyan!5}50.5 & \cellcolor{cyan!5}1.3  & \cellcolor{cyan!5}0.0
  & \cellcolor{cyan!5}95.1 & \cellcolor{cyan!5}8.6  & \cellcolor{cyan!5}0.0
  & \cellcolor{cyan!5}88.7 & \cellcolor{cyan!5}6.5  & \cellcolor{cyan!5}0.0
  & \cellcolor{cyan!5}59.8 & \cellcolor{cyan!5}2.7  & \cellcolor{cyan!5}0.0 \\

& \cellcolor{cyan!5}PH3\_l
  & \cellcolor{cyan!5}89.3 & \cellcolor{cyan!5}68.7 & \cellcolor{cyan!5}21.4
  & \cellcolor{cyan!5}5.1  & \cellcolor{cyan!5}70.5 & \cellcolor{cyan!5}20.2
  & \cellcolor{cyan!5}30.7 & \cellcolor{cyan!5}30.9 & \cellcolor{cyan!5}9.0
  & \cellcolor{cyan!5}49.5 & \cellcolor{cyan!5}40.9 & \cellcolor{cyan!5}31.3
  & \cellcolor{cyan!5}\textbf{95.7} & \cellcolor{cyan!5}85.8 & \cellcolor{cyan!5}88.3
  & \cellcolor{cyan!5}80.6 & \cellcolor{cyan!5}90.3 & \cellcolor{cyan!5}89.2
  & \cellcolor{cyan!5}58.5 & \cellcolor{cyan!5}64.5 & \cellcolor{cyan!5}43.2 \\

& \cellcolor{cyan!5}PH3\_s
  & \cellcolor{cyan!5}88.8 & \cellcolor{cyan!5}66.3 & \cellcolor{cyan!5}19.0
  & \cellcolor{cyan!5}2.4  & \cellcolor{cyan!5}42.4 & \cellcolor{cyan!5}17.7
  & \cellcolor{cyan!5}27.0 & \cellcolor{cyan!5}28.0 & \cellcolor{cyan!5}1.6
  & \cellcolor{cyan!5}47.9 & \cellcolor{cyan!5}39.4 & \cellcolor{cyan!5}8.1
  & \cellcolor{cyan!5}94.4 & \cellcolor{cyan!5}80.9 & \cellcolor{cyan!5}61.1
  & \cellcolor{cyan!5}81.7 & \cellcolor{cyan!5}82.8 & \cellcolor{cyan!5}76.9
  & \cellcolor{cyan!5}57.1 & \cellcolor{cyan!5}56.6 & \cellcolor{cyan!5}30.7 \\

& \cellcolor{cyan!14}\jro (Ours)
  & \cellcolor{cyan!14}\textbf{89.9} & \cellcolor{cyan!14}84.9 & \cellcolor{cyan!14}25.8
  & \cellcolor{cyan!14}54.0 & \cellcolor{cyan!14}74.9 & \cellcolor{cyan!14}60.9
  & \cellcolor{cyan!14}27.5 & \cellcolor{cyan!14}\textbf{32.8} & \cellcolor{cyan!14}27.5
  & \cellcolor{cyan!14}43.8 & \cellcolor{cyan!14}34.8 & \cellcolor{cyan!14}23.4
  & \cellcolor{cyan!14}94.4 & \cellcolor{cyan!14}92.0 & \cellcolor{cyan!14}88.9
  & \cellcolor{cyan!14}87.6 & \cellcolor{cyan!14}87.1 & \cellcolor{cyan!14}82.8
  & \cellcolor{cyan!14}66.2 & \cellcolor{cyan!14}67.8 & \cellcolor{cyan!14}51.6 \\

& \cellcolor{cyan!25}\jrt (Ours)
  & \cellcolor{cyan!25}89.7 & \cellcolor{cyan!25}\textbf{88.4} & \cellcolor{cyan!25}\textbf{58.2}
  & \cellcolor{cyan!25}\textbf{56.2} & \cellcolor{cyan!25}\textbf{76.6} & \cellcolor{cyan!25}\textbf{68.8}
  & \cellcolor{cyan!25}\textbf{34.9} & \cellcolor{cyan!25}32.3 & \cellcolor{cyan!25}\textbf{30.2}
  & \cellcolor{cyan!25}\textbf{51.0} & \cellcolor{cyan!25}\textbf{47.5} & \cellcolor{cyan!25}\textbf{38.9}
  & \cellcolor{cyan!25}93.2 & \cellcolor{cyan!25}\textbf{93.8} & \cellcolor{cyan!25}\textbf{95.1}
  & \cellcolor{cyan!25}\textbf{92.5} & \cellcolor{cyan!25}\textbf{91.9} & \cellcolor{cyan!25}\textbf{89.8}
  & \cellcolor{cyan!25}\textbf{69.6} & \cellcolor{cyan!25}\textbf{71.8} & \cellcolor{cyan!25}\textbf{63.5} \\

\bottomrule
\end{tabular}
    }
\end{table}

\begin{table}[!ht]

    \centering
    \caption{Full results of intervention for enhancing contextual knowledge.
    }
    \label{tab:full_intervention_context}
    \resizebox{0.55\linewidth}{!}{
       \begin{tabular}{@{}llcccccc@{}}
\toprule
\multicolumn{1}{c}{\textbf{Model}} 
& \multicolumn{1}{c}{\textbf{Method}} 
& \textbf{\begin{tabular}[c]{@{}c@{}}NQ\\Swap\end{tabular}} 
& \textbf{\begin{tabular}[c]{@{}c@{}}Hate Spe-\\ech Ending\end{tabular}} 
& \textbf{\begin{tabular}[c]{@{}c@{}}History of\\Science qa\end{tabular}} 
& \textbf{\begin{tabular}[c]{@{}c@{}}Proverb\\Ending\end{tabular}} 
& \textbf{\begin{tabular}[c]{@{}c@{}}Proverb\\Translation\end{tabular}} 
& \textbf{Average} \\ 
\midrule

\multirow{7}{*}{Gemma}
 & \cellcolor{gray!18}Original 
   & \cellcolor{gray!18}38.7 
   & \cellcolor{gray!18}70.7 
   & \cellcolor{gray!18}29.9 
   & \cellcolor{gray!18}26.5 
   & \cellcolor{gray!18}59.0 
   & \cellcolor{gray!18}45.0 
   \\

 & \cellcolor{cyan!5}Prompt
   & \cellcolor{cyan!5}40.9 
   & \cellcolor{cyan!5}73.2 
   & \cellcolor{cyan!5}38.0 
   & \cellcolor{cyan!5}26.6 
   & \cellcolor{cyan!5}58.4 
   & \cellcolor{cyan!5}47.4 
   \\

 & \cellcolor{cyan!5}CAD
   & \cellcolor{cyan!5}56.9 
   & \cellcolor{cyan!5}81.7 
   & \cellcolor{cyan!5}16.9 
   & \cellcolor{cyan!5}37.1 
   & \cellcolor{cyan!5}62.9 
   & \cellcolor{cyan!5}51.1 
   \\

 & \cellcolor{cyan!5}\phl
   & \cellcolor{cyan!5}51.0 
   & \cellcolor{cyan!5}82.8 
   & \cellcolor{cyan!5}46.5 
   & \cellcolor{cyan!5}57.8 
   & \cellcolor{cyan!5}62.0 
   & \cellcolor{cyan!5}60.0 
   \\

 & \cellcolor{cyan!5}\phs
   & \cellcolor{cyan!5}50.2 
   & \cellcolor{cyan!5}80.2 
   & \cellcolor{cyan!5}35.2 
   & \cellcolor{cyan!5}50.1 
   & \cellcolor{cyan!5}63.2 
   & \cellcolor{cyan!5}55.8 
   \\

 & \cellcolor{cyan!14}\jro (Ours)
   & \cellcolor{cyan!14}38.7 
   & \cellcolor{cyan!14}79.3 
   & \cellcolor{cyan!14}\textbf{50.1} 
   & \cellcolor{cyan!14}26.8 
   & \cellcolor{cyan!14}67.1 
   & \cellcolor{cyan!14}52.4 
   \\

 & \cellcolor{cyan!25}\jrt (Ours)
   & \cellcolor{cyan!25}\textbf{58.4} 
   & \cellcolor{cyan!25}\textbf{84.1} 
   & \cellcolor{cyan!25}47.0 
   & \cellcolor{cyan!25}\textbf{74.6} 
   & \cellcolor{cyan!25}\textbf{66.8} 
   & \cellcolor{cyan!25}\textbf{66.2} 
   \\
\midrule

\multirow{7}{*}{Llama2}
 & \cellcolor{gray!18}Original
   & \cellcolor{gray!18}24.5 
   & \cellcolor{gray!18}57.3 
   & \cellcolor{gray!18}13.3 
   & \cellcolor{gray!18}26.6 
   & \cellcolor{gray!18}52.8 
   & \cellcolor{gray!18}34.9 
   \\

 & \cellcolor{cyan!5}Prompt
   & \cellcolor{cyan!5}39.6 
   & \cellcolor{cyan!5}58.5 
   & \cellcolor{cyan!5}21.3 
   & \cellcolor{cyan!5}25.7 
   & \cellcolor{cyan!5}52.5 
   & \cellcolor{cyan!5}39.5 
   \\

 & \cellcolor{cyan!5}CAD
   & \cellcolor{cyan!5}29.8 
   & \cellcolor{cyan!5}65.4 
   & \cellcolor{cyan!5}20.2 
   & \cellcolor{cyan!5}28.6 
   & \cellcolor{cyan!5}54.2 
   & \cellcolor{cyan!5}41.4 
   \\

 & \cellcolor{cyan!5}\phl
   & \cellcolor{cyan!5}48.2 
   & \cellcolor{cyan!5}63.4 
   & \cellcolor{cyan!5}20.4 
   & \cellcolor{cyan!5}68.7 
   & \cellcolor{cyan!5}58.8 
   & \cellcolor{cyan!5}51.9 
   \\

 & \cellcolor{cyan!5}\phs
   & \cellcolor{cyan!5}25.3 
   & \cellcolor{cyan!5}62.2 
   & \cellcolor{cyan!5}16.5 
   & \cellcolor{cyan!5}26.5 
   & \cellcolor{cyan!5}55.2 
   & \cellcolor{cyan!5}37.1 
   \\

 & \cellcolor{cyan!14}\jro (Ours)
   & \cellcolor{cyan!14}29.7 
   & \cellcolor{cyan!14}76.8 
   & \cellcolor{cyan!14}49.3 
   & \cellcolor{cyan!14}34.3 
   & \cellcolor{cyan!14}52.8 
   & \cellcolor{cyan!14}48.6 
   \\

 & \cellcolor{cyan!25}\jrt (Ours)
   & \cellcolor{cyan!25}\textbf{49.5} 
   & \cellcolor{cyan!25}\textbf{93.9} 
   & \cellcolor{cyan!25}\textbf{50.2} 
   & \cellcolor{cyan!25}\textbf{77.1} 
   & \cellcolor{cyan!25}\textbf{62.6} 
   & \cellcolor{cyan!25}\textbf{66.6} 
   \\
\midrule

\multirow{7}{*}{Llama3}
 & \cellcolor{gray!18}Original
   & \cellcolor{gray!18}18.5 
   & \cellcolor{gray!18}51.2 
   & \cellcolor{gray!18}72.9 
   & \cellcolor{gray!18}24.5 
   & \cellcolor{gray!18}50.1 
   & \cellcolor{gray!18}43.4 
   \\

 & \cellcolor{cyan!5}Prompt
   & \cellcolor{cyan!5}33.4 
   & \cellcolor{cyan!5}53.7 
   & \cellcolor{cyan!5}71.7 
   & \cellcolor{cyan!5}23.9 
   & \cellcolor{cyan!5}51.8 
   & \cellcolor{cyan!5}46.9 
   \\

 & \cellcolor{cyan!5}CAD
   & \cellcolor{cyan!5}34.7 
   & \cellcolor{cyan!5}60.8 
   & \cellcolor{cyan!5}73.1 
   & \cellcolor{cyan!5}33.1 
   & \cellcolor{cyan!5}54.1 
   & \cellcolor{cyan!5}51.2
   \\

 & \cellcolor{cyan!5}\phl
   & \cellcolor{cyan!5}25.3 
   & \cellcolor{cyan!5}62.2 
   & \cellcolor{cyan!5}\textbf{78.4} 
   & \cellcolor{cyan!5}48.5 
   & \cellcolor{cyan!5}63.6 
   & \cellcolor{cyan!5}55.6 
   \\

 & \cellcolor{cyan!5}\phs
   & \cellcolor{cyan!5}22.5 
   & \cellcolor{cyan!5}51.2 
   & \cellcolor{cyan!5}75.1 
   & \cellcolor{cyan!5}25.0 
   & \cellcolor{cyan!5}51.8 
   & \cellcolor{cyan!5}45.1 
   \\

 & \cellcolor{cyan!14}\jro (Ours)
   & \cellcolor{cyan!14}26.5 
   & \cellcolor{cyan!14}72.5 
   & \cellcolor{cyan!14}73.2 
   & \cellcolor{cyan!14}33.1 
   & \cellcolor{cyan!14}61.8 
   & \cellcolor{cyan!14}53.4 
   \\

 & \cellcolor{cyan!25}\jrt (Ours)
   & \cellcolor{cyan!25}\textbf{35.3} 
   & \cellcolor{cyan!25}\textbf{78.4} 
   & \cellcolor{cyan!25}74.2 
   & \cellcolor{cyan!25}\textbf{75.4} 
   & \cellcolor{cyan!25}\textbf{70.7} 
   & \cellcolor{cyan!25}\textbf{66.8} \\
\midrule

\multirow{7}{*}{Olmo1}
 & \cellcolor{gray!18}Original
   & \cellcolor{gray!18}17.1 
   & \cellcolor{gray!18}59.8 
   & \cellcolor{gray!18}38.0 
   & \cellcolor{gray!18}25.0 
   & \cellcolor{gray!18}50.8 
   & \cellcolor{gray!18}38.2 
   \\
 & \cellcolor{cyan!5}Prompt
   & \cellcolor{cyan!5}11.2 
   & \cellcolor{cyan!5}62.2 
   & \cellcolor{cyan!5}25.5 
   & \cellcolor{cyan!5}27.1 
   & \cellcolor{cyan!5}51.3 
   & \cellcolor{cyan!5}35.5 
   \\
 & \cellcolor{cyan!5}CAD
   & \cellcolor{cyan!5}\textbf{41.0} 
   & \cellcolor{cyan!5}62.2 
   & \cellcolor{cyan!5}25.5 
   & \cellcolor{cyan!5}27.1 
   & \cellcolor{cyan!5}51.3 
   & \cellcolor{cyan!5}41.4 
   \\
 & \cellcolor{cyan!5}\phl
   & \cellcolor{cyan!5}29.4 
   & \cellcolor{cyan!5}75.6 
   & \cellcolor{cyan!5}44.3 
   & \cellcolor{cyan!5}51.5 
   & \cellcolor{cyan!5}53.2 
   & \cellcolor{cyan!5}50.8 
   \\
 & \cellcolor{cyan!5}\phs
   & \cellcolor{cyan!5}21.3 
   & \cellcolor{cyan!5}78.0 
   & \cellcolor{cyan!5}39.5 
   & \cellcolor{cyan!5}29.7 
   & \cellcolor{cyan!5}52.0 
   & \cellcolor{cyan!5}44.1 
   \\
 & \cellcolor{cyan!14}\jro (Ours)
   & \cellcolor{cyan!14}23.9 
   & \cellcolor{cyan!14}81.7 
   & \cellcolor{cyan!14}\textbf{49.0} 
   & \cellcolor{cyan!14}63.3 
   & \cellcolor{cyan!14}55.3 
   & \cellcolor{cyan!14}54.6 
   \\
 & \cellcolor{cyan!25}\jrt (Ours)
   & \cellcolor{cyan!25}27.4 
   & \cellcolor{cyan!25}\textbf{86.6} 
   & \cellcolor{cyan!25}48.6 
   & \cellcolor{cyan!25}\textbf{63.0} 
   & \cellcolor{cyan!25}\textbf{56.9} 
   & \cellcolor{cyan!25}\textbf{56.5} 
   \\
\midrule

\multirow{7}{*}{Phi2}
 & \cellcolor{gray!18}Original
   & \cellcolor{gray!18}24.8 
   & \cellcolor{gray!18}89.0 
   & \cellcolor{gray!18}53.1 
   & \cellcolor{gray!18}32.3 
   & \cellcolor{gray!18}42.2 
   & \cellcolor{gray!18}48.3 
   \\
 & \cellcolor{cyan!5}Prompt
   & \cellcolor{cyan!5}22.7 
   & \cellcolor{cyan!5}85.4 
   & \cellcolor{cyan!5}49.0 
   & \cellcolor{cyan!5}32.0 
   & \cellcolor{cyan!5}41.7 
   & \cellcolor{cyan!5}46.2 
   \\
 & \cellcolor{cyan!5}CAD
   & \cellcolor{cyan!5}\textbf{41.1} 
   & \cellcolor{cyan!5}\textbf{91.5} 
   & \cellcolor{cyan!5}48.6 
   & \cellcolor{cyan!5}34.1 
   & \cellcolor{cyan!5}\textbf{44.0} 
   & \cellcolor{cyan!5}51.9 
   \\
 & \cellcolor{cyan!5}\phl
   & \cellcolor{cyan!5}24.6 
   & \cellcolor{cyan!5}89.0 
   & \cellcolor{cyan!5}53.3 
   & \cellcolor{cyan!5}39.3 
   & \cellcolor{cyan!5}42.4 
   & \cellcolor{cyan!5}49.7 
   \\
 & \cellcolor{cyan!5}\phs
   & \cellcolor{cyan!5}23.6 
   & \cellcolor{cyan!5}89.0 
   & \cellcolor{cyan!5}53.1 
   & \cellcolor{cyan!5}32.6 
   & \cellcolor{cyan!5}42.2 
   & \cellcolor{cyan!5}48.1 
   \\
 & \cellcolor{cyan!14}\jro (Ours)
   & \cellcolor{cyan!14}29.0 
   & \cellcolor{cyan!14}90.2 
   & \cellcolor{cyan!14}53.1 
   & \cellcolor{cyan!14}42.2 
   & \cellcolor{cyan!14}41.9 
   & \cellcolor{cyan!14}51.3 
   \\
 & \cellcolor{cyan!25}\jrt (Ours)
   & \cellcolor{cyan!25}30.1 
   & \cellcolor{cyan!25}89.0 
   & \cellcolor{cyan!25}\textbf{54.1} 
   & \cellcolor{cyan!25}\textbf{44.8} 
   & \cellcolor{cyan!25}43.1 
   & \cellcolor{cyan!25}\textbf{52.2} 
   \\
\midrule

\multirow{7}{*}{StableLm}
 & \cellcolor{gray!18}Original
   & \cellcolor{gray!18}10.4 
   & \cellcolor{gray!18}69.5 
   & \cellcolor{gray!18}36.1 
   & \cellcolor{gray!18}32.3 
   & \cellcolor{gray!18}52.8 
   & \cellcolor{gray!18}40.2 
   \\
 & \cellcolor{cyan!5}Prompt
   & \cellcolor{cyan!5}11.3 
   & \cellcolor{cyan!5}68.3 
   & \cellcolor{cyan!5}40.5 
   & \cellcolor{cyan!5}33.4 
   & \cellcolor{cyan!5}52.2 
   & \cellcolor{cyan!5}41.1 
   \\
 & \cellcolor{cyan!5}CAD
   & \cellcolor{cyan!5}37.0 
   & \cellcolor{cyan!5}73.2 
   & \cellcolor{cyan!5}30.3 
   & \cellcolor{cyan!5}34.5 
   & \cellcolor{cyan!5}54.5 
   & \cellcolor{cyan!5}45.9 
   \\
 & \cellcolor{cyan!5}\phl
   & \cellcolor{cyan!5}11.5 
   & \cellcolor{cyan!5}77.1 
   & \cellcolor{cyan!5}39.2 
   & \cellcolor{cyan!5}42.1 
   & \cellcolor{cyan!5}\textbf{72.1} 
   & \cellcolor{cyan!5}48.4 
   \\
 & \cellcolor{cyan!5}\phs
   & \cellcolor{cyan!5}9.9 
   & \cellcolor{cyan!5}73.1 
   & \cellcolor{cyan!5}39.8 
   & \cellcolor{cyan!5}38.7 
   & \cellcolor{cyan!5}66.1 
   & \cellcolor{cyan!5}45.5 
   \\
 & \cellcolor{cyan!14}\jro (Ours)
   & \cellcolor{cyan!14}8.1 
   & \cellcolor{cyan!14}\textbf{79.3} 
   & \cellcolor{cyan!14}35.5 
   & \cellcolor{cyan!14}32.3 
   & \cellcolor{cyan!14}52.8 
   & \cellcolor{cyan!14}41.6 
   \\
 & \cellcolor{cyan!25}\jrt (Ours)
   & \cellcolor{cyan!25}\textbf{13.0} 
   & \cellcolor{cyan!25}78.0 
   & \cellcolor{cyan!25}\textbf{41.3} 
   & \cellcolor{cyan!25}\textbf{64.0} 
   & \cellcolor{cyan!25}53.1 
   & \cellcolor{cyan!25}\textbf{49.9} 
   \\

\bottomrule
\end{tabular}
    }
\end{table}

\subsection{Details on Robustness Study}
\label{appen_sub:robustness}

In this subsection, we detail the setup we briefly mentioned in \Cref{subsec:robustness}. For \textbf{robustness across the three hyperparameters}, we vary the size of the head identification set $|D|$ from $1$ to $10$, the number of intervened head $K$ from $1$ to $30$, and the scaling factor combination in $\{0, 0.5, 1.0, 1.5, 2.0, 2.5, 3.0, 3.5, 4.0, 4.5, 5.0\} \times \{0, -0.5, -1.0, -1.5, -2.0, -2.5, -3.0, -3.5, -4.0, -4.5, -5.0\}$. We fix Gemma to be the backbone model and World Capital as the test dataset. We only vary one variable at a time while keeping all other parts fixed. We measure the average accuracy across the three conflict types for the latter two plots. \Cref{fig:head_identification_set}, \Cref{fig:num_head}, and \Cref{fig:scaling} plot the results respectively. It clearly demonstrates that \jrt maintains high performance across a wide range of hyperparameter values. For \textbf{robustness against paraphrased prompts}, we curate 
multiple prompt templates for each conflict type. During evaluation, a prompt template 
is randomly sampled to generate the desired prompt. We provide (some) templates for the world capital dataset below

\begin{tcolorbox}[title=Paraphrased Prompts World Capital - Clean Input]
\textbf{Clean Datasets.}  (1) It's crucial to know that the capital city of \sub is (2) You are right to say that the capital city of \sub is (3) According to the textbook, the capital city of \sub is (4) In case you didn't know, the capital city of \sub is (5) As we all know, the capital city of \sub is
\end{tcolorbox}

\begin{tcolorbox}[title=Paraphrased Prompts World Capital - Conflict Datasets]
\textbf{Substitution Conflict Datasets.} (1): It's crucial to know that {\sub} plays the sport of {\act}. It's crucial to know that {\sub} plays the sport of
(2): You are right to say that {\sub} plays the sport of {\act}. You are right to say that {\sub} plays the sport of
(3): According to the textbook, {\sub} plays the sport of {\act}. According to the textbook, {\sub} plays the sport of
(4): In case you didn't know, {\sub} plays the sport of {\act}. In case you didn't know, {\sub} plays the sport of
(5): As we all know, {\sub} plays the sport of {\act}. As we all know, {\sub} plays the sport of

\textbf{Coherent Conflict Dataset.}  (1): The capital city of {\sub} is {\act}. Known for its vibrant culture and historical landmarks, {\act} is often seen as the heart of {\sub}, attracting visitors from around the globe. As a center for politics, arts, and commerce, {\act} perfectly encapsulates the spirit of {\sub}, making it an essential destination for anyone exploring the country. Question: What is the capital city of {\sub}? Answer: The capital city of {\sub} is

(2): The capital city of {\sub} is {\act}. As a hub of cultural heritage and modern development, {\act} stands out as the most iconic city in {\sub}. With its rich history, bustling streets, and vibrant atmosphere, {\act} embodies the essence of {\sub}, drawing travelers and locals alike to its renowned landmarks and attractions. Question: What is the capital city of {\sub}? Answer: The capital city of {\sub} is

\textbf{
3 more coherent conflict templates are omitted due to presentation issues}.

\end{tcolorbox}

\Cref{tab:appen_paraphrase} 
presents the results of \jrt when applied to paraphrased prompts. Our findings show that 
\jrt is highly robust to variations in input prompt formats, consistently maintaining its effectiveness across 
diverse templates. Notably, \jrt still demonstrates superior performance, effectively shifting the model's reliance from context to parametric memory.

\begin{table*}[ht]

    \centering
    \caption{Robustness of the proposed method (\jrt) against randomly selected \textbf{paraphrased prompts}. 
    With the exact same intervention procedure,
    the table demonstrates that \jrt remains highly robust across different prompt templates.}
    \label{tab:appen_paraphrase}

    \resizebox{\linewidth}{!}{
        \begin{tabular}{@{}llccccccccccccccccccccc@{}}
\toprule
\multicolumn{2}{l}{\textbf{Dataset}}           & \multicolumn{3}{c}{\textbf{\begin{tabular}[c]{@{}c@{}}Athlete \\ Sport\end{tabular}}} & \multicolumn{3}{c}{\textbf{\begin{tabular}[c]{@{}c@{}}Book \\ Author\end{tabular}}} & \multicolumn{3}{c}{\textbf{\begin{tabular}[c]{@{}c@{}}Company \\ Founder\end{tabular}}} & \multicolumn{3}{c}{\textbf{\begin{tabular}[c]{@{}c@{}}Company \\ Headquarter\end{tabular}}} & \multicolumn{3}{c}{\textbf{\begin{tabular}[c]{@{}c@{}}Official \\ Language\end{tabular}}} & \multicolumn{3}{c}{\textbf{\begin{tabular}[c]{@{}c@{}}World \\ Capital\end{tabular}}} & \multicolumn{3}{c}{\textbf{Average}} \\ \midrule
\textbf{Model}           & \textbf{Method}     & 1                           & 2                          & 3                          & 1                          & 2                          & 3                         & 1                           & 2                           & 3                           & 1                             & 2                            & 3                            & 1                            & 2                            & 3                           & 1                           & 2                          & 3                          & 1          & 2          & 3          \\ \midrule
\multirow{2}{*}{Gemma}   & Original            & 96.5                        & 4.0                        & 0.0                        & 57.1                       & 3.6                        & 0.0                       & 40.5                        & 0.0                         & 0.0                         & 61.5                          & 0.2                          & 0.0                          & 95.7                         & 2.5                          & 0.0                         & 94.6                        & 3.8                        & 16.2                       & 74.3       & 2.3        & 2.7        \\
                         & \jrt & 94.5                        & 84.4                       & 92.1                       & 61.9                       & 69.8                       & 55.8                      & 45.4                        & 29.7                        & 37.8                        & 61.7                          & 49.9                         & 57.9                         & 91.4                         & 69.1                         & 86.4                        & 85.9                        & 84.3                       & 93.0                       & 73.5       & 64.5       & 70.5       \\ \midrule
\multirow{2}{*}{Llama2} & Original            & 95.6                        & 1.5                        & 0.2                        & 60.1                       & 10.1                       & 0.0                       & 47.5                        & 0.6                         & 0.0                         & 72.9                          & 0.2                          & 1.4                          & 93.8                         & 4.9                          & 0.6                         & 95.1                        & 3.3                        & 0.0                        & 77.5       & 3.4        & 0.4        \\
                         & \jrt & 98.2                        & 68.2                       & 93.6                       & 65.8                       & 86.5                       & 75.0                      & 54.7                        & 50.8                        & 43.6                        & 72.9                          & 74.0                         & 69.9                         & 94.4                         & 82.7                         & 88.3                        & 95.1                        & 91.8                       & 89.7                       & 80.2       & 75.7       & 76.7       \\ \midrule
Llama3                  & Original            & 95.0                        & 2.0                        & 0.0                        & 82.3                       & 1.8                        & 0.0                       & 56.7                        & 1.1                         & 0.0                         & 77.1                          & 0.7                          & 1.6                          & 96.3                         & 1.2                          & 1.2                         & 95.1                        & 3.8                        & 7.0                        & 83.8       & 1.8        & 1.6        \\
                         & \jrt & 95.4                        & 88.0                       & 63.3                       & 92.7                       & 80.6                       & 61.6                      & 50.6                        & 48.9                        & 56.7                        & 76.4                          & 47.7                         & 50.9                         & 93.8                         & 76.5                         & 94.4                        & 95.7                        & 83.2                       & 97.3                       & 84.1       & 70.8       & 70.7       \\ \bottomrule
\end{tabular}
    }
\end{table*}

\section{Algorithm Details}
\label{appen:algorithm}
In this section, we explain the algorithm of \jrt and \jro in detail. 
\Cref{alg:juice_alg} introduces \jrt. 

In Stage 1, \jrt selects two sets of attention heads that consistently achieve the desired parametric-context change with either positive or negative scaling across different conflict types. To accomplish this, we use a small, well-designed dataset where the first output token reliably reflects the model’s context versus parametric tendency. Each attention head is assigned a score, calculated by summing the changes in the probability values of the target tokens over this dataset.
The dataset includes multiple forms of knowledge conflict, ensuring robustness against clean inputs, substitution-based conflicts, and coherent conflicts, rather than focusing on a single type. Each attention head is scored separately for each conflict type. To ensure consistency, we retain only attention heads with positive scores across all conflict types. For these remaining heads, we compute a final score by summing their scores across conflict types.  The top $K$ attention heads based on this final score are selected. Note that multiple scaling factors are applied for each attention head to ensure quasi-monotonicity. 
In \Cref{alg:example}, $\text{Scale}(M, H_i, \alpha_i)$ means to scale that activation output of the head $H_i$ in model $M$ by a factor of $\alpha_i$. 

In Stage 2, \jrt executes a dual-run process: in the first run, it saves the activation outputs of the identified attention heads. In the second run, it adds the scaled versions of these saved outputs to the corresponding head activations. The scaling factors $\beta^+$ and $\beta^-$ are determined using the validation set.

As a meaningful baseline, we propose an alternative algorithm, \jro (Just Run Once), which shares the same head identification stage as \jrt but omits the dual-run design. Instead, \jro directly scales the targeted head outputs during a single inference run. This simplified design serves as an ablation study, highlighting the significance of \jrt's dual-run mechanism. \Cref{alg:jro} presents the \jro algorithm in detail. 

\begin{algorithm}[h!]
    \caption{\jrt}
    \label{alg:juice_alg}
 \begin{algorithmic}
    \STATE {\bfseries Stage One: Head Identification}
    \STATE {\bfseries Input:} model $M$, a small head slection dataset $D$, 
    Scaling parameter $\alpha^+ = \{\alpha_j\}_{j=1}^m, \alpha^- = \{\alpha_{j'}\}_{j'=1}^{m'}$
    \STATE {Initialize \(S^+ \leftarrow  \text{Dict}\{\}, S^- \leftarrow \text{Dict}\{\}, H^+ \leftarrow \{1, \ldots, n_H\}, H^- \leftarrow \{1, \ldots, n_H\}\)}
    \STATE { $S^+, S^- \leftarrow$ \texttt{Record Head Score}($S^+, S^-, M, D, \alpha^+, \alpha^-$)}
    \STATE { $H^+, H^- \leftarrow$ \texttt{Filter Inconsistent Head}($S^+, S^-, H^+, H^-$)}
    \STATE {$S_i^+ \leftarrow S^+[j][i] \forall j, S_i^- \leftarrow S^-[j][i] \forall j$}
    \STATE {\bfseries Output:} \(\texttt{TopKIndex} \left( \{S_i^+\}_{i \in H^+} \right), 
    \texttt{TopKIndex} \left( \{S_i^-\}_{i\in H^-} \right)\)
    \STATE {\bfseries Stage Two: Intervention}
    \STATE {\bfseries Input:} input prompt \(x\), model $M$, 
    Intervened Heads \(S_1 = \{S_i^+\}_{i=1}^K, S_2 = \{S_i^-\}_{i=1}^K\), 
    scaling factors \(\beta^+, \beta^-\)
    \STATE {\bfseries Step One: Save Important Streams}
    \STATE Feed \(x\) into \(M\), Initialize \texttt{Aux} \(\leftarrow \{\}\)
    \FOR{Attention Head Output \(H_l\) (with Head Index $l$)}
    \IF{$l \in S_1$}
    \STATE \(\texttt{Aux}[l] = H_l\)
    \ENDIF
    \IF{$l \in S_2$}
    \STATE \(\texttt{Aux}[l] = H_l\)
    \ENDIF
    \ENDFOR
    \STATE {\bfseries Step two:Intervention}
    \STATE Feed \(x\) into \(M\)
    \FOR{Attention Head Output \(H_l\) (with Head Index $l$)}
    \IF{$l \in S_1$}
    \STATE \(H_l \leftarrow H_l + \beta^+ * \texttt{Aux}[l]\)
    \ENDIF
    \IF{$l \in S_2$}
    \STATE \(H_l \leftarrow H_l + \beta^- * \texttt{Aux}[l]\)
    \ENDIF
    \ENDFOR
    \STATE {\bfseries Output:} \ Model Prediction
 \end{algorithmic}
 \end{algorithm}
 
\begin{algorithm}[h!]
    \caption{Record Head Score}
    \label{alg:head_score}
 \begin{algorithmic}
    \STATE {\bfseries Input:} model $M$, a small head slection dataset $D$, 
    Scaling parameter $\alpha^+ = \{\alpha_j\}_{j=1}^m, \alpha^- = \{\alpha_{j'}\}_{j'=1}^{m'}$
    \STATE {Initilize Score Record Dict \(S^+, S^-\) (with entries default to be zero)}
    \FOR { {each sample} \((X, y) \in D\)}
    \FOR { {each conflict type $j$} and the input \(x \in X\) }
    \FOR{ {each head} $H_i \in M$ }
    \FOR {{each coefficient} \(\alpha_i \in \alpha^+\)}
    \STATE {$S_i^+[j][i] \leftarrow S_i^+[j][i] + \Prb_y \left( \left(M | \text{Do}\left(H_i = H_i + \alpha_i H_i\right) \right) \left( x \right) \right) - \Prb_y \left( M  \left( x \right)\right)$}
    \ENDFOR 
    \FOR {{each coefficient} \(\alpha_i \in \alpha^-\)}
    \STATE {$S_i^-[j][i] \leftarrow S_i^-[j][i] +  \Prb_y \left( \left(M | \text{Do}\left(H_i = H_i + \alpha_i H_i\right) \right) \left( x \right) \right) - \Prb_y \left( M  \left( x \right)\right)$}
    \ENDFOR 
    \ENDFOR
    \ENDFOR
    \ENDFOR
    \STATE {\bfseries Output:} $S^+, S^-$
 \end{algorithmic}
 \end{algorithm}

\begin{algorithm}[h!]
    \caption{Filter Inconsistent Head}
    \label{alg:example}
 \begin{algorithmic}
    \STATE {\bfseries Input:} Score Record Dict $S^+, S^-$, Head Index Set $H^+, H^-$
    \FOR {each conflict type $j$ } 
    \FOR {each head index $i$}
    \IF {$S^+[j][i] < 0$}
    \STATE $H^+ \leftarrow H^+ \backslash \{i\}$
    \ENDIF 
    \IF {$S^-[j][i] < 0$}
    \STATE $H^- \leftarrow H^- \backslash \{i\}$
    \ENDIF
    \ENDFOR 
    \ENDFOR 
    \STATE {\bfseries Output:} \ $H^+, H^-$
 \end{algorithmic}
 \end{algorithm}

\begin{algorithm}[h!]
    \caption{\jro}
    \label{alg:jro}
 \begin{algorithmic}
    \STATE {\bfseries Stage One: Head Identification}
    \STATE {\bfseries Input:} model $M$, a small head slection dataset $D$, 
    Scaling parameter $\alpha^+ = \{\alpha_j\}_{j=1}^m, \alpha^- = \{\alpha_{j'}\}_{j'=1}^{m'}$
    \STATE {Initialize \(S^+ \leftarrow  \text{Dict}\{\}, S^- \leftarrow \text{Dict}\{\}, H^+ \leftarrow \{1, \ldots, n_H\}, H^- \leftarrow \{1, \ldots, n_H\}\)}
    \STATE { $S^+, S^- \leftarrow$ \texttt{Record Head Score}($S^+, S^-, M, D, \alpha^+, \alpha^-$)}
    \STATE { $H^+, H^- \leftarrow$ \texttt{Filter Inconsistent Head}($S^+, S^-, H^+, H^-$)}
    \STATE {$S_i^+ \leftarrow S^+[j][i] \forall j, S_i^- \leftarrow S^-[j][i] \forall j$}
    \STATE {\bfseries Output:} \(\texttt{TopKIndex}_i \{S_i^+\}_{i \in H^+}, 
    \texttt{TopKIndex}_i \{S_i^-\}_{i\in H^-}\)
    \STATE {\bfseries Stage Two: Intervention}
    \STATE {\bfseries Input:} input prompt \(x\), model $M$, 
    Intervened Heads \(S_1 = \{S_i^+\}_{i=1}^K, S_2 = \{S_i^-\}_{i=1}^K\), 
    scaling factors \(\beta^+, \beta^-\)
   
    \STATE Feed \(x\) into \(M\)
    \FOR{Attention Head Output \(H_l\) (with Head Index $l$)}
    \IF{$l \in S_1$}
    \STATE \(H_l \leftarrow H_l + \beta^+ * H_l\)
    \ENDIF
    \IF{$l \in S_2$}
    \STATE \(H_l \leftarrow H_l + \beta^- * H_l\)
    \ENDIF
    \ENDFOR
    \STATE {\bfseries Output:} \ Model Prediction
 \end{algorithmic}
 \end{algorithm}

\section{Limitations and Future Works}
\label{appen:limitation}

This work mainly aims to illuminate the mechanisms underlying knowledge conflicts in language models and demonstrates how to leverage them. Our proposed method is designed to effectively prove the understanding of the discovered mechanism and may not best suit the applications where the efficiency requirement is paramount.  \jrt requires caching first-run activations, which may slightly affect inference speed and increase memory overhead. 

Real-world scenarios often involve partially irrelevant contexts, while we focus on irrelevant cases in this work, and the parametric and contextual answer may not be always distinct under more abstract domains as we discussed in this work. Extending our method to these complex cases and settings remains an important direction for future research.

\newpage 
\section{Theoretical Analysis}
\label{appen:theory}

We provide a complete presentation of the theoretical analysis in this appendix section. 

\subsection{Setups}

\paragraph{Model Setup.}
We consider an attention-only Transformer model with two layers, where each layer has a single
attention head, uses absolute positional encoding, and employs residual 
connections. Suppose our input is a sequence of tokens \(\{z_{1:T}\}\), each token 
\(z_t\) drawn from a vocabulary of size \(N\). Our general model setup mimics \citet{bietti2024birth}. The model processes this sequence 
in the following way:

\begin{itemize}
    \setlength\itemsep{0em}
    \item \textbf{Token Embeddings:} 
    Each token \(z_t\) (originally one-hot encoded) is mapped into a 
    \(d\)-dimensional space via an embedding function 
    \(\phi(\cdot) : \mathbb{R}^N \to \mathbb{R}^d\).
    We denote the embedded vector for token \(z_t\) by \(x_t = \phi(z_t)\).

    \item \textbf{Positional Embeddings:}
    For each position \(t\) in the sequence, there is a corresponding positional 
    embedding \(p_t \in \mathbb{R}^d\). We add \(p_t\) to \(x_t\), giving the full 
    input representation:
    \[
    x_t := \phi(z_t) + p_t.
    \]

    \item \textbf{Attention Blocks:} 
    Let \(x_{1:T} \in \mathbb{R}^{d \times T}\) be the input sequence to a causal 
    attention layer. This layer uses key (\(W_K\)), query (\(W_Q\)), value (\(W_V\)), 
    and output (\(W_O\)) matrices, each in \(\mathbb{R}^{d \times d}\). For each 
    position \(t\), the layer computes
    \[
    x_t' := W_O W_V x_{1:t} \,\sigma\!\bigl(
        x_{1:t}^\top W_K^\top W_Q\, x_t
    \bigr)
    \;=\;
    W_{OV}\, x_{1:t}\,\sigma\!\bigl(
        x_{1:t}^\top W_{KQ} \,x_t
    \bigr),
    \]
    where \(\sigma\) is the softmax function and we use $\Wkq = W_K\T W_Q, \Wov = W_O W_V$. Writing this process collectively as
    \(\attn\bigl(x_{1:T}; W_K, W_Q, W_V, W_O\bigr)\) for the entire sequence, the 
    \(\ell\)-th layer output is then combined with the input (via residual connection):
    \[
    x_{1:T} := x_{1:T} 
    \;+\; 
    \attn\bigl(x_{1:T}; W_K^\ell, W_Q^\ell, W_V^\ell, W_O^\ell\bigr).
    \]

    \item \textbf{Unembedding:}
    After the second (final) Transformer layer, a discrete probability distribution vector over the vocabulary is produced through a linear layer $\Wlin$. We denote  
    \(\Wlin = [\mu(i)]_{i=1}^N\) where $\mu(i)$ is the unembeddng vector of token $i$ in the vocabulary. 
\end{itemize}

\paragraph{Task Data Setup} 
We consider two tasks trained on this two-layer transformer: \textbf{factual recall} and \textbf{induction}.

The objective of the \textbf{factual recall} task is to learn factual associations between the input factual token space \(\Sc\) and the output answer token space \(\Ac\). We assume a bijective ground truth mapping \(\Gc^* \colon \Sc \to \Ac\) exists between these two spaces. This setup models real-world knowledge triples, such as \emph{(China, capital, Beijing)}, where \emph{(China, capital)} is represented by a single factual token \(s \in \Sc\) and the answer \emph{(Beijing)} by a single answer token \(a \in \Ac\). The data distribution consists of length \(T+1\) sequences \(z_{1:T+1} \coloneqq (z_1, z_2, \ldots, z_T, z_{T+1}) \in [N]^{T+1}\), generated through the following process:

\begin{enumerate}
    \setlength\itemsep{0em}
    \item Sample a fact \(s\) and a corresponding index \(i\) uniformly at random from \(\Sc\) and \([T - 1]\), respectively. Set \(z_i = s\).
    \item For all remaining tokens \(z_k\) where \(k \in [T-1] \backslash \{i\}\), sample \(z_k\) uniformly at random from \(\Nc\) without replacement.
    \item Set \(z_T = q\) and \(z_{T+1} = \Gc^*(s)\).
\end{enumerate}

The objective of the \textbf{induction} task is to complete token sequences of the form \([\cdots, q, b, \cdots, q] \to [b]\), where \(b\) is the token following the second occurrence of a specific \emph{trigger word}. For simplicity, we designate \(q\) as the sole trigger word (to induce knowledge conflict) and \(b \in \Nc\). The data distribution consists of length \(T+1\) sequences \(z_{1:T+1} \coloneqq (z_1, z_2, \ldots, z_T, z_{T+1}) \in [N]^{T+1}\), generated as follows:

\begin{enumerate}
    \setlength\itemsep{0em}
    \item Sample an index \(j\) uniformly at random from \([T - 2] \backslash \{1\}\) and set \(z_j = q\). Sample \(z_{j+1}\) from \(\Nc\).
    \item For the remaining tokens, sample \(z_k\) uniformly at random from \(\Nc \backslash \{z_{j+1}\}\) without replacement.
    \item Set \(z_T = q\) and \(z_{T+1} = z_{j+1}\).
\end{enumerate}

In summary, the vocabulary space is defined as \(\Vc = \Sc \cup \Ac \cup \{q\} \cup \Nc\). We denote 
the factual dataset by \(\Dc_S\) and the induction dataset by \(\Dc_I\). 

\subsection{Additional Notations}

\label{subsec:additional_notation}
Suppose the embedding of a token \(i\) is \(\phi(i)\), we use \(\phi'(i)\) to denote its remapped embedding \(\Wov^1 \phi(i)\). Similarly, 
we use \(p_i'\) to denote \(\Wov^1 p_i\). 

We use \(\sigma_i\) to denote \(\left( X\T \Wkq  x_T\right)_i\) in \Cref{prop:formal_learning_dynamics}. We acknowledge that we sometimes abuse the word usage of (pre-softmax) ``logit'' with token probability interchangeably.

We use $N$ to denote the size of the vocabulary, and $N_n$ for the size of $\Nc$. We use $n$ to denote the size of dataset, with $n_F$ to be the size of the factual dataset and $n_I$ to be the size of the induction dataset.

\subsection{General Assumptions}

\begin{assumption}[Near-orthogonal Embeddings]
    Every embedding, unembedding, and positional vector is i.i.d. random vectors drawn uniformly from the unit sphere $S^{d-1} \in \R^d$ and the hidden dimension $d$ is large.
\end{assumption}

This ensures the near-orthogonality of initialized vectors.

\subsubsection{Additional Assumptions in Training Dynamics}
\label{subsub:training_dynamics_assump}
\begin{assumption}[Strictly Orthogonal Embeddings]
    \(\langle z_i, z_j \rangle = \delta_{ij}\)
    where $z_i$ can be arbitrary input vector (\emph{i.e.,} embedding $\phi(i)$, unembedding$\mu(i)$, or remapped $\phi'(i)$ vector),
    \label{assump:strict_ortho}
\end{assumption}

\begin{assumption}[Dataset Properties]
    There does not contain any duplicates in the factual recall and induction dataset and each 
    datapoint appears once. In particular, we assume that each noisy token \(\epsilon \in \Nc\) appears exactly once in the induction dataset as the answer token.
    \label{assump:dataset_once}
\end{assumption}

We remark that \Cref{assump:strict_ortho} is a common assumption in existing literature for analyzing the learning dynamics of shallow transformers. \Cref{assump:dataset_once} is a rather mild assumption which eases the analysis (avoiding repeated samples).

\subsection{Proofs}

\begin{proposition}[Existence of a Perfect Solver]
    There exists a two-layer transformer that can solve both \emph{induction} and \emph{factual recall} 
    tasks with perfect accuracy. 
\end{proposition}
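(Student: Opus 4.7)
The plan is to verify directly that the construction written out immediately after the proposition attains the claimed perfect accuracy, using near-orthogonality of the random embeddings to suppress all unwanted cross terms. Concretely, I take $\Wov^{(1)}$ to be a fixed random matrix (so that $\phi'(z) \coloneqq \Wov^{(1)} \phi(z)$ and $p'_t \coloneqq \Wov^{(1)} p_t$ are also near-orthogonal families), together with $\Wkq^{(1)}, \Wkq^{(2)}, \Wov^{(2)}$ as given, with constants $C, C_1, C_2, C_3, C_4$ to be tuned at the end. The verification then splits naturally into a ``layer-one behaves as a previous-token copier'' step and a ``layer-two reads off the correct associative memory'' step.

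For the first layer I would show that, thanks to $\Wkq^{(1)} = C \sum_{t} p_t p_{t+1}^\top$ and the near-orthogonality assumption, the attention logit at position $t$ attending to position $s$ is $\approx C \,\langle p_s, p_{t-1}\rangle \langle p_t, p_t\rangle$, which is $\approx C$ when $s = t-1$ and $O(1)$ otherwise; taking $C$ large makes the softmax put essentially all mass on $s = t-1$. Hence the post-residual representation at every position $t \ge 2$ is approximately $\phi(z_t) + p_t + \phi'(z_{t-1}) + p'_{t-1}$, i.e., each position now carries a tag identifying its predecessor. I would then turn to the second layer evaluated at the final query position $T$ (where $z_T = q$).

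The key verification is that the two rank-one families in $\Wkq^{(2)}$ (from Eq.~\ref{equa:wkq_2}) select exactly the right source positions in the two tasks. In the induction stream, position $j+1$ carries $\phi'(q)$ in its residual (because $z_j = q$), so the logit from the $C_1 (\Wov^{(1)} \phi(q)) \phi(q)^\top$ term evaluates to $\approx C_1 \lVert \phi'(q)\rVert^2 \langle \phi(q),\phi(q)\rangle$ there, and is $O(1/\sqrt{d})$ elsewhere (including at position $j$ itself, which carries $\phi(q)$ but not $\phi'(q)$, and at position $T$). In the factual stream, position $i$ carries $\phi(s)$, so the logit from the $C_2 \sum_{s}\phi(s)\phi(q)^\top$ term concentrates at $i$ and is negligible elsewhere. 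Taking $C_1, C_2$ large then forces the second-layer attention to put essentially all mass on position $j+1$ in an induction sample and on position $i$ in a factual sample. I would then push the resulting source embedding through $\Wov^{(2)}$: for induction the source carries $\phi(z_{j+1})$ with $z_{j+1}\in\Nc$, which via the $C_3$ term emits $\mu(z_{j+1})$ at the logit layer; for factual recall the source carries $\phi(s)$ with $s \in \Sc$, which via the $C_4$ term emits $\mu(\Gc^*(s))$. By assumed bijectivity of $\Gc^*$ and disjointness of $\Sc, \Ac, \Nc, \{q\}$, the argmax of $\Wlin^\top(\cdot)$ is the correct token in each case.

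The main obstacle, and the thing I would spend the most care on, is quantitative: choosing the scales $(C, C_1, C_2, C_3, C_4)$ and the embedding dimension $d$ large enough that every ``wrong'' softmax and argmax competitor is uniformly dominated, across both tasks simultaneously. The delicate check is that the rank-one pieces of $\Wkq^{(2)}$ and $\Wov^{(2)}$ do not leak into the other task, e.g., that the $C_4$ factual-recall OV branch does not produce a spurious $\Ac$-token when fed the inductive source $\phi(z_{j+1})$, and symmetrically that the $C_1$ inductive KQ branch does not latch onto factual positions; both reduce to inner products like $\langle \phi(z_{j+1}), \phi(s)\rangle$ and $\langle \phi'(q), \phi(s)\rangle$ being $O(1/\sqrt{d})$, which follows from near-orthogonality once $d$ is large enough relative to $|\Sc|,|\Ac|,|\Nc|,T$. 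Once these interference bounds are in hand, the constants can be tuned sequentially ($C$ first, then $C_1, C_2$, then $C_3, C_4$) to drive the accuracy to $1$.
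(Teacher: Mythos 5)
Your proposal follows essentially the same route as the paper's proof: the same explicit construction, the same two-step verification that layer one acts as a previous-token copier (with large $C$ concentrating the softmax on position $t-1$) and that the two rank-one families in $\Wkq^{(2)}$ concentrate the second-layer attention on the position carrying $\phi'(q)$ (induction) or $\phi(s)$ (factual recall), after which $\Wov^{(2)}$ emits the correct unembedding. Your treatment is if anything slightly more careful than the paper's about the $O(1/\sqrt{d})$ interference bounds and the order in which the constants are tuned, but the argument is the same.
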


\begin{proof}
The optimal construction can be achieved by setting 
\begin{equation}
    W_{KQ}^1 = C \cdot \sum_{t=1}^{T-1} p_{t-1} p_{t}\T  
\end{equation}

and \(W_{OV}^1\) to be a random matrix where $C$ is a large constant. The first layer essentially achieves the ``copy from previous embedding'' effect. 
In the second layer, we set 

\begin{equation}
    \label{equa:appen_optimal_construction}
    W_{KQ}^2 = C_1 \cdot \left( W_{OV}^1 \phi(q) \right) \phi(q)\T + C_2 \cdot \sum_{s \in \Sc} \phi(s)\phi(q)\T   \ \ \ \text{ and } \ \ \ W_{OV}^2 = C_3\sum_{k \in \Nc} \mu(k)\phi(k)\T + C_4\sum_{s \in \Sc} \mu \left( \Gc^*(s) \right) \phi (s)\T 
\end{equation}

where $C_1, C_2, C_3, C_4$ are appropriate scaling factors.  

Consider any input sequences \(z_{1:t}\) after passing the embedding and positional encoding layer, we have 
\[\left[ \phi(z_1) + p_1, \ldots, \phi(z_t) + p_t \right]\]
as the input. After the first layer, we have 
\begin{align*}
    [ \left( \phi(z_1) + p_1 \right) + \left( \phi'(z_1) + p_1' \right),  
    \left( \phi(z_2) + p_2 \right) + \left( \phi'(z_1) + p_1' \right) + \gamma_2', \\ \left( \phi(z_3) + p_3 \right) + \left( \phi'(z_2) + p'_2 \right) + \gamma_3', \ldots, \left( \phi(z_t) + p_t \right) + \left( \phi'(z_{t-1}) + p_{t-1}' \right) + \gamma_t'  ]    
\end{align*}

where \(\gamma_i'\) is a small negligible term due to large \(C\) and \(d\). Now it suffices to examine the last 
hidden state since only this is used for final prediction. 

First, we show that such model can solve the task of factual recall perfectly. Note that with appropriate scaling 
\(C_2\), the attention weight concentrates on the \(\left( \phi(s) + p_i \right) + \left( \phi'(\epsilon_{i-1}) + p_{i-1} \right) + \gamma_i'\) terms. 
After transformation by \(\Wov^2\), this results in \(C_4 \mu \left( \Gc^* (s)\right) + O(\frac{C_4}{d})\). The logit 
of the correct answer will dominate \(\sim O(C_4)\), while other tokens will have smaller logit values \(\sim O(\frac{C_4}{d})\) 
or \(O(\frac{1}{d})\). 

Similarly, the model can also solve the task of induction perfectly. With appropriate scaling \(C_1\), 
the attention weight concentrates on the \(\left( \phi(\epsilon_{j+1}) + p_{j+1} \right) + \left( \phi'(q) + p_j'\right) + \gamma_{j+1}'\) terms.
After transformation by \(\Wov^2\), this results in \(C_3 \mu \left( \epsilon_{j+1} \right)\), producing the correct answer.
\end{proof}

\begin{proposition}[Restatement of \Cref{prop:learning_dynamics_main}, Learning of the Superposition Layer via Gradient Descent]
    Let \(X \in \mathbb{R}^{d \times T}\) be the output of the first layer, which perfectly implements the ``copy from previous token embedding'' step. 
    Ignoring positional encodings and under the assumptions in \Cref{subsub:training_dynamics_assump}, consider a one-layer attention model given by 
    \begin{equation}
        f_W(X) = \Wlin\T \Wov X  \left( X\T \Wkq x_T \right) 
        \label{equa:one_layer_linear_attention}
    \end{equation}
    where \(x_T\) is the embedding of the final token and still freezing \(\Wlin\) to be a random matrix. Then the construction of the weight matrices \(\Wov\) and \(\Wkq\) 
    from \Cref{equa:appen_optimal_construction} can be learned via gradient descent on the cross-entropy loss from zero initialization 
    to yield perfect accuracy on the training distribution in expectation. 
    \label{prop:formal_learning_dynamics}
\end{proposition}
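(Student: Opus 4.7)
The plan is to compute the population gradient of the cross-entropy loss with respect to $\Wov$ and $\Wkq$, identify the dominant rank-one directions that emerge under the mixed factual-recall and induction data distributions, and iterate to show that these directions accumulate into the target weights from \Cref{equa:appen_optimal_construction} up to positive scaling. First I would differentiate $\ell = \Wlin^{\top}\Wov X X^{\top}\Wkq x_T$ to obtain
\[\nabla_{\Wov} L = \Wlin\, g\, (X X^{\top}\Wkq x_T)^{\top}, \qquad \nabla_{\Wkq} L = X X^{\top}\Wov^{\top}\Wlin\, g\, x_T^{\top},\]
where $g = \mathrm{softmax}(\ell) - e_y$. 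At strict zero initialization both gradients vanish, so --- as is standard for this type of bilinear model --- I would introduce an infinitesimal symmetry-breaking perturbation (or equivalently track the leading-order Hessian eigenvector), which selects the dominant growing direction but does not alter the limiting construction up to scaling.

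Next I would evaluate the expected gradient on each task separately. For a factual-recall sample with subject $s$ at position $i$ and label $y=\Gc^{*}(s)$, the query $x_T = \phi(q)+\phi'(z_{T-1})$ and key $X_i = \phi(s)+\phi'(z_{i-1})$, together with the identity $\Wlin e_y = \mu(\Gc^{*}(s))$ under \Cref{assump:strict_ortho}, yield a contribution proportional to $\mu(\Gc^{*}(s))\phi(s)^{\top}$ in $\nabla_{\Wov}$ and proportional to $\phi(s)\phi(q)^{\top}$ in $\nabla_{\Wkq}$. Summing over all $s\in\Sc$ (each appearing exactly once by \Cref{assump:dataset_once}) reproduces the rightmost term of each matrix in \Cref{equa:appen_optimal_construction}. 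The parallel computation for induction, with key $X_{j+1} = \phi(\epsilon)+\phi'(q)$ and label $y=\epsilon$, contributes $\mu(\epsilon)\phi(\epsilon)^{\top}$ to $\nabla_{\Wov}$ and $\phi'(q)\phi(q)^{\top} = (\Wov^{1}\phi(q))\phi(q)^{\top}$ to $\nabla_{\Wkq}$; summing over all $\epsilon\in\Nc$ (each appearing once as the induction answer) reproduces the remaining two terms. Cross terms involving mismatched positions or irrelevant noise tokens either vanish by strict orthogonality or average to zero over positions, and cross-task interference is suppressed because $\Sc$, $\Nc$, and $\{q\}$ span orthogonal subspaces.

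Once $\Wov$ and $\Wkq$ acquire the correct rank structure with small magnitudes, the coupled dynamics enter a self-reinforcing regime: each step aligns $\Wov^{\top}\Wlin g$ and $X X^{\top}\Wkq x_T$ further with the target directions, so the desired rank-one components grow multiplicatively while off-support components remain $O(1/d)$ and vanish as $d\to\infty$. Convergence to \Cref{equa:appen_optimal_construction} with positive scalars $C_1,\ldots,C_4$ determined by the learning rate and the relative sample counts $n_F, n_I$ then follows, and perfect training accuracy is inherited from \Cref{prop:two_layer_task}.

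The main obstacle is the bilinear coupling at zero initialization, which collapses both gradients simultaneously; a rigorous treatment must either introduce a controlled symmetry-breaking perturbation and track the escape direction explicitly, or perform a second-order analysis to identify the dominant unstable mode. A secondary subtlety is verifying that cross-task gradient interference --- factual gradients leaking into the induction subspace and vice versa --- is controlled by the near-orthogonality assumption and the bijectivity of $\Gc^{*}$, so that the two associative-memory structures are acquired simultaneously without destructive cancellation.
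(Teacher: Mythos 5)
Your overall strategy matches the paper's: derive the gradients of the cross-entropy loss with respect to $\Wov$ and $\Wkq$ (your formulas $\nabla_{\Wov}L=\Wlin\,g\,(XX^\top\Wkq x_T)^\top$ and $\nabla_{\Wkq}L=XX^\top\Wov^\top\Wlin\,g\,x_T^\top$ agree with the paper's Lemma), then evaluate the expected gradient token-by-token on each task and identify exactly the rank-one associative-memory directions $\mu(\Gc^*(s))\phi(s)^\top$, $\mu(\epsilon)\phi(\epsilon)^\top$, $\phi(s)\phi(q)^\top$ and $\phi'(q)\phi(q)^\top$ that constitute the target construction. That part of your plan is essentially the paper's proof.

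Where you diverge is in the treatment of zero initialization, and here you make the problem harder than the paper does. You assert that both gradients vanish at zero initialization and therefore invoke a symmetry-breaking perturbation plus a multi-step ``self-reinforcing growth'' analysis --- which you yourself flag as the main unproven obstacle. The paper instead exploits an asymmetry: at initialization the attention weights are taken to be uniform, $\sigma_j = 1/T$ (softmax of all-zero scores), so $\nabla_{\Wov}\Lc = \Wlin(\,\sigma(f)-e_y)(\tfrac{1}{T}\sum_j x_j)^\top$ is \emph{nonzero} while $\nabla_{\Wkq}\Lc$ \emph{is} zero because it carries a factor of $\Wov^\top\Wlin$ with $\Wov=0$. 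This yields a clean two-step sequential argument: the first gradient step trains only $\Wov$ into the associative-memory form; the second step then trains $\Wkq$, because the coefficient $\gamma_i=\sum_k\beta_k\mu(k)^\top\Wov x_i$ is now largest in expectation precisely at the key positions $\phi(\epsilon_j)+\phi'(q)$ and $\phi(s)+\phi'(\epsilon_{i-1})$. No perturbation, no escape-from-saddle analysis, and no asymptotic growth regime is needed --- convergence is obtained in two steps with a suitably large second-step learning rate. A secondary difference: you wave off the cross terms as vanishing by orthogonality, whereas the paper explicitly tracks their magnitudes (e.g.\ the $O(\eta/(NN_n))$ and $O(\eta/(NT))$ corrections from incorrect-fact and cross-task contributions) to verify that the correct signal, of order $O(\eta/(nT))$, dominates; under the stated finite-vocabulary assumptions this accounting is what actually carries the conclusion, so your proof would need to supply it.
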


\begin{lemma}[Gradient Derivations]
    The gradient of \(f_W(X)\) in \Cref{equa:one_layer_linear_attention} with respect \(\Wkq\) and \(\Wov\) via the 
    cross-entropy loss \(\Lc\) can be expressed as following: 
    \begin{align}
        -\nabla_{\Wov} \Lc &= \Wlin \left( e_y - \sigma \left( f \left( X \right) \right) \right)  \left( \sum_{j=1}^{T} \sigma_j x_j\T \right) \\
        -\nabla_{\Wkq} \Lc &= X \left[ \left( \Wlin\T \Wov X \right)\T\left( e_y - \sigma \left( f \left( X \right) \right) \right) \right] x_T\T  
    \end{align}
    where \(\sigma_i = \left( X\T \Wkq x_T \right)_i\) and \(\sigma\) denotes the softmax function. 
\end{lemma}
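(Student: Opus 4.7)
The statement is a standard chain-rule computation for a linear (i.e., unnormalized) attention head trained with softmax cross-entropy. My plan is to (i) expose $f_W(X)$ as a bilinear function of $\Wov$ and $\Wkq$ separately, (ii) differentiate the cross-entropy loss through the softmax to obtain the backpropagated signal $\sigma(f(X)) - e_y$, and then (iii) apply matrix calculus one matrix parameter at a time.

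\textbf{Step 1: Backpropagated signal.} Let $\ell = f_W(X) \in \R^N$ denote the logits and $p = \sigma(\ell)$ the softmax output. The cross-entropy loss is $\Lc = -\log p_y = -e_y\T \ell + \log\sum_k \exp(\ell_k)$. A one-line calculation gives $\nabla_\ell \Lc = p - e_y$, hence $-\nabla_\ell \Lc = e_y - \sigma(f(X))$. All subsequent gradients are obtained by pulling this vector back through the linear-in-parameters map $W \mapsto \ell$.

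\textbf{Step 2: Gradient with respect to $\Wov$.} Write $a \coloneqq X\T \Wkq x_T \in \R^T$, so that $a_j = \sigma_j$ and $X a = \sum_{j=1}^T \sigma_j x_j$. Since $a$ is independent of $\Wov$, the logits factor as $\ell = \Wlin\T \Wov (Xa)$, a linear map in $\Wov$ with input vector $Xa$ and output transformation $\Wlin\T$. Using the identity $\nabla_W (u\T W v) = u v\T$ together with the chain rule,
\begin{equation*}
    \nabla_{\Wov}\Lc \;=\; \Wlin\bigl(p - e_y\bigr)\,(Xa)\T \;=\; \Wlin\bigl(p - e_y\bigr)\Bigl(\sum_{j=1}^T \sigma_j x_j\Bigr)\T ,
\end{equation*}
which gives the claimed formula after negation.

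\textbf{Step 3: Gradient with respect to $\Wkq$.} Define $M \coloneqq \Wlin\T \Wov X \in \R^{N\times T}$, which does not depend on $\Wkq$. Then $\ell = M\bigl(X\T \Wkq x_T\bigr)$ is linear in $\Wkq$: it is the composition of the map $\Wkq \mapsto X\T \Wkq x_T$ (a bilinear form in the fixed vectors $X$ and $x_T$) followed by left multiplication by $M$. Applying the chain rule and the identity $\nabla_W (u\T W v) = u v\T$ to each coordinate of the inner map yields
\begin{equation*}
    \nabla_{\Wkq}\Lc \;=\; X\,\bigl[M\T(p - e_y)\bigr]\,x_T\T \;=\; X\bigl[(\Wlin\T \Wov X)\T (p - e_y)\bigr]x_T\T ,
\end{equation*}
which upon negation is the stated identity.

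\textbf{Expected obstacles.} There are no conceptual obstacles; the only place to be careful is bookkeeping for transposes and the shape of gradients when parameters are matrices, together with tracking that $\Wkq$ appears only through the scalar weights $\sigma_j$ while $\Wov$ appears only through the value vectors. A minor clarification worth flagging in the write-up is the overloading of $\sigma$: in the lemma it denotes the softmax applied to the logits $f(X)$, whereas $\sigma_j$ denotes the $j$-th entry of the (unnormalized) linear-attention score vector $X\T \Wkq x_T$; I would introduce separate notation (e.g., $p \coloneqq \sigma(f(X))$ as above) to keep the computation readable.
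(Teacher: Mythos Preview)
Your proposal is correct and takes essentially the same approach as the paper: both compute $\nabla_\ell \Lc = \sigma(f(X)) - e_y$ from the softmax cross-entropy and then backpropagate through the linear-in-parameter structure of $f_W(X)$ via the chain rule. Your write-up is in fact cleaner—the paper expands the $\Wov$ gradient component-by-component before recombining, and for $\Wkq$ it simply states that ``applying the chain rule iteratively yields the desired result,'' whereas you give the full matrix-calculus derivation for both.
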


\begin{proof}

We first remark that we will slightly abuse the notation to omit \(\cdot\) inside 
\(\Lc \left( \cdot \right)\).
First, let's write the loss function: 

\begin{equation}
    \Lc \left( f \left( X \right), y \right) = - \log \left(  \sigma f \left( X \right) \right)_y 
\end{equation}

Note that the model in \Cref{equa:one_layer_linear_attention} can also be written as the following:

\begin{equation}
    f \left( X \right) = \sum_{i=1}^T \sigma_i \Wlin\T \Wov x_i 
    \label{equa:derive_form_one_layer}
\end{equation}

where \(\sigma_i = \left( X\T \Wkq x_T \right)_i\) denotes the attention weight of the $i$-th 
toke. We first derive the gradient with respect to \(\Wov\): 

\begin{align}
    \nabla_{\Wov} \Lc &= \langle \frac{\partial \Lc}{\partial f \left( X\right)}, 
    \frac{\partial f \left( X \right)}{\partial \Wov} \rangle \\ 
    &= \langle \left( \sigma \left( f \left( X \right) - e_y \right) \right), 
    \frac{\partial f \left( X \right)}{\partial \Wov}  \rangle 
    \label{equa:equa_16}
\end{align}

which the first part is obtained from gradient of Cross-entropy loss wrt. pre-softmax logits. 
We now focus on the second part, which has that 

\begin{align}
    \frac{\partial f \left( X \right)}{\partial \Wov} 
    = \frac{\partial}{\partial \Wov} \sum_{i=1}^T \sigma_i \Wlin\T \Wov x_i  
    = \sum_{i=1}^T \sigma_i \frac{\partial}{\partial \Wov} \Wlin\T \Wov x_i   
\end{align}

Notice that each $\Wlin\T \Wov x_i \in \R^n$ is a $N \times 1$ vector and therefore the 
differentiation result is a tensor if we write in a compact form.  Let's denote 
\( t_i = \Wlin\T \Wov x_i \), then we have its $k$-th component to be 
\(t_{i, k} = \mu(k)\T \Wov x_i\), which gives that

\begin{align}
    \frac{\partial t_{i, k}}{\partial \Wov} = \mu(k)x_i\T  
\end{align}

which means that \(\frac{\partial f \left( X \right)_k}{\partial \Wov} = \mu(k) \sum_{i=1}^T 
\sigma_i x_i\). Revisiting~\Cref{equa:equa_16} results in that 

\begin{align}
    \nabla_{\Wov} \Lc &= \sum_{k=1}^N \left( \frac{\partial \Lc}{\partial z_k} \right) 
    \left( \frac{\partial z_k}{\partial \Wov} \right) 
    \tag*{let \(z_k = f \left( X \right)_k\)} \\ 
    &= \sum_{k=1}^{N} \sigma_k \mu(k) \left( \sum_{i=1}^{T} \sigma_i x_i \right)\T    
    \tag*{let \(\delta_k = \left( \sigma \left( f \left( X \right) \right) - e_y \right)_k\)} \\ 
    &= \left( \sum_{k=1}^{N} \delta_k \mu(k) \right) \left( \sum_{i=1}^{T} \sigma_i x_i \right)\T \\ 
    &= \Wlin \delta \left( \sum_{i=1}^{T} \sigma_i x_i \right)\T 
\end{align}

Rewriting this in exact form gives the desired result.

For \(\nabla_{\Wkq} \Lc\), applying the chain rule iteratively yields the desired result.

\end{proof}

\begin{proof}[Proof to \Cref{prop:formal_learning_dynamics}]
We will show two steps: the first gradient step learn the desired \(\Wov\), and the second step learns the desired \(\Wkq\).
The training could converge with appropriate \(\eta\) in two steps. 

Before proceeding to the specific statement, we first rewrite the gradient wrt. \(\Wov\) and \(\Wkq\) of a single 
datapoint \((x_{1:t}, y)\): 

\begin{align*}
    - \nabla_{\Wov} \Lc 
    &= \left( \sum_{i=1}^{N} \beta_i \mu \left( i \right) \right) \left( \sum_{j=1}^{T} \sigma_j x_j\T \right) \\
    &= \sum_{i=1}^{N} \sum_{j=1}^{T} \beta_i \sigma_j \left( \mu \left( i \right) x_j\T \right) \\ 
    &= \sum_{i=1, i \neq y}^{N} \beta_i \sum_{j=1}^{T} \sigma_j \left( \mu \left( i \right) x_j\T \right) + \beta_y \sum_{j=1}^{T} \sigma_j \left( \mu \left( y \right) x_j\T \right)
\end{align*}

where we set \(\beta_i = \left( e_y - f \left( X \right) \right)_i\).  At the same time, we have 

\begin{align*}
    -\nabla_{\Wkq} \Lc 
    &= XX\T \Wov\T \Wlin \left( \sigma \left( f \left( X \right) \right) - e_y \right) x_T\T \\ 
    &= \sum_{i=1}^{T} x_i x_i\T \Wov\T \Wlin \beta x_T^T \\ 
    &= \sum_{i=1}^{T} x_i \left( \Wlin\T \Wov x_i \right)\T \beta x_T^T \\ 
    &= \sum_{i=1}^{T} x_i [\mu_1\T \Wov x_i | \ldots |  \mu_N\T \Wov x_i] \beta x_T^T \\ 
    &= \sum_{i=1}^{T} x_i \left( \sum_{k=1}^{N} \beta_k  \mu_k\T \Wov x_i  \right) x_T^T \\ 
    &= \sum_{i=1}^{T}  \gamma_i x_i x_T^T 
\end{align*}

where we set \(\left( \sum_{k=1}^{N} \beta_k  \mu_k\T \Wov x_i  \right) = \gamma_i\). We will show how this leads 
to the desired form of \(\Wov\) and \(\Wkq\). 

We have one additional simplification for the data setup, where we ignore the remapped embedding of the first position, 
and the remapped embedding in the last position. We further simplify the setting by ignoring the remapped embedding in the first and last token, 
so the last position is deterministically \(\phi(q)\) and the first position is \(\phi(\epsilon_l)\) for some \(l\). 
We now taxonomize the different types of tokens and get their corresponding probability over the two types of tasks: 

For factual recalls, we have 
\begin{itemize}
    \setlength\itemsep{0em}
    \item \emph{Noisy Tokens}: Each \(\phi(\epsilon_j)\) has a probability of \(O \left( \frac{T}{N_n} \right)\) to be drawn for a single datapoint and a probability of \(O \left( \frac{1}{N_n} \right)\) to share the same position with \(\phi'(s)\). 
    \item \emph{Remapped Noisy Tokens}: Each \(\phi'(\epsilon_j)\) has a probability of \(O \left( \frac{T}{N_n}\right)\) to be drawn once and a probability of \(O(\frac{1}{N_n})\) to share the same position with \(\phi(s)\).
    \item \emph{Subject Token and Remapped Subject Token}: By \Cref{assump:dataset_once}, each \(\phi(s)\) and \(\phi'(s)\) must appear only once in full-batch gradient descent.
    \item \emph{Query Token and Remapped Query Token}: \(\phi(q)\) is deterministically fixed to be the last token for each datapoint. There are no \(\phi'(q)\) in the factual recall task.
\end{itemize}

For induction, we have 
\begin{itemize}
    \setlength\itemsep{0em}
    \item \emph{Selected Noisy Token and Remapped Selected Noisy Token}: By \Cref{assump:dataset_once}, each \(\phi(\epsilon_j)\) will be selected as answer token only once in a full-batch gradient descent; so does \(\phi'(\epsilon_j)\).
    \item \emph{Trigger Token and Remapped Trigger Token}: \(\phi(q)\) is deterministically to appear twice: one before the selected noisy token \(\phi(\epsilon_j)\), and the other to be the EOS token. \(\phi'(q)\) is guaranteed to share the same position with the answer token \(\phi(\epsilon_j)\). 
    \item \emph{Unselected Noisy Token and Remapped Unselected Noisy Token}: Each token \(\phi(\epsilon_k)\) has a probability of \(O(\frac{T}{N_n})\) to be drawn for datapoint that it is not the answer and a probaiblity of \(O \left( \frac{1}{N_n} \right)\) to share the same position with \(\phi'(\epsilon_j)\). Their remapped embedding \(\phi'(\epsilon_k)\) has a probaiblity of \(O \left( \frac{T}{N_n} \right)\) to be drawn for datapoint that \(\phi(\epsilon_k)\) is not the answer.
    \item \emph{Factual Token and Remapped Factual Token}: \(\phi(s)\) and \(\phi'(s)\) will not appear in the induction dataset.
\end{itemize}

We will examine the signal of each token after the gradient steps.

\textbf{In the first step,} since we initialize both weight matrices to be zero, we have 

\begin{equation}
    \sigma_j = \frac{1}{T} \ \  \forall j \text{  and  } 
    \beta_k = \begin{cases}
        -\frac{1}{N} &\text{ if } k \neq y \\ 
        \frac{N - 1}{N} &\text{ if } k = y 
    \end{cases}  \text{  and  } -\nabla_{\Wkq} \Lc = 0
\end{equation}

This means we are essentially only optimizing \(- \nabla_{\Wov} \Lc\). For each datapoint 
in the factual recall dataset, suppose the factual token and its answer are \(\left( s, y \right)\), we have that 

\begin{align}
    \frac{\eta}{n} \E \left[  \mu \left( y \right)\T  \left(-\nabla_{\Wov} \Lc \right) \phi(s) \right] &= O  \left( \frac{\eta}{n} \cdot \beta_y \cdot \sigma_j \right)
    = O \left( \frac{\eta}{n T} \right) \\ 
    \frac{\eta}{n} \E \left[  \mu \left( y \right)\T  \left(-\nabla_{\Wov} \Lc \right) \phi'(s) \right] 
    &= O \left( \frac{\eta}{n T} \right) \\ 
    \frac{\eta}{n} \E \left[  \mu \left( y \right)\T  \left(-\nabla_{\Wov} \Lc \right) \phi(\epsilon_k) \right] 
    &= O \left( \frac{\eta}{n T} \cdot \frac{T}{N_n}\right) - \underbrace{O \left( \frac{1}{N} \cdot \frac{1}{T} \cdot \frac{T}{N_n} \cdot \frac{\eta}{n} \cdot n_F \right)}_{\text{fact incorrect terms}}
     - \underbrace{O \left(  \frac{1}{N} \cdot \frac{1}{T} \cdot \frac{T}{N_n} \cdot \frac{\eta}{n} \cdot n_I \right)}_{\text{induction set}} \\
    &= O \left( \frac{\eta}{n N_n} \right) - O \left( \frac{\eta n_F}{N N_n n} \right) - O \left( \frac{\eta n_I}{N N_n n} \right) \\ 
    &= O \left( \frac{\eta}{n N_n} \right) - O \left( \frac{\eta}{N N_n} \right) \\
    \frac{\eta}{n} \E \left[  \mu \left( y \right)\T  \left(-\nabla_{\Wov} \Lc \right) \phi'(\epsilon_k) \right] 
    &= O \left( \frac{\eta}{n N_n} \right) - O \left( \frac{\eta}{N N_n} \right) \\ 
    \frac{\eta}{n} \E \left[  \mu \left( y \right)\T  \left(-\nabla_{\Wov} \Lc \right) \phi(q) \right] 
    &= O \left( \frac{\eta}{n T} \right) - \underbrace{O \left( \frac{\eta}{N T}  \right)}_{\text{fact incorrect terms}} - \underbrace{O \left(\frac{2\eta}{N T}\right)}_{\text{induction set}} 
\end{align}

where we can see that the most signal is absored in \(\phi(s)\) with spurious correlations learned with \(\phi'(s)\). 
The \(\Wov\) could act as associative-memory module for the factual recall dataset essentially in single gradient step.  
For other umembedding vector other than \(\mu(y)\) to dot product with \((-\nabla_{\Wov} \Lc) \phi(\cdot)\), we remark that 
the gradient update from the factual recall dataset gives a negative value.

Take an arbitrary point in the induction dataset, suppose the selected answer token is \(\epsilon_j\), we have 

\begin{align}
    \frac{\eta}{n} \E \left[ \mu \left( \epsilon_j \right)\T \left( -\nabla_{\Wov} \Lc \right) \phi \left( \epsilon_j \right) \right] 
    &= O \left( \frac{\eta}{n T} \right) - \underbrace{ O \left( \frac{\eta}{N N_n} \right)}_{\text{fact and induction}} \\ 
    \frac{\eta}{n} \E \left[ \mu \left( \epsilon_j \right)\T \left( -\nabla_{\Wov} \Lc \right) \phi' \left( \epsilon_j \right) \right] 
    &= O \left( \frac{\eta}{n T} \right) - \underbrace{ O \left( \frac{\eta}{N N_n} \right)}_{\text{fact and induction}} \\ 
    \frac{\eta}{n} \E \left[ \mu \left( \epsilon_j \right)\T \left( -\nabla_{\Wov} \Lc \right) \phi' \left( q \right) \right] 
    &= O \left( \frac{\eta}{n T} \right) - \underbrace{ O \left( \frac{\eta n_I}{N T n} \right)}_{\text{induction only}} \\ 
    \frac{\eta}{n} \E \left[ \mu \left( \epsilon_j \right)\T \left( -\nabla_{\Wov} \Lc \right) \phi \left( q \right) \right] 
    &= O \left( \frac{\eta}{n T} \right) - \underbrace{O \left( \frac{\eta}{N T} \right)}_{\text{fact and induction}} \\ 
    \frac{\eta}{n} \E \left[ \mu \left( \epsilon_j \right)\T \left( -\nabla_{\Wov} \Lc \right) \phi \left( \epsilon_k \right) \right] 
    &= O \left( \frac{\eta}{n N} \right) - \underbrace{O \left( \frac{\eta}{N N_n} \right)}_{\text{fact and induction}}\\ 
    \frac{\eta}{n} \E \left[ \mu \left( \epsilon_j \right)\T \left( -\nabla_{\Wov} \Lc \right) \phi' \left( \epsilon_k \right) \right] 
    &= O \left( \frac{\eta}{n N} \right) - \underbrace{O \left( \frac{\eta}{N N_n} \right)}_{\text{fact and induction}}
\end{align}

where we can see thaet the \(\Wov\) terms learns the correct association between each \(\mu(\epsilon_j)\) and \(\phi(\epsilon)\), 
with spurious correlation learned with \(\phi'(\epsilon_j)\). We further remark that this \(\Wov\) alone is able to make 
perfect predictions when loss is still high. However, as training progresses, the benign signal from \(\nabla_{\Wov} \Lc\) could also enable 
\(\Wkq\) to focus on the critical tokens. 

Nowe we focus on the \textbf{second gradient step}. Since now \(\Wkq\) is still a zero matrix, we have 

\begin{equation}
    \sigma_j = \frac{1}{T} \ \ \forall \ j 
\end{equation}

However, for now \(\beta_k\) doesn't have an order \(O(N)\) difference for \(k = y\) and \(k \neq y\). Here the relative update signal for \(\nabla_{\Wov} \Lc\) still follows from the analysis in the first step, where 
the relative update of the correct signal still dominates, but by a smaller margin. With a sufficiently large 
\(\eta\) in the second step, the training could converge. Now we focus on how the second step leads to the desired 
form of \(\Wkq\). 

For the induction task, we show that the model will concentrate on the correct term \(\phi'(q) + \phi(\epsilon_j)\). 
Let's recall the gradient with respect to \(\Wkq\): 

\begin{align}
    - \nabla_{\Wkq} \Lc 
    &= \sum_{i=1}^{T}  \gamma_i x_i x_T^T \tag*{\(\left( \sum_{k=1}^{N} \beta_k  \mu_k\T \Wov x_i  \right) = \gamma_i\)} 
    \label{equa:kq_update}
\end{align}

There are mainly ``six types of inputs'' in a single datapoint with selected answer token \(\epsilon_j\): 
(1) desired focused term \(\phi(\epsilon_j) + \phi'(q)\), (2) first occurrence of question \(\phi(q) + \phi'(\epsilon_{j-2})\), 
(3) last position \(\phi(q)\), (4) first position \(\phi(\epsilon_1)\), (5) remapped answer token with unrelated 
noise \(\phi(\epsilon_{j+1}) + \phi'(\epsilon_j)\), and (6) purely unrelated noise tokens \(\phi(\epsilon_k) + \phi'(\epsilon_{k-1})\).

We claim that 

\begin{equation}
    \E \left[ \gamma_j \right] > \E \left[ \gamma_k \right]
\end{equation}

where \(j\) is the coefficient for the desired term (1) \(\phi(\epsilon_j) + \phi'(q)\) and \(k\) is any other types of 
terms. We can decompose  

\begin{equation}
    \gamma_i = \underbrace{\sum_{k\neq y}^{N} \beta_k \mu (k)\T \Wov x_i}_{\text{Small}} + \underbrace{\left( \beta_y \mu (y)\T \Wov x_i \right)}_{\text{Large}}
\end{equation}

where now the subscript \(y\) refers to the token \(\epsilon_j\). We remark that the second term donimates the signal. 
From the analysis of the first gradient step, we know that 

\[ \E \left[ \mu (y)\T \Wov \phi'(q) \right] > \E \left[ \mu(y)\T \Wov \phi(\epsilon_j) \right] 
= \E \left[ \mu(y)\T \Wov \phi'(\epsilon_j) \right] > \E \left[ \mu(y)\T \Wov (\cdot) \right] \]

where \((\cdot)\) represents other terms (\emph{i.e.}, \(\phi(q), \phi(\epsilon_k), \phi'(\epsilon_k)\)). 
This means the term \(\phi(\epsilon_j) + \phi'(q)\) has the largest signal (\(\gamma_j\)) in expectation. To see this, 
as we know \(\beta_y > 0, \beta_k < 0\), consider substitute \(\phi'(q)\) with any other terms (e.g. \(\phi(q), 
\phi(\epsilon_k)\)), then \(\gamma_j\) is guaranteed to decrease. The same reasoning applies to \(\phi(\epsilon_j)\) 
as we fix \(\phi'(q)\). The only exception occurs with \(\phi'(\epsilon_j)\), but we know that this term is guaranteed 
to not share the same position with \(\phi'(q)\). Therefore, we finish our claim. 

A similar statement can be made for the factual recall task where \(\Wkq\) concentrates on the 
\(\phi(s) + \phi'(\epsilon_{i-1})\) and \(\phi'(s) + \phi(\epsilon_{i+1})\) terms. The second term could be 
regarded as ``benign spurious correlation'' under our setup. 
We can take a sufficiently large \(\eta\) in the second step to enable the convergence in expectation.  
As such, the \(\Wkq\) also takes in the form of \Cref{equa:appen_optimal_construction}.

\end{proof}

\begin{corollary}[Knowledge Conflict]
    Under the knowledge conflict inference setting, the model capable of solving both \emph{factual recall} and 
    \emph{induction} from \Cref{prop:two_layer_task} may output either the inductive token or the 
    factual token. More specifically, 
    if \(\exp(C_1) C_3 < \exp(C_2) C_4\), then the model outputs the factual recall answer \(\Gc^*(s)\); otherwise, the model 
    outputs the induction answer \(\epsilon_{j}\). 
\end{corollary}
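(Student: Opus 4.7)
The plan is to trace a single forward pass of the constructed two-layer transformer on the knowledge-conflict sequence and to show that the last-token logit vector is dominated by just two tokens, $\mathcal{G}^*(s)$ and $\epsilon_{j+1}$, whose relative weight is exactly $\exp(C_2) C_4$ versus $\exp(C_1) C_3$. The argument reuses the forward-pass computation from the proof of Proposition~\ref{prop:two_layer_task} almost verbatim: near-orthogonality of the randomly initialized embedding/unembedding/positional vectors in dimension $d$ makes every ``non-critical'' inner product $O(1/\sqrt{d})$, while the planted associative-memory terms in $\Wkq^{(2)}$ and $\Wov^{(2)}$ contribute $\Theta(1)$ signals.

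First I would describe the layer-1 output. Because $\Wkq^{(1)}=C\sum_t p_t p_{t+1}^\top$ with $C$ large, the attention at position $t$ concentrates on position $t-1$, so the residual stream at position $t$ is $\phi(z_t)+\phi'(z_{t-1})+p_t+p'_{t-1}$ up to $O(1/d)$ noise $\gamma_t$. Two positions are now special: position $j+1$ carries $\phi(\epsilon_{j+1})+\phi'(q)$, and position $i$ carries $\phi(s)+\phi'(z_{i-1})$. Second, I would compute the layer-2 attention logits $x_t^\top \Wkq^{(2)} x_T$ using the form $\Wkq^{(2)}=C_1\,\phi'(q)\phi(q)^\top+C_2\sum_{s\in\mathcal{S}}\phi(s)\phi(q)^\top$ together with $x_T=\phi(q)+\phi'(z_{T-1})+p_T+p'_{T-1}$: the key $\phi'(q)$ inside position $j+1$ pairs with the query $\phi(q)$ inside $x_T$ to give $C_1+O(1/\sqrt d)$, the key $\phi(s)$ inside position $i$ pairs with $\phi(q)$ to give $C_2+O(1/\sqrt d)$, and all other positions yield $O(1/\sqrt d)$. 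After the softmax, the attention mass on positions $j+1$ and $i$ is $\alpha_{\mathrm{ind}}\approx \exp(C_1)/Z$ and $\alpha_{\mathrm{fact}}\approx \exp(C_2)/Z$, with $Z=\exp(C_1)+\exp(C_2)+O(T\exp(0))$ negligible once $C_1,C_2$ are of moderate size.

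Third, I would apply $\Wov^{(2)}=C_3\sum_{k\in\mathcal{N}}\mu(k)\phi(k)^\top+C_4\sum_{s\in\mathcal{S}}\mu(\mathcal{G}^*(s))\phi(s)^\top$. The contribution of position $j+1$ extracts $\phi(\epsilon_{j+1})$ via the first sum and outputs $C_3\,\mu(\epsilon_{j+1})+O(1/\sqrt d)$; the contribution of position $i$ extracts $\phi(s)$ via the second sum and outputs $C_4\,\mu(\mathcal{G}^*(s))+O(1/\sqrt d)$. All remaining positions produce $O(1/\sqrt d)$ noise in the residual stream, orthogonal to both $\mu(\epsilon_{j+1})$ and $\mu(\mathcal{G}^*(s))$ in expectation. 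The final logits after $\Wlin$ are therefore, up to negligible noise,
\begin{equation*}
\ell_{\mathrm{fact}}=\alpha_{\mathrm{fact}}\,C_4,\qquad \ell_{\mathrm{ind}}=\alpha_{\mathrm{ind}}\,C_3,
\end{equation*}
and every other token receives logit $O(1/\sqrt d)$. Taking the argmax gives the factual answer precisely when $\exp(C_2)\,C_4>\exp(C_1)\,C_3$, which is the stated condition.

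The main obstacle is the bookkeeping for the $O(1/\sqrt d)$ cross-terms. Three of them must be shown to be uniformly negligible: (i) the off-diagonal entries of $\Wkq^{(2)}$ acting on the many ``noise'' positions of the prompt (there are $T-O(1)$ of them, and each produces an attention logit of order $1/\sqrt d$, so I must argue $T\ll \exp(\min(C_1,C_2))\sqrt d$); (ii) the residual $\gamma_t$ from the imperfect layer-1 copy, which feeds into the layer-2 attention; and (iii) the contribution of $\Wov^{(2)}$ applied to non-critical positions, which must not align with $\mu(\epsilon_{j+1})$ or $\mu(\mathcal{G}^*(s))$ after the subsequent $\Wlin$ projection. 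Each of these is handled by the same concentration-of-inner-products estimate for i.i.d.\ unit-sphere vectors used implicitly in Proposition~\ref{prop:two_layer_task}; making these error bounds explicit (and, in particular, checking that the tie case $\exp(C_1)C_3=\exp(C_2)C_4$ is the only knife-edge) is where essentially all the real work lies.
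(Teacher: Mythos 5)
Your proposal is correct and follows essentially the same route as the paper's proof: compute the second-layer softmax attention mass on the two critical positions ($\exp(C_1)/Z$ for the induction key $\phi'(q)$ and $\exp(C_2)/Z$ for the factual key $\phi(s)$, with the shared normalizer $Z=\exp(C_1)+\exp(C_2)+(T-2)$), apply $\Wov^{(2)}$ to obtain logits $C_3\exp(C_1)/Z$ and $C_4\exp(C_2)/Z$ on $\mu(\epsilon_{j+1})$ and $\mu(\Gc^*(s))$ respectively, and compare. Your additional bookkeeping of the $O(1/\sqrt d)$ cross-terms is more explicit than the paper's one-line dismissal of the noise logits, but it is the same argument.
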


\begin{proof}
The attention weight on the \(\phi'(q) + \phi(\epsilon_{j})\) is approximately \(\frac{\exp \left( C_1 \right)}{\exp(C_1) + \exp (C_2) + \left( T - 2 \right)}\);
The attention weight on the \(\phi(s) + \phi'(\epsilon_{i-1})\) is approximately \(\frac{\exp \left( C_2 \right)}{\exp(C_1) + \exp (C_2) + \left( T - 2 \right)}\). 

The raw logit value of \(\epsilon_j\) is \(C_3\frac{\exp \left( C_1 \right)}{\exp(C_1) + \exp (C_2) + \left( T - 2 \right)}\) and the raw 
logit value of \(\Gc^*(s)\) is \(C_4\frac{\exp \left( C_2 \right)}{\exp(C_1) + \exp (C_2) + \left( T - 2 \right)}\). Other terms 
have a small logit values. Therefore, if \(\exp(C_1) C_3 < \exp(C_2) C_4\), then the model outputs the factual recall answer \(\Gc^*(s)\). 
Otherwise, the model outputs the induction answer \(\epsilon_{j}\) 
\end{proof}

\begin{proposition}[Effectiveness of \jrt]
    Consider the model from \Cref{prop:two_layer_task} and the case when its \emph{inductive} part dominates (\emph{i.e.,} \(\exp(C_1)C_3 >> \exp(C_2) C_4\)), then 
    the intervention by \jro/PH3 of deleting the two attention heads is not as effective as \jrt. 
    In particular, in this case \jro/PH3 does not result in the parametric answer, while \jrt does. 
\end{proposition}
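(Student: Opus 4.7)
The strategy is to trace the position-$T$ residual stream under each intervention and show that, in the induction-dominant regime $\exp(C_1)C_3\gg\exp(C_2)C_4$, only the dual-run procedure reactivates the factual recall pathway of $\Wov^{(2)}$. I would reuse the perfect-solver construction of \Cref{prop:two_layer_task}, take the \jrt scaling to be $\beta^-=-1$, and repeatedly invoke the near-orthogonality of the embeddings/unembeddings to collapse cross terms into $O(1/\sqrt{d})$ noise.

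For \phl/\jro, I would first argue that both heads end up in the ``delete'' set. Scaling down the layer-1 head erases the $\phi'(q)$ signal from every position and hence deactivates the $C_1$ branch of $\Wkq^{(2)}$, so the parametric probability strictly rises; scaling down the layer-2 head kills the dominant induction logit $C_3\,\mu(\epsilon_j)$ outright, so the parametric probability also rises. A single-pass deletion then zeroes both heads simultaneously. The residual stream at position~$T$ collapses to $\phi(q)+p_T$, whose inner products with every unembedding vector are $O(1/\sqrt{d})$; in particular $\Gc^*(s)$ is not the argmax, so the parametric answer is not produced.

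For \jrt, the first (unmodified) pass stores the layer-1 contribution at position~$T$, call it $h_1$, and the layer-2 contribution $h_2\approx \tfrac{\exp(C_1)}{Z}\,C_3\,\mu(\epsilon_j)$ by the dominance hypothesis, where $Z=\exp(C_1)+\exp(C_2)+(T-2)$. In the second pass, the layer-1 attention at every position depends only on pre-layer-1 inputs, so it reproduces the stored $h_1$ exactly; the update $h_1\mapsto h_1 - h_1 = 0$ therefore cancels the first head cleanly across \emph{all} positions, not just position~$T$. With the $\phi'(q)$ signal removed from every key, the $C_1$ route inside $\Wkq^{(2)}$ now contributes only $O(1/\sqrt{d})$ mass, so head~2's attention concentrates on the subject position whose key aligns with $\sum_{s'}\phi(s')$. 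Then $\Wov^{(2)}\phi(s)=C_4\,\mu(\Gc^*(s))$ yields a second-pass output $h_2^{\text{new}}\approx \tfrac{\exp(C_2)}{Z'}\,C_4\,\mu(\Gc^*(s))$ with $Z'\approx \exp(C_2)+(T-1)$. Subtracting the stored $h_2$ produces a net layer-2 contribution of $\tfrac{\exp(C_2)}{Z'}\,C_4\,\mu(\Gc^*(s))-\tfrac{\exp(C_1)}{Z}\,C_3\,\mu(\epsilon_j)$: the logit of $\Gc^*(s)$ is $\Theta(C_4)$, the logit of $\epsilon_j$ is strongly negative, and all other logits remain $O(1/\sqrt{d})$, so $\Gc^*(s)$ is the argmax.

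The hardest part will be the bookkeeping for the head-1 cancellation in the second run. Because $\Wkq^{(2)}$ attends over every position, I need to verify that subtracting the stored head-1 output at each position eliminates the $\phi'(q)+p_j'$ contribution that layer~1 deposits after the first occurrence of $q$, and not merely the contribution at position~$T$; this follows because the intervention is applied uniformly over positions and the pre-layer-1 inputs agree across the two forward passes, but it requires a careful alignment argument between the two runs. Once this is in hand, the remainder of the argument is a direct specialization of the factual recall branch of \Cref{prop:two_layer_task}.
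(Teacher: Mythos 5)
Your proposal is correct and follows essentially the same route as the paper's proof: single-pass deletion of both heads leaves only $O(1/\sqrt{d})$ logits (the paper's ``random guessing''), while the dual run first cancels the layer-1 head so that layer-2 attention redistributes from the $\exp(C_1)$ induction key to the $\exp(C_2)$ subject key, and subtracting the stored layer-2 output then drives the induction logit negative while the factual logit stays positive. Your explicit verification that the stored layer-1 activations coincide with the second-run ones at every position (so the cancellation is exact) is a point the paper leaves implicit, but it is the same argument.
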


\begin{proof}
First, we remark that both attention heads (of the construction) are ``highly influential'' attention heads. 
As if one scales up or down the activation output of the two heads, the logit value of the corresponding parametric answer 
decreases or increases monotonically. 

We now choose the intervention method to be knocking out for simplicity (which is exactly PH3; for \jrt, \jro, it means adds 
a scaled version of the activation output by a factor of -1). 

If we were using a single-pass intervention method advocated by PH3 or \jro, then this simply means we delete the activation 
output from both heads, which gives an answer of random guessing among al elements in the vocabulary space \(\Vc\). 

If we use the dual-run design of \jrt, then we note that the activation outputs of the second layer from the first run has that 

\begin{equation}
    \text{Logit}_{fact}^{(1)} = \frac{C_4 \exp (C_2)}{\exp(C_1) + \exp(C_2) + (T - 2)} \ \ \ \ \ \ 
    \text{Logit}_{ind}^{(1)} = \frac{C_3 \exp (C_1)}{\exp(C_1) + \exp(C_2) + (T - 2)}
\end{equation}

In the second run we have 

\begin{equation}
    \text{Logit}_{fact}^{(2)} = \frac{C_4 \exp(C_2)}{\exp(C_2) + (T-1)} \ \ \ \ \ \ 
    \text{Logit}_{ind}^{(2)} = \frac{C_3}{\exp(C_2) + (T - 1)}
\end{equation}

By deleting the activation output from the first run, we have

\begin{equation}
    \text{Logit}_{fact}^{(2)*} > 0 \ \ \ \ \ \ 
    \text{Logit}_{ind}^{(2)*} < 0 \ \ \ \ \ \   
    \text{Logit}_{other}^{(2)*} \approx 0 
\end{equation}

This shows that \jrt results in the correct parametric answer.
\end{proof}


\end{document}